\newcommand{\fullFigGap}[0]{\vspace{-1.5\baselineskip}} 
\newcommand{\red}[0]{\color{red}}
\newtheorem{theorem}{Theorem}
\newtheorem*{theorem*}{Theorem}
\newtheorem{lemma}{Lemma}
\newtheorem*{lemma*}{Lemma}
\newtheorem{problem}{Problem}
\newtheorem{definition}{Definition}
\newtheorem{observation}{O}
\newcommand{\algorithmStyle}[0]{\footnotesize}
\DeclareMathOperator*{\argmin}{arg\,min}
\DeclareMathOperator*{\argmax}{arg\,max}
\newcommand{\argmaxprob}[1]{\underset{#1}{\argmax}}
\newcommand{\argminprob}[1]{\underset{#1}{\argmin}}
\newcommand{\abs}[1]{\left|#1 \right|}
\newcommand{\expect}[2]{\mathbb{E}_{#1}\left[#2\right]}
\newcommand{\real}[0]{\mathbb{R}}
\newcommand{\bbm}{\begin{bmatrix}}
\newcommand{\ebm}{\end{bmatrix}}
\newcommand{\pair}[2]{\left( #1, #2\right)}
\newcommand{\set}[1]{\left\lbrace #1\right\rbrace}
\newcommand{\seq}[2]{\left(#1_{1}, #1_{2}, \ldots, #1_{#2}\right)}
\newcommand{\setst}[2]{\left\lbrace #1\;\middle|\;#2\right\rbrace}
\newcommand{\Ind}[0]{\mathbb{I}}
\newcommand{\overbar}[1]{\mkern 1.5mu\overline{\mkern-1.5mu#1\mkern-1.5mu}\mkern 1.5mu}
\newcommand{\Not}[1]{\overbar{#1}}
\newcommand\encircle[1]{%
  \tikz[baseline=(X.base)] 
    \node (X) [draw, shape=circle, inner sep=0] {\strut #1};}
\newcommand{\explicitGraph}[0]{G}
\newcommand{\vertexSet}[0]{V}
\newcommand{\edgeSet}[0]{E}
\newcommand{\edge}[0]{e}
\newcommand{\start}[0]{v_s}
\newcommand{\goal}[0]{v_g}
\newcommand{\Path}[0]{\xi}
\newcommand{\PathSet}[0]{\Xi}
\newcommand{\evalFn}[0]{\mathtt{Eval}}
\newcommand{\world}[0]{\mathbf{o}}
\newcommand{\selectFn}[0]{\mathtt{Select}}
\newcommand{\test}[0]{t}
\newcommand{\testSet}[0]{\mathcal{T}}
\newcommand{\numTest}[0]{n}
\newcommand{\outcomeSpace}[0]{\{0, 1\}}
\newcommand{\outcomeVar}[0]{X}
\newcommand{\outcomeVarTest}[1]{\outcomeVar_{#1}}
\newcommand{\outcome}[0]{x}
\newcommand{\outcomeTest}[1]{\outcome_{#1}}
\newcommand{\outcomeTestSet}[1]{\mathbf{\outcome}_{#1}}
\newcommand{\bias}[0]{\theta}
\newcommand{\biasVec}[0]{\bm{\bias}}
\newcommand{\biasTest}[1]{\bias_{#1}}
\newcommand{\region}[0]{\mathcal{R}}
\newcommand{\numRegion}[0]{m}
\newcommand{\selTestSet}[0]{\mathcal{A}}
\newcommand{\obsOutcome}[0]{\outcomeTestSet{\selTestSet}}
\newcommand{\obsOutcomeFunc}[2]{\obsOutcome \left( #1, #2\right)}
\newcommand{\obsOutcomeAdd}[1]{\outcomeTestSet{\selTestSet \cup \set{#1}}}
\newcommand{\policy}[0]{\pi}
\newcommand{\groundtruth}[0]{\outcomeTestSet{\testSet}}
\newcommand{\cost}[0]{c}
\newcommand{\policyOpt}[0]{\pi^*}
\newcommand{\hyp}[0]{h}
\newcommand{\hypSpace}[0]{\mathcal{H}}
\newcommand{\hypSpaceR}[0]{\mathcal{H}^\region}
\newcommand{\regionH}[0]{\region^\hypSpace}
\newcommand{\subregion}[0]{\mathcal{S}}
\newcommand{\numSubregion}[0]{l}
\newcommand{\edgeEC}[0]{\varepsilon}
\newcommand{\edgeSetEC}[0]{\mathcal{E}}
\newcommand{\edgeFnEC}[2]{\{#1, #2\}}
\newcommand{\weight}[0]{w}
\newcommand{\ec}[0]{f_\mathrm{EC}}
\newcommand{\fec}[1]{f_\mathrm{EC}(#1)}
\newcommand{\gain}[3]{\Delta_{#1} \left( #2 \;|\; #3 \right)}
\newcommand{\wec}[0]{w_\mathrm{EC}}
\newcommand{\wecgolovin}[0]{w_{\text{\citep{golovin2010near}}}}
\newcommand{\ectext}[0]{\mathrm{EC}}
\newcommand{\drd}[0]{f_\mathrm{DRD}}
\newcommand{\fdrd}[1]{f_\mathrm{DRD}(#1)}
\newcommand{\feci}[2]{f_\mathrm{EC}^{#1}(#2)}
\newcommand{\candTestSet}[0]{\testSet_\mathrm{cand}}
\newcommand{\optTest}[0]{t^*}
\newcommand{\maxProbTestSet}[0]{\testSet_\mathrm{maxP}}
\newcommand{\degree}[0]{k}
\newcommand{\nv}[0]{n_\mathcal{V}}
\newcommand{\nvec}[0]{\mathbf{n}}
\newcommand{\pmin}[0]{p_\mathrm{min}}
\newcommand{\pminH}[0]{p_\mathrm{min}^h}
\newcommand{\dataTrain}[0]{N_{\mathrm{train}}}
\newcommand{\dataTest}[0]{N_{\mathrm{test}}}
\newcommand{\regTest}[0]{\mathbf{A}}
\newcommand{\bigo}[1]{\mathcal{O}\left(#1\right)}
\newcommand{\direct}[0]{\textsc{DiRECt}\xspace}
\newcommand{\ecsq}[0]{EC\textsuperscript{2}\xspace}
\newcommand{\algName}[0]{\textsc{BiSECt}\xspace}
\newcommand{\algFullName}[0]{Bernoulli Subregion Edge Cutting\xspace}
\newcommand{\algRandom}[0]{\textsc{Random}\xspace}
\newcommand{\algMaxTally}[0]{\textsc{MaxTally}\xspace}
\newcommand{\algSetCover}[0]{\textsc{SetCover}\xspace}
\newcommand{\algMVOI}[0]{\textsc{MVoI}\xspace}
\newcommand{\probBernDRD}[0]{Bern-DRD\xspace}
\newcommand{\probBernDRDFull}[0]{decision region determination with independent Bernoulli tests\xspace}
\newcommand{\algMaxProbReg}[0]{\textsc{MaxProbReg}\xspace}
\title{Near-Optimal Edge Evaluation in Explicit Generalized Binomial Graphs}
\author{
  Sanjiban Choudhury \\
  The Robotics Institute\\
  Carnegie Mellon University\\
  \texttt{sanjiban@cmu.edu} \\
  \And
  Shervin Javdani \\
  The Robotics Institute\\
  Carnegie Mellon University\\
  \texttt{sjavdani@cmu.edu} \\
  \AND
  Siddhartha Srinivasa \\
  The Robotics Institute\\
  Carnegie Mellon University\\
  \texttt{siddh@cs.cmu.edu} \\
  \And
  Sebastian Scherer \\
  The Robotics Institute\\
  Carnegie Mellon University\\
  \texttt{basti@cs.cmu.edu} \\
}
\begin{document}
\maketitle
\makeatletter
\renewcommand{\section}{%
\@startsection{section}{1}{\z@}%
            {-2.0ex \@plus -0.0ex \@minus -1.0ex}%
            { 1.5ex \@plus  0.0ex \@minus  0.5ex}%
            {\large\bf\raggedright}%
}
\providecommand{\subsection}{}
\renewcommand{\subsection}{%
  \@startsection{subsection}{2}{\z@}%
                {-1.2ex \@plus -0.0ex \@minus -0.8ex}%
                { 0.8ex \@plus  0.0ex }%
                {\normalsize\bf\raggedright}%
}
\providecommand{\subsubsection}{}
\renewcommand{\subsubsection}{%
  \@startsection{subsubsection}{3}{\z@}%
                {-0.8ex \@plus -0.0ex \@minus -0.5ex}%
                { 0.5ex \@plus  0.0ex}%
                {\normalsize\bf\raggedright}%
}
\makeatother
\setlength \abovedisplayskip{1pt plus0pt minus1pt}
\setlength \belowdisplayskip{\abovedisplayskip}
\setlength\abovecaptionskip{0.pt plus0pt minus0pt}                                      

\begin{abstract}

Robotic motion-planning problems, such as a UAV flying fast in a partially-known environment or a robot arm moving around cluttered objects, require finding collision-free paths quickly. 
Typically, this is solved by constructing a graph, where vertices represent robot configurations and edges represent potentially valid movements of the robot between these configurations.
The main computational bottlenecks are expensive edge evaluations to check for collisions.
State of the art planning methods do not reason about the \emph{optimal sequence of edges} to evaluate in order to find a collision
free path quickly.
In this paper, we do so by drawing a novel equivalence between motion planning and the Bayesian active learning paradigm of \emph{decision region determination (DRD)}. 
Unfortunately, a straight application of existing methods requires computation exponential in the number of edges in a graph. 
We present \algName, an efficient and near-optimal algorithm to solve the DRD problem when edges are independent Bernoulli random variables. 
By leveraging this property, we are able to significantly reduce computational complexity from exponential to linear in the number of edges.
We show that \algName outperforms several state of the art algorithms on a spectrum of planning problems for mobile robots, manipulators, and real flight data collected from a full scale helicopter.
\end{abstract}

\section{Introduction}
\label{sec:intro}
This paper addresses a class of robotic motion planning problems where \emph{path evaluation is expensive}. 
For example, in robot arm planning~\citep{dellin2016guided}, evaluation requires expensive geometric intersection computations. In on-board path planning for UAVs with limited computational resources~\citep{cover2013sparse}, the system must react quickly to obstacles (Fig.~\ref{fig:marquee}).

State of the art planning algorithms~\citep{dellin2016unifying} first compute a set of unevaluated paths quickly, and then evaluate them sequentially to find a valid path. 
Oftentimes, candidate paths share common edges. Hence, evaluation of a small number of edges can provide information about the validity of many candidate paths simultaneously. Methods that check paths sequentially, however, do not reason about these common edges.

This leads us naturally to the \emph{feasible path identification} problem - given a library of candidate paths, identify a valid path while minimizing the cost of edge evaluations. We assume access to a prior distribution over edge validity, which encodes how obstacles are distributed in the environment (Fig.~\ref{fig:marquee}(a)). As we evaluate edges and observe outcomes, the uncertainty of a candidate path collapses. 

Our first key insight is that this problem is equivalent to \emph{decision region determination (DRD)}~\citep{javdani2014near, chen2015submodular}) - given a set of tests (edges), hypotheses (validity of edges), and regions (paths), the objective is to drive uncertainty into a single decision region. This linking enables us to leverage existing methods in Bayesian active learning for robotic motion planning.

\citet{chen2015submodular} provide a method to solve this problem by maximizing an objective function that satisfies \emph{adaptive submodularity}~\citep{golovin2011adaptive} - a natural diminishing returns property that endows greedy policies with near-optimality guarantees. Unfortunately, naively applying this algorithm requires $\bigo{2^E}$ computation to select an edge to evaluate, where $E$ is the number of edges in all paths.

We define the \probBernDRD problem, which leverages additional structure in robotic motion planning by assuming edges are independent Bernoulli random variables~\footnote{Generally, edges in this graph are correlated, as edges in collision are likely to have neighbours in collision. Unfortunately, even measuring this correlation is challenging, especially in the high-dimensional non-linear configuration space of robot arms. Assuming independent edges is a common simplification~\citep{Lav06, narayanan2017heuristic, choudhury2016pareto, burns2005sampling,dellin2016unifying}}, and regions correspond to sets of edges evaluating to true. We propose \emph{\algFullName (\algName)}, which provides a greedy policy to select candidate edges in $\bigo{E}$. We prove our surrogate objective also satisfies adaptive submodularity~\citep{golovin2011adaptive}, and provides the same bounds as \citet{chen2015submodular} while being more efficient to compute. 

\begin{figure}[!t]
    \centering
    \includegraphics[page=1,width=0.8\textwidth]{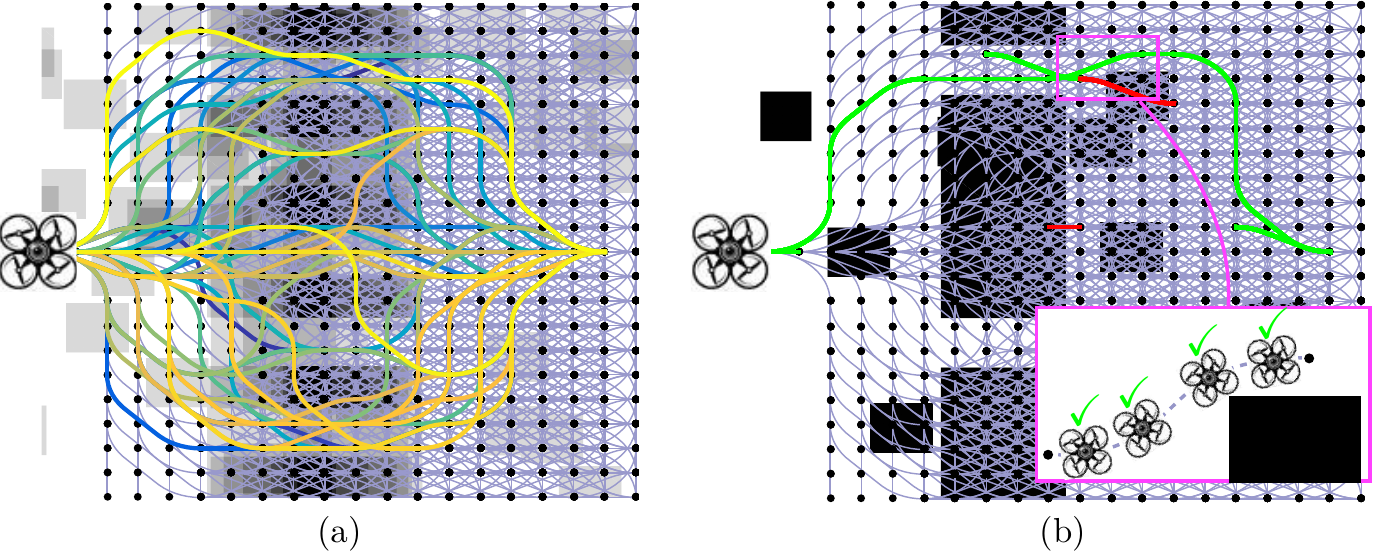}
    \caption{%
    \label{fig:marquee}
    The feasible path identification problem (a) The explicit graph contains dynamically feasible maneuvers~\citep{pivtoraiko2009differentially} for a UAV flying fast, with a set candidate paths. The map shows the distribution of edge validity for the graph. (b) Given a distribution over edges, our algorithm checks an edge, marks it as invalid (red) or valid (green), and updates its belief. We continue until a feasible path is identified as free. We aim to minimize the number of expensive edge evaluations. 
    \fullFigGap}
\end{figure}%

We make the following contributions:
\begin{enumerate}
  \item We show a novel equivalence between feasible path identification and the DRD problem, linking motion planning to Bayesian active learning.
  \item We develop \algName, a near-optimal algorithm for the special case of Bernoulli tests, which selects tests in $\bigo{E}$ instead of $\bigo{2^E}$.
  \item We demonstrate the efficacy of our algorithm on a spectrum of planning problems for mobile robots, manipulators, and real flight data collected from a full scale helicopter. 
\end{enumerate}

\section{Problem Formulation}
\label{sec:problem_formulation}

\subsection{Planning as Feasible Path Identification on Explicit Graphs}
Let $\explicitGraph = \pair{\vertexSet}{\edgeSet}$ be an explicit graph that consists of a set of vertices $\vertexSet$ and edges $\edgeSet$. Given a pair of start and goal vertices, $\pair{\start}{\goal} \in \vertexSet$, a search algorithm computes a path $\Path \subseteq \edgeSet$ - a connected sequence of valid edges. To ascertain the validity of an edge, it invokes an evaluation function $\evalFn: \edgeSet \rightarrow \{0, 1\}$. We address applications where edge evaluation is expensive, i.e., the computational cost $\cost(\edge)$ of computing $\evalFn(\edge)$ is significantly higher than regular search operations\footnote{It is assumed that $\cost(\edge)$ is modular and non-zero. It can scale with edge length.}.

We define a world as an outcome vector $\world \in \{0, 1 \}^{\abs{\edgeSet}}$ which assigns to each edge a boolean validity when evaluated, i.e. $\evalFn(\edge) = \world(\edge)$.
We assume that the outcome vector $P(\world)$ is sampled from an independent Bernoulli distribution, giving rise to a \emph{Generalized Binomial Graph (GBG) }~\citep{frieze2015introduction}.

We make a second simplification to the problem - from that of search to that of identification. Instead of searching $\explicitGraph$ online for a path, we frame the problem as identifying a valid path from a library of `good' candidate paths $\PathSet = \seq{\Path}{\numRegion}$. The candidate set of paths $\PathSet$ is constructed offline, while being cognizant of $P(\world)$, and can be verified to ensure that all paths have acceptable solution quality when valid. \footnote{Refer to supplementary on various methods to construct a library of good candidate paths} Hence we care about completeness with respect to $\PathSet$ instead of $\explicitGraph$. 

We wish to design an adaptive edge selector $\selectFn(\world)$ which is a decision tree that operates on a world $\world$, selects an edge for evaluation and branches on its outcome. The total cost of edge evaluation is $\cost(\selectFn(\world))$. Our objective is to minimize the cost required to find a valid path:
\begin{equation}
\label{eq:e4gbg}
\min \; \expect{ \world \in P(\world) }{ \cost(\selectFn(\world)) }  \; \mathrm{s.t} \; \forall \world , \exists \Path \; : \; \prod\limits_{\edge \in \Path} \world(\edge) = 1 \; , \; \Path \subseteq \selectFn(\world)
\end{equation}

\subsection{Decision Region Determination with Independent Bernoulli Tests}

We now define an equivalent problem - \emph{\probBernDRDFull (\probBernDRD)}. Define a set of tests $\testSet = \{1,\dots,\numTest\}$, where the outcome of each test is a Bernoulli random variable $\outcomeVarTest{\test} \in \outcomeSpace$, $P(\outcomeVarTest{\test} = \outcomeTest{\test}) = \biasTest{\test}^{\outcomeTest{\test}} (1 - \biasTest{\test})^{1 - \outcomeTest{\test}}$. We define a set of hypotheses $\hyp \in \hypSpace$, where each is an outcome vector $\hyp \in \outcomeSpace^\testSet$ mapping all tests $\test \in \testSet$ to outcomes $\hyp(\test)$. We define a set of regions $\set{\region_i}_{i=1}^{\numRegion}$, each of which is a subset of tests $\region \subseteq \testSet$. A region is determined to be valid if all tests in that region evaluate to true, which has probability $P(\region) = \prod\limits_{\test \in \region} P(\outcomeVarTest{\test} = 1)$. \vspace{-0.5em}

If a set of tests $\selTestSet \subseteq \testSet$ are performed, let the observed outcome vector be denoted by $\obsOutcome \in \outcomeSpace^{\abs{\selTestSet}}$. Let the \emph{version space} $\hypSpace(\obsOutcome)$ be the set of hypotheses consistent with observation vector $\obsOutcome$, i.e. $\hypSpace(\obsOutcome) = \setst{\hyp \in \hypSpace}{ \forall \test \in \selTestSet, \hyp(\test) = \obsOutcome(\test)}$.

We define a policy $\policy$ as a mapping from observation vector $\obsOutcome$ to tests. A policy terminates when it shows that at least one region is valid, or all regions are invalid. Let $\groundtruth \in \outcomeSpace^\testSet$ be the ground truth - the outcome vector for all tests. Denote the observation vector of a policy $\policy$ given ground truth $\groundtruth$ as $\obsOutcomeFunc{\policy}{\groundtruth}$. The expected cost of a policy $\policy$ is $\cost(\pi) = \expect{\groundtruth}{\cost(\obsOutcomeFunc{\policy}{\groundtruth}}$ where $c(\obsOutcome)$ is the cost of all tests $\test \in \selTestSet$. The objective is to compute a policy $\policyOpt$ with minimum cost that ensures at least one region is valid, i.e.
\begin{equation}
\label{eq:drd}
\policyOpt \in \argminprob{\policy} \;\cost(\policy) \; \mathrm{s.t} \; \forall \groundtruth , \exists \region_d \; : \; P(\region_d \;|\; \obsOutcomeFunc{\policy}{\groundtruth}) = 1
\end{equation}
Note that we can cast problem (\ref{eq:e4gbg}) to (\ref{eq:drd}) by setting $\edgeSet=\testSet$ and $\PathSet=\set{\region_i}_{i=1}^{\numRegion}$. That is, driving uncertainty into a region is equivalent to identification of a valid path (Fig.~\ref{fig:bern_drd_problem}). This casting enables us to leverage efficient algorithms with near-optimality guarantees for motion planning. 

\section{Related Work} \vspace{-0.7em}
\label{sec:related_work}
The computational bottleneck in motion planning varies with problem domain and that has led to a plethora of planning techniques (\citep{Lav06}). When vertex expansions are a bottleneck, A*~\citep{hart1968formal} is optimally efficient while techniques such as partial expansions~\citep{yoshizumi2000partial} address graph searches with large branching factors. The problem class we examine, that of expensive edge evaluation, has inspired a variety of `lazy' approaches. The Lazy Probabilistic Roadmap (PRM) algorithm~\citep{bohlin2000path} only evaluates edges on the shortest path while Fuzzy PRM~\citep{nielsen2000two} evaluates paths that minimize probability of collision. The Lazy Weighted A* (LWA*) algorithm~\citep{cohen2015planning} delays edge evaluation in A* search and is reflected in similar techniques for randomized search~\citep{gammell2015batch,Choudhury_2016_8070}. An approach most similar in style to ours is the LazyShortestPath (LazySP) framework~\citep{dellin2016unifying} which examines the problem of which edges to evaluate on the shortest path. Instead of the finding the shortest path, our framework aims to efficiently identify a feasible path in a library of `good' paths. Our framework is also similar to the Anytime Edge Evaluation (AEE*) framework~\citep{narayanan2017heuristic} which deals with edge evaluation on a GBG. However, our framework terminates once a \emph{single feasible path} is found while AEE* continues to evaluation in order to minimize expected cumulative sub-optimality bound. Similar to \citet{choudhury2016pareto} and \citet{burns2005sampling}, we leverage priors on the distribution of obstacles to make informed planning decisions.

\begin{figure}[!t]
    \centering
    \includegraphics[page=1,width=\textwidth]{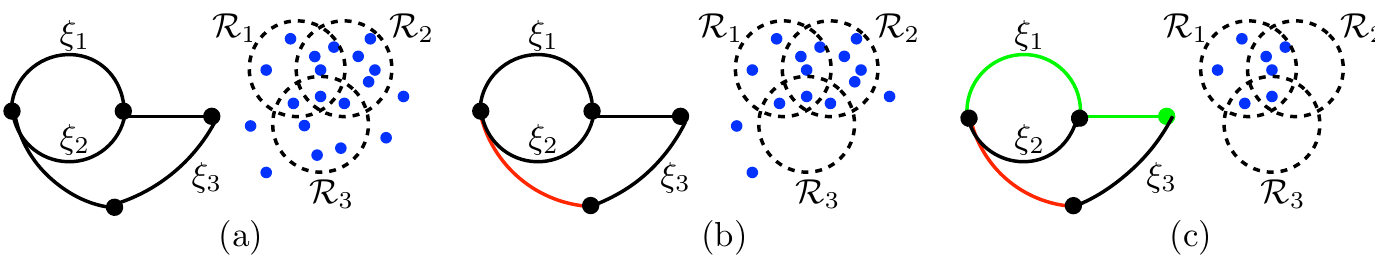}
    \caption{%
    \label{fig:bern_drd_problem}
    Equivalence between the feasible path identification problem and \probBernDRD. A path $\Path_i$ is equivalent to a region $\region_i$ over valid hypotheses (blue dots). Tests eliminate hypotheses and the algorithm terminates when uncertainty is pushed into a region ($\region_1$) and the corresponding path ($\Path_1$) is determined to be valid. \fullFigGap}
\end{figure}%


We draw a novel connection between motion planning and optimal test selection which has a wide-spread application in medical diagnosis~\citep{kononenko2001machine} and experiment design~\citep{chaloner1995bayesian}. Optimizing the ideal metric, decision theoretic value of information~\citep{howard1966information}, is known to be NP\textsuperscript{PP} complete~\citep{krause2009optimal}. For hypothesis identification (known as the Optimal Decision Tree (ODT) problem), Generalized Binary Search (GBS)~\citep{dasgupta2004analysis} provides a near-optimal policy. For disjoint region identification (known as the Equivalence Class Determination (ECD) problem), \ecsq~\citep{golovin2010near} provides a near-optimal policy. When regions overlap (known as the Decision Region Determination (DRD) problem), HEC~\citep{javdani2014near} provides a near-optimal policy. The \direct algorithm~\citep{chen2015submodular}, a computationally more efficient alternative to HEC, forms the basis of our approach.

\section{The Bernoulli Subregion Edge Cutting Algorithm} \label{sec:algorithm}
We follow the framework of \emph{Decision Region Edge Cutting (\direct)}~\citep{chen2015submodular} by creating separate sub-problems for each region, and combining them. For each sub-problem, we provide a modification to \ecsq which is simpler to compute when the distribution over hypotheses is non-uniform, while providing the same guarantees. Unfortunately, naively applying this method requires $\bigo{2^\testSet}$ computation per sub-problem. For the special case of independent Bernoulli tests, we present a more efficient \emph{Bernoulli Subregion Edge Cutting (\algName)} algorithm, which computes each subproblem in $\bigo{\testSet}$ time. 

\subsection{Preliminaries: Hypothesis as outcome vectors}
In order to apply the DRD framework of \citet{chen2015submodular}, we need to view regions as a sets of hypotheses.
 A hypothesis  $\hyp$ is a mapping from a test $\test \in \testSet$ to an outcome $\hyp(\test)$ and is defined as an outcome vector $\hyp \in \outcomeSpace^\testSet$. We use the symbol $\hypSpace$ to denote the set of all hypothesis ($\hypSpace = \outcomeSpace^\testSet$). Using the independent Bernoulli distribution, the probability of a hypothesis is $P(\hyp) = \prod\limits_{\test \in \testSet} P(\outcomeVarTest{\test} = \hyp(\test)) = \prod\limits_{\test \in \testSet} \biasTest{\test}^{\hyp(\test)} (1 - \biasTest{\test})^{1 - \hyp(\test)}$.

Given a observation vector $\obsOutcome$, let the \emph{version space} $\hypSpace(\obsOutcome)$ be the set of hypothesis consistent with $\obsOutcome$, i.e. $\hypSpace(\obsOutcome) = \setst{\hyp \in \hypSpace}{ \forall \test \in \selTestSet, \hyp(\test) = \obsOutcome(\test)}$. The probability mass of all the version space can evaluated as 
$P(\hypSpace(\obsOutcome)) = 
\sum\limits_{\hyp \in \hypSpace(\obsOutcome)} P(\hyp) = 
\prod\limits_{i \in \selTestSet} \biasTest{i}^{\obsOutcome(i)} (1 - \biasTest{i})^{1 - \obsOutcome(i)}  $ 

Although we initially defined a region as a clause on constituent test outcomes being true, we can now view them as a version space consistent with the constituent tests. Hence given a region $\region$, we define the version space $\regionH \in \hypSpace$ as a set of consistent hypothesis
$\regionH = \setst{\hyp \in \hypSpace}{\forall \test \in \region, \hyp(\test) = 1}$
Hence the probability of a region being valid is the probability mass of all consistent hypothesis
 $P(\regionH)  =  \sum\limits_{\hyp \in \regionH} P(\hyp) =  \prod\limits_{i \in \region} P(\outcomeVarTest{i} = 1) =  \prod\limits_{i \in \region} \biasTest{i} $

We will now define a set of useful expressions that will be used by \algName. Given a observation vector $\obsOutcome$, the \emph{relevant version space} is denoted as $\hypSpaceR(\obsOutcome) = \setst{\hyp \in \hypSpace}{ \forall \test \in \selTestSet \cap \region, \hyp(\test) = \obsOutcome(\test)}$. Hence the set of all hypothesis in $\regionH$ consistent with relevant outcomes in $\obsOutcome$ is given by $\regionH \cap \hypSpaceR(\obsOutcome)$. The probability $P(\regionH \cap \hypSpaceR(\obsOutcome))$ is as follows

\begin{equation}
\begin{aligned}
\label{eq:p_region_prune}
P(\regionH \cap \hypSpaceR(\obsOutcome))  &=  \sum\limits_{\hyp \in \regionH \cap \hypSpaceR(\obsOutcome)} P(\hyp) \\
                    &=  \sum\limits_{\hyp \in \regionH \cap \hypSpaceR(\obsOutcome)} \prod\limits_{i \in \testSet} P(\outcomeVarTest{i} = \hyp(i)) \\
                    &=  \prod\limits_{i \in (\region \cap \selTestSet)} \Ind(\outcomeVarTest{i} = 1)
                      \prod\limits_{j \in (\region \setminus \selTestSet)} P(\outcomeVarTest{j} = 1) 
                      \prod\limits_{k \in \region \cap \selTestSet} P(\outcomeVarTest{k} = \obsOutcome(k)) \\
                    &=  \prod\limits_{i \in (\region \cap \selTestSet)} \Ind(\outcomeVarTest{i} = 1)
                      \prod\limits_{j \in (\region \setminus \selTestSet)} \biasTest{j}
                      \prod\limits_{k \in \region \cap \selTestSet} \biasTest{k}^{\obsOutcome(k)} (1 - \biasTest{k})^{1 - \obsOutcome(k)} \\
\end{aligned}
\end{equation}

We will now derive similar expressions for the probability of a region \emph{not} being valid. The probability mass of hypothesis where a region $\region$ is not valid is $P(\Not{\regionH}) = \sum\limits_{\hyp \in \Not{\regionH}} P(\hyp) = 1 - \prod\limits_{i \in \region} \biasTest{i}$

Similarly, the set of all hypothesis in $\Not{\regionH}$ consistent with relevant outcomes in $\obsOutcome$ is given by $\Not{\regionH} \cap \hypSpaceR(\obsOutcome)$. The probability $P(\Not{\regionH} \cap \hypSpaceR(\obsOutcome))$ is as follows

\begin{equation}
\begin{aligned}
\label{eq:p_notregion_prune}
P(\Not{\regionH} \cap \hypSpaceR(\obsOutcome))  &=  \sum\limits_{\hyp \in \Not{\regionH} \cap \hypSpaceR(\obsOutcome)} P(\hyp) \\
                    &=  \sum\limits_{\hyp \in \Not{\regionH} \cap \hypSpaceR(\obsOutcome)} \prod\limits_{i \in \testSet} P(\outcomeVarTest{i} = \hyp(i)) \\
                    &=  \left(1 - 
                      \prod\limits_{i \in (\region \cap \selTestSet)} \Ind(\outcomeVarTest{i} = 1)
                      \prod\limits_{j \in (\region \setminus \selTestSet)} P(\outcomeVarTest{j} = 1) 
                      \right)
                      \prod\limits_{k \in \region \cap \selTestSet} P(\outcomeVarTest{k} = \obsOutcome(k)) \\
                    &=  \left(1 - 
                      \prod\limits_{i \in (\region \cap \selTestSet)} \Ind(\outcomeVarTest{i} = 1)
                      \prod\limits_{j \in (\region \setminus \selTestSet)} \biasTest{j}
                      \right)
                      \prod\limits_{k \in \region \cap \selTestSet} \biasTest{k}^{\obsOutcome(k)} (1 - \biasTest{k})^{1 - \obsOutcome(k)} \\
\end{aligned}
\end{equation}

\subsection{A simple subproblem: One region versus all}
We will now define a simple subproblem whose solution will help in addressing the \probBernDRD problem. We define the `one region versus all' subproblem as follows - given a \emph{single region}, the objective is to either push the entire probability mass of the version space on a region or collapse it on a single relevant hypothesis. We will view this as a decision problem on the space of \emph{disjoint subregions}.

We refer to hypothesis region $\regionH$ as subregion $\subregion_1$ as shown in Fig.\ref{fig:ecd_problem}. Every other hypothesis $\hyp \in \Not{\regionH}$ is defined as its own subregion $\subregion_i$. Determining which subregion is valid falls under the framework of \emph{Equivalence Class Determination} (ECD), (a special case of the DRD problem) and can be solved efficiently by the \ecsq algorithm (\citet{golovin2010near}). 

\subsubsection{The \texorpdfstring{\ecsq}{EC2} algorithm}
The ECD problem is a special case of the DRD problem described in (\ref{eq:drd}) to a case where regions are disjoint. In order to avoid confusion with DRD regions, we will hence forth refer to them as sub-regions.  Let $\{ \subregion_1, \dots, \subregion_\numSubregion \}$ be a set of disjoint subregions, i.e, $\subregion_i \cap \subregion_j = 0$ for $i \neq j$. \citet{golovin2010near} provide an efficient yet near-optimal criterion for solving ECD in their \ecsq algorithm which we discuss in brief. 

The \ecsq algorithm defines a graph $\mathcal{G}=(\mathcal{V}, \edgeSetEC)$ where the nodes are hypotheses and edges are between hypotheses in different decision regions $E = \cup_{i \neq j} \setst{ \edgeFnEC{\hyp}{\hyp'} }{\hyp \in \subregion_i, \hyp' \in \subregion_j}$. The weight of an edge is defined as $\weight(\edgeFnEC{\hyp}{\hyp'}) = P(\hyp) P(\hyp')$. The weight of a set of edges is defined as $\weight(\edgeSetEC') = \sum\limits_{\edgeEC \in \edgeSetEC'} \weight(\edgeEC)$. 
An edge is said to be `cut' by an observation if either hypothesis is inconsistent with the observation. Hence a test $\test$ with outcome $\outcomeTest{\test}$ is said to cut a set of edges $\edgeSetEC(\outcomeTest{\test}) = \setst{\edgeFnEC{\hyp}{\hyp'}}{\hyp(\test) \neq \outcomeTest{\test} \vee \hyp'(\test) \neq \outcomeTest{\test}}$. The aim is to cut all edges by performing test while minimizing cost. Before we describe the objective, we first specify how \ecsq efficiently computes weights by defininig a weight function over subregions.
\begin{equation}
\label{eq:weight_golovin}
\wecgolovin(\{\subregion_i\}) 
= \sum\limits_{i \neq j} P(\subregion_i) P(\subregion_j)
\end{equation}
When hypotheses have uniform weight, this can be computed efficiently for the `one region versus all' subproblem. Let $P(\Not{\subregion_1}) =  \sum\limits_{i>1} P(\subregion_i)$:
\begin{equation}
\label{eq:weight_golovin_simple}
    \wecgolovin(\{\subregion_i\}) = P(\subregion_1) P(\Not{\subregion_1}) + P(\Not{\subregion_1})\left(P(\Not{\subregion_1}) - \frac{1}{|\hypSpace|}\right)
\end{equation}

\ecsq defines an objective function $\fec{\obsOutcome}$ that measures the weight of edges cut. This is the difference between the original weight of subregions $\subregion_i$ and the weight of pruned subregions $\subregion_i \cap \hypSpace(\obsOutcome)$, i.e. $\fec{\obsOutcome} = \wecgolovin(\{\subregion_i\}) - \wecgolovin(\{\subregion_i\} \cap \hypSpace(\obsOutcome))$. 

\ecsq uses the fact that $\fec{\obsOutcome}$ is \emph{adaptive submodular} (\citet{golovin2011adaptive}) to define a greedy algorithm. Let the expected marginal gain of a test be 
$\gain{\ec}{\test}{\outcome} = \expect{\outcomeTest{\test}}{ \fec{ \obsOutcomeAdd{\test} } - \fec{\obsOutcome} \;|\; \obsOutcome}$. \ecsq greedily selects a test $\test^* \in \argmaxprob{\test} \frac{\gain{\ec}{\test}{\obsOutcome}}{\cost(\test)}$.

\vspace{-1.0\baselineskip}
\subsubsection{An alternative to \texorpdfstring{\ecsq}{EC2} on the `one region versus all' problem}

\begin{figure}[!htbp]
    \centering
    \includegraphics[page=1,width=\textwidth]{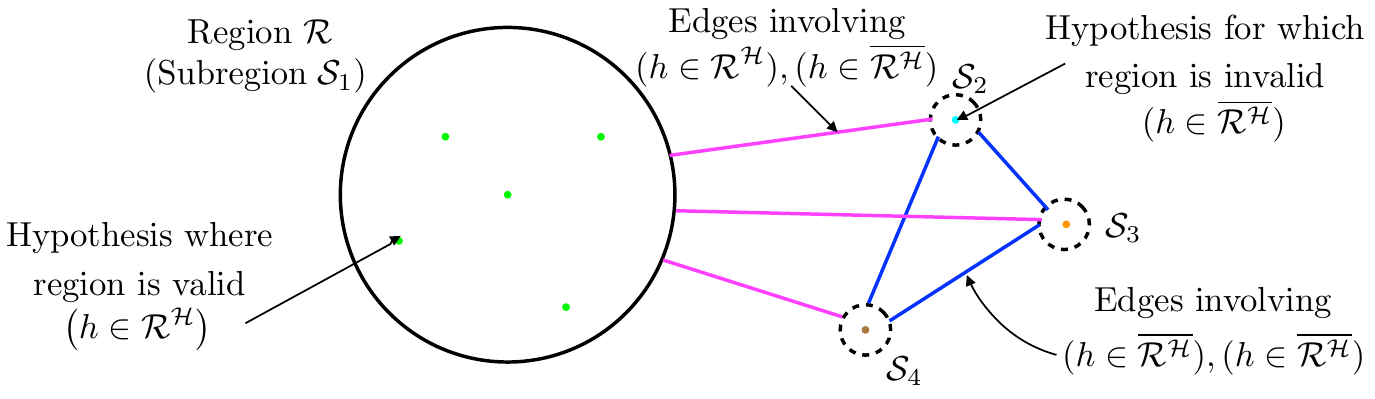}
    \caption{%
    \label{fig:ecd_problem}
    The `one region versus all' ECD problem. The region $\regionH$ is shown as a circle encompassing a set of consistent hypothesis $\hyp$ (green dots). Hypothesis for which the region is not valid lie outside the circle (dots in colors other than green). The objective is to compute an efficient policy to either force the probability mass in the region $\regionH$ or determine the \emph{unique} hypothesis $\hyp \in \Not{\regionH}$. \fullFigGap}
\end{figure}%

For non-uniform prior the quantity (\ref{eq:weight_golovin_simple}) is more difficult to compute. We modify this objective slightly, adding self-edges on subregions $\subregion_{i}, i>1$ as shown in Fig.~\ref{fig:ecd_problem}, enabling more efficient computation while still maintaining the same guarantees:
\begin{equation}
\label{eq:weight_sub}
\begin{aligned}
    \wec(\{\subregion_i\}) &= P(\subregion_1) (\sum\limits_{i\neq1} P(\subregion_i)) + (\sum\limits_{i\neq1} P(\subregion_i)) (\sum\limits_{j \geq i} P(\subregion_j)) \\
    &= P(\subregion_1) P(\Not{\subregion_1}) + P(\Not{\subregion_1})^2\\
    &= P(\regionH)P(\Not{\regionH}) + P(\Not{\regionH})P(\Not{\regionH}) \\
    &= P(\Not{\regionH}) (P(\regionH) + P(\Not{\regionH})) \\
    &= 1 - \prod\limits_{i \in \region} \biasTest{i}
\end{aligned}
\end{equation}

Similarly we can compute $\wec(\{\subregion_i\} \cap \hypSpaceR(\obsOutcome))$ using (\ref{eq:p_region_prune}) and (\ref{eq:p_notregion_prune})
\begin{equation}
\begin{aligned}
\label{eq:weight_sub_pruned}
&\wec(\{\subregion_i\} \cap \hypSpaceR(\obsOutcome)) \\
&=P(\subregion_1 \cap \hypSpaceR(\obsOutcome) ) P(\Not{\subregion_1} \cap \hypSpaceR(\obsOutcome) ) + P(\Not{\subregion_1} \cap \hypSpaceR(\obsOutcome))^2\\
   &= P(\region \cap \hypSpaceR(\obsOutcome) )P(\Not{\region} \cap \hypSpaceR(\obsOutcome)) + P(\Not{\region}\cap \hypSpaceR(\obsOutcome))P(\Not{\region}\cap \hypSpaceR(\obsOutcome)) \\
   &= P(\Not{\region}\cap \hypSpaceR(\obsOutcome)) (P(\region\cap \hypSpaceR(\obsOutcome)) + P(\Not{\region}\cap \hypSpaceR(\obsOutcome))) \\
   &= \left(1 - \prod\limits_{i \in (\region \cap \selTestSet)} \Ind(\outcomeVarTest{i} = 1)
      \prod\limits_{j \in (\region \setminus \selTestSet)} \biasTest{j} \right)
      \left( \prod\limits_{k \in \region \cap \selTestSet} \biasTest{k}^{\obsOutcome(k)} (1 - \biasTest{k})^{1 - \obsOutcome(k)} \right)^2
\end{aligned}
\end{equation}

Using (\ref{eq:weight_sub}) and (\ref{eq:weight_sub_pruned}) we can express the $\fec{\obsOutcome}$ as 

\begin{equation}
\begin{aligned}
\label{eq:fec_applied}
\fec{\obsOutcome} &= 1 - \frac{ \wec(\{\subregion_i\} \cap \hypSpaceR(\obsOutcome) ) }{ \wec(\{\subregion_i\}) }  \\
&= 1 - \frac{\left(1 - \prod\limits_{i \in (\region \cap \selTestSet)} \Ind(\outcomeVarTest{i} = 1)
      \prod\limits_{j \in (\region \setminus \selTestSet)} \biasTest{j} \right)
      \left( \prod\limits_{k \in \region \cap \selTestSet} \biasTest{k}^{\obsOutcome(k)} (1 - \biasTest{k})^{1 - \obsOutcome(k)} \right)^2
}{1 - \prod\limits_{i \in \region} \biasTest{i}}
\end{aligned}
\end{equation}

\begin{lemma} \label{lem:ec2}
The expression $\fec{\obsOutcome}$ is strongly adaptive monotone and adaptive submodular.
\end{lemma}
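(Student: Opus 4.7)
The plan is to reinterpret $\fec{\obsOutcome}$ as a normalized sum of indicator functions, one per hypothesis pair, each indicating that at least one hypothesis in the pair has been ruled out of the relevant version space $\hypSpaceR(\obsOutcome)$. Once in this form, both claimed properties reduce to pointwise monotonicity of a single pair-indicator together with the standard closure results of \citet{golovin2011adaptive}.

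Concretely, expanding $P(\regionH)$ and $P(\Not{\regionH})$ as sums over their constituent hypotheses in (\ref{eq:weight_sub}) yields
\[
    \wec(\{\subregion_i\}) = \sum_{(\hyp, \hyp') \in \regionH \times \Not{\regionH}} P(\hyp) P(\hyp') \;+\; \sum_{(\hyp, \hyp') \in \Not{\regionH} \times \Not{\regionH}} P(\hyp) P(\hyp'),
\]
and the analogous expansion of (\ref{eq:weight_sub_pruned}) restricts each sum to those pairs for which \emph{both} $\hyp$ and $\hyp'$ survive in $\hypSpaceR(\obsOutcome)$. Letting $\edgeSetEC$ denote this pair set (the self-pairs $(\hyp,\hyp) \in \Not{\regionH} \times \Not{\regionH}$ play the role of the self-loops we added relative to $\wecgolovin$) and $\weight(\hyp, \hyp') := P(\hyp) P(\hyp')$, formula (\ref{eq:fec_applied}) becomes
\[
    \fec{\obsOutcome} = \frac{1}{\wec(\{\subregion_i\})} \sum_{(\hyp, \hyp') \in \edgeSetEC} \weight(\hyp, \hyp')\, \Ind\!\left[\hyp \notin \hypSpaceR(\obsOutcome) \;\text{or}\; \hyp' \notin \hypSpaceR(\obsOutcome)\right],
\]
which is precisely an edge-cutting objective in the sense of \citet{golovin2010near}, albeit with our modified non-negative weights.

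Strong adaptive monotonicity is then immediate: since $\hypSpaceR(\obsOutcomeAdd{t}) \subseteq \hypSpaceR(\obsOutcome)$ for every outcome of every new test $t$, each pair-indicator is pointwise non-decreasing in $\obsOutcome$, so $\fec{\obsOutcome}$ is pointwise non-decreasing; in particular the expected marginal gain of any test is non-negative. For adaptive submodularity, I would invoke closure under non-negative linear combinations and reduce the claim to showing that each single-pair indicator $f_{(\hyp, \hyp')}$ is itself adaptive submodular. This last step is a short case analysis: the conditional probability that a new test first eliminates $\hyp$ or $\hyp'$ from $\hypSpaceR$ is weakly decreasing in the observed history, because once either endpoint has been eliminated its contribution to the marginal gain drops to zero.

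The main obstacle is the bookkeeping in the first step --- faithfully expanding the quadratic expressions (\ref{eq:weight_sub}) and (\ref{eq:weight_sub_pruned}) into sum-over-pairs form and checking that restriction to $\hypSpaceR(\obsOutcome)$ absorbs exactly the right probability factors from the independent Bernoulli distribution. Once this identity is in hand, both claimed properties follow from generic structural arguments about edge-cutting objectives and the closure theorems of \citet{golovin2011adaptive}, so no fresh adaptive analysis is required beyond what already appears in the $\ecsq$ literature.
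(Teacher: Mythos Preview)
Your approach is correct and takes a genuinely different route from the paper. The paper adapts the derivative argument of Lemma~6 in \citet{golovin2010near}: it writes the expected marginal gain as a function $\phi$ of version-space counts $n_{i,a}$, adds an extra self-loop term $\gamma_a = \sum_i \sum_{b\neq a} n_{i,b}^2$ to accommodate the modified weights in (\ref{eq:weight_sub}), and then verifies term by term that $\partial \phi / \partial n_{k,c} \geq 0$. This directly extends the general \ecsq proof and, like the original, holds for arbitrary priors on the hypothesis space, not just product priors.

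Your route instead exploits the independent Bernoulli structure. Decomposing into pair-indicators and invoking closure of adaptive submodularity under non-negative linear combinations is valid, and the per-pair case analysis does go through: under independence, $P(\outcomeVarTest{\test} \neq \hyp(\test) \mid \obsOutcome)$ does not depend on $\obsOutcome$ for unobserved $\test$, so the only way the marginal gain of a pair-indicator can change as the history grows is by dropping to zero once an endpoint is eliminated. This is more elementary than the paper's derivative computation but is specific to independent tests; for a correlated prior the single-pair indicators need not be individually adaptive submodular (the conditional miss-probability can increase with more observations), and one must fall back on the aggregated argument of \citet{golovin2010near}. One small remark: your closing claim that ``no fresh adaptive analysis is required beyond what already appears in the \ecsq literature'' slightly undersells what you are doing---the per-edge submodularity under independence and the self-loop weights are not in \citet{golovin2010near}, so the short case analysis you sketch is the actual new content here.
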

\begin{proof}
See Appendix \ref{sec:proof:lem_ec2}
\end{proof}

\subsection{Improvement in runtime from exponential to linear}
For non-uniform priors, computing (\ref{eq:weight_golovin}) is difficult. The naive approach is to compute all hypothesis and assign them to correct subregions and then compute the weights. This has a runtime of a runtime of $\bigo{2^\testSet}$.

However, our expression (\ref{eq:fec_applied}) can be computed in $\bigo{\testSet}$. This is because of the simplifications induced by the independent bernoulli assumption. 

Since we have to repeat this computation every iteration of the algorithm, we can reduce this to $\bigo{1}$ through memoization. If we memoize $\left(1 - \prod\limits_{i \in (\region \cap \selTestSet)} \Ind(\outcomeVarTest{i} = 1) \prod\limits_{j \in (\region \setminus \selTestSet)} \biasTest{j} \right)$, we can incrementally update it every time a test $t$ is evaluated. We also need to memoize $\left( \prod\limits_{k \in \region \cap \selTestSet} \biasTest{k}^{\obsOutcome(k)} (1 - \biasTest{k})^{1 - \obsOutcome(k)} \right)^2$ and update it incrementally. 

\subsection{Solving the original DRD problem using \algName }

We now return to the \probBernDRD (\ref{eq:drd}) where we have multiple regions $\{ \region_1, \dots, \region_\numRegion \}$ that can overlap and the goal is to push the probability into one such region. Similar to \direct (\citet{chen2015submodular}), we apply \algName to solve the problem.

\vspace{-0.5em}\subsubsection{The Noisy-OR Construction}\vspace{-0.5em}
The general strategy is to reduce the DRD problem with $\numRegion$ regions to $O(\numRegion)$ instances of the ECD problem such that \emph{solving any one of them} is sufficient for solving the DRD problem as shown in Fig.~\ref{fig:direct_problem}. 

\begin{figure}[!htbp]
    \centering
    \includegraphics[page=1,width=0.7\textwidth]{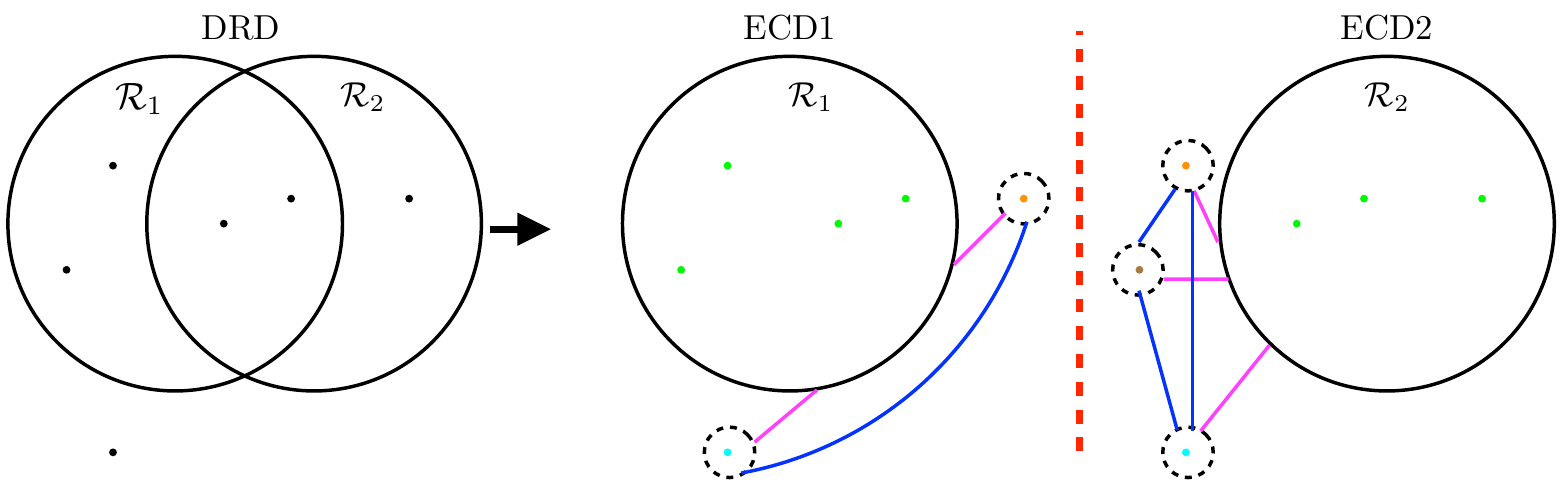}
    \caption{%
    \label{fig:direct_problem}
    The DRD problem split into `one region versus all' ECD problems by the \direct algorithm}
\end{figure}%

ECD problem $r$ creates a `one region versus all' problem using $\region_r$. The \ecsq objective corresponding to this problem is $\feci{r}{\obsOutcome}$. Note that $\feci{r}{\emptyset} = 0$ which corresponds to nothing. On the other hand $\feci{r}{\groundtruth} = 1$ which implies all edges are cut. The \direct algorithm then combines them in a \emph{Noisy-OR} formulation by defining the following combined objective

\begin{equation}
\label{eq:drd_fn}
  \fdrd{\obsOutcome} = 1 - \prod\limits_{r=1}^\numRegion (1 - \feci{r}{\obsOutcome})
\end{equation}

Note that $\fdrd{\obsOutcome} = 1$ iff $\feci{r}{\obsOutcome} = 1$ for at least one $r$. Thus the original DRD problem (\ref{eq:drd}) is equivalent to solving 

\begin{equation}
\label{eq:direct_drd}
\policyOpt \in \underbrace{\argminprob{\policy} \;\cost(\policy)}_\text{find policy} \; \mathrm{s.t} \; \underbrace{ \forall \groundtruth }_\text{groundtruth} \; : \;  \underbrace{\fdrd{\obsOutcomeFunc{\policy}{\groundtruth}} \geq 1}_{\text{drive the objective to 1}}
\end{equation}

\direct greedily selects a test $\test^* \in \argmaxprob{\test} \frac{\gain{\drd}{\test}{\obsOutcome}}{\cost(\test)}$.

\vspace{-0.5em}\subsubsection{The \algName algorithm}\vspace{-0.5em}

We can now evaluate the DRD objective in (\ref{eq:drd_fn}) using (\ref{eq:fec_applied})

\begin{equation}
\begin{aligned}
\label{eq:fdrd_applied}
&\fdrd{\obsOutcome}\\
&= 1 - \prod\limits_{r=1}^\numRegion (1 - \feci{r}{\obsOutcome}) \\
&= 1 - \prod\limits_{r=1}^\numRegion \left( 1 - 1 + \frac{\left(1 - \prod\limits_{i \in (\region_r \cap \selTestSet)} \Ind(\outcomeVarTest{i} = 1)
      \prod\limits_{j \in (\region_r \setminus \selTestSet)} \biasTest{j} \right)
      \left( \prod\limits_{k \in \region_r \cap \selTestSet} \biasTest{k}^{\obsOutcome(k)} (1 - \biasTest{k})^{1 - \obsOutcome(k)} \right)^2
}{1 - \prod\limits_{i \in \region_r} \biasTest{i}} \right) \\
&= 1 - \prod\limits_{r=1}^\numRegion \left(  \frac{\left(1 - \prod\limits_{i \in (\region_r \cap \selTestSet)} \Ind(\outcomeVarTest{i} = 1)
      \prod\limits_{j \in (\region_r \setminus \selTestSet)} \biasTest{j} \right)
      \left( \prod\limits_{k \in \region_r \cap \selTestSet} \biasTest{k}^{\obsOutcome(k)} (1 - \biasTest{k})^{1 - \obsOutcome(k)} \right)^2
}{1 - \prod\limits_{i \in \region_r} \biasTest{i}}  \right)
\end{aligned}
\end{equation}

\begin{lemma} \label{lem:drd}
The expression $\fdrd{\obsOutcome}$ is strongly adaptive monotone and adaptive submodular.
\end{lemma}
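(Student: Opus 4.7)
The plan is to reduce Lemma \ref{lem:drd} to Lemma \ref{lem:ec2} by reusing the Noisy-OR preservation argument that is the combinatorial heart of the \direct framework in \citet{chen2015submodular}. The starting observation is that $\fdrd$ is built by exactly the composition rule, $1 - \prod_{r} (1 - \feci{r})$, for which \citet{chen2015submodular} established that strong adaptive monotonicity and adaptive submodularity of the constituents transfer to the composition, provided each constituent is $[0,1]$-valued and saturates at $1$ precisely on realizations that solve its own subproblem.

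First, I would record two elementary properties of the per-region objectives $\feci{r}$ directly from expression (\ref{eq:fec_applied}): (i) $\feci{r}{\obsOutcome} \in [0,1]$, since the relevant version-space mass in the numerator is bounded above by the initial subregion weight in the denominator; and (ii) $\feci{r}{\obsOutcome} = 1$ exactly when every edge in $\region_r$ has been observed true, i.e.\ when the $r$-th ECD subproblem is solved. Together with Lemma \ref{lem:ec2}, which already supplies strong adaptive monotonicity and adaptive submodularity of each $\feci{r}$, these are precisely the hypotheses needed by the Noisy-OR preservation step.

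Next, strong adaptive monotonicity of $\fdrd$ I would argue pointwise. For any partial observation $\obsOutcome$ and any extension $\obsOutcomeAdd{t}$, strong adaptive monotonicity of each $\feci{r}$ gives $1 - \feci{r}{\obsOutcomeAdd{t}} \le 1 - \feci{r}{\obsOutcome}$, with both sides in $[0,1]$ by property (i). Taking the product over $r$ preserves the inequality, and subtracting from $1$ yields $\fdrd{\obsOutcomeAdd{t}} \ge \fdrd{\obsOutcome}$ on every realization, which is strong adaptive monotonicity.

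The main obstacle is adaptive submodularity, since Noisy-OR does not in general preserve adaptive submodularity. Here I would invoke the specific Noisy-OR lemma from \citet{chen2015submodular} (the preservation result underpinning \direct), which applies verbatim once the constituents are strongly adaptive monotone, adaptive submodular, and $[0,1]$-valued with the saturation property above. The only nontrivial check is that our modified weighting (the self-edges on $\subregion_{i},\, i>1$ introduced in (\ref{eq:weight_sub})) does not break any hypothesis of that preservation argument; but this modification affects only the constant normalizer and leaves properties (i)--(ii) intact, which is essentially why Lemma \ref{lem:ec2} was stated in exactly that form. Combining the transferred adaptive submodularity with the pointwise strong adaptive monotonicity from the previous step completes the proof.
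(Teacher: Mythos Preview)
Your high-level plan is natural, and the strong adaptive monotonicity part is fine: that really does follow pointwise from Lemma~\ref{lem:ec2} and the monotonicity of the product, exactly as you say. The gap is in the adaptive submodularity step.

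You correctly note that Noisy-OR does not preserve adaptive submodularity in general, and then propose to escape this by invoking the preservation result from \citet{chen2015submodular} as a black box whose hypotheses are ``strongly adaptive monotone, adaptive submodular, $[0,1]$-valued, with saturation at $1$.'' But those abstract hypotheses are \emph{not} what makes that argument go through; if they were, Noisy-OR would preserve adaptive submodularity for any such family, contradicting your own observation. The proof in \citet{chen2015submodular} is not a general preservation lemma: it decomposes the marginal gain for two subproblems as
\[
\gain{\drd}{\test}{\obsOutcome} = (1 - \feci{1}{\obsOutcome})\,\expect{x_t}{\delta_2(x_t\mid\obsOutcome)\mid\obsOutcome} + \expect{x_t}{(1 - \feci{2}{\obsOutcomeAdd{\test}})\,\delta_1(x_t\mid\obsOutcome)\mid\obsOutcome},
\]
and then establishes adaptive submodularity by showing the second term has nonnegative partial derivatives $\partial\lambda/\partial n_{k,c}\ge 0$ in the version-space counts. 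That derivative computation uses the explicit edge-cutting structure of the \ecsq objective, not merely the fact that it is adaptive submodular.

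This is exactly where your claim that the self-edge modification ``affects only the constant normalizer'' fails. Adding self-edges on $\subregion_i,\,i>1$ introduces new quadratic terms $\gamma_a=\sum_i\sum_{b\neq a} n_{i,b}^2$ into the weight, and hence into $\delta_r$ and $\lambda(\nvec)$. These terms were absent from the original \citet{chen2015submodular} computation and propagate through every partial derivative; the paper's own proof must (and does) redo the entire $\partial\lambda/\partial n_{k,c}$ calculation carrying the $\gamma$-terms and verifying the final inequality still holds. You cannot sidestep this work by citing the original lemma verbatim, because the object whose derivative you need to control has changed. What you need is precisely an adaptation of the \citet{chen2015submodular} derivative argument to the modified weight, not a black-box invocation.
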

\begin{proof}
See Appendix \ref{sec:proof:lem_drd}
\end{proof}

\begin{theorem}
\label{eq:drd_near_opt}
Let $\numRegion$ be the number of regions, $\pminH$ the minimum prior probability of any hypothesis, $\policy_{DRD}$ be the greedy policy and $\policyOpt$ with the optimal policy. Then $\cost(\policy_{DRD}) \leq \cost(\policy^*)(2\numRegion \log \frac{1}{\pminH} + 1)$.
\end{theorem}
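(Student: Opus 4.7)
The strategy is to apply the standard near-optimality guarantee for greedy policies on strongly adaptive monotone, adaptive submodular objectives, due to \citet{golovin2011adaptive}. That result states: if $f$ attains a maximum value $Q$, has minimum positive marginal value (``gap'') at least $\eta$, and is strongly adaptive monotone and adaptive submodular, then the greedy policy $\pi$ driving $f$ to $Q$ satisfies $\cost(\pi) \leq \cost(\pi^*)\bigl(\ln(Q/\eta) + 1\bigr)$. Since Lemma~\ref{lem:drd} has already established the two structural properties of $\fdrd$, the proof reduces to identifying $Q$ and lower-bounding $\eta$ in terms of $\pminH$ and $\numRegion$.

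First, I would observe that $Q = 1$: by equation~(\ref{eq:drd_fn}), $\fdrd{\obsOutcome}\in[0,1]$, and $\fdrd$ attains $1$ precisely when $\feci{r}{\obsOutcome}=1$ for some region $r$, which by the Noisy-OR construction corresponds exactly to the termination condition of the DRD problem. Strong adaptive monotonicity (Lemma~\ref{lem:drd}) guarantees that the greedy policy will keep increasing $\fdrd$ until it reaches $1$, at which point some region has been certified valid and the DRD objective~(\ref{eq:direct_drd}) is met.

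Second, and this is the core technical step, I would lower-bound the minimum gap. Fix any realisation $\groundtruth$ and any partial observation $\obsOutcome$ along the greedy policy's trace with $\fdrd{\obsOutcome}<1$. From~(\ref{eq:fdrd_applied}),
\begin{equation*}
1 - \fdrd{\obsOutcome} \;=\; \prod_{r=1}^{\numRegion} \frac{A_r\, B_r^{2}}{C_r},
\end{equation*}
where $A_r = 1 - \prod_{i\in \region_r\cap \selTestSet}\Ind(\outcomeVarTest{i}=1)\prod_{j\in\region_r\setminus\selTestSet}\biasTest{j}$, $B_r = \prod_{k\in\region_r\cap\selTestSet}\biasTest{k}^{\obsOutcome(k)}(1-\biasTest{k})^{1-\obsOutcome(k)}$, and $C_r = 1-\prod_{i\in\region_r}\biasTest{i}\in(0,1]$. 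Each factor is non-negative; since $\fdrd<1$, none vanish. Because $B_r$ is the probability of a consistent sub-assignment of the tests in $\region_r$, it is lower bounded by $\pminH$ (the probability of any single hypothesis), so $B_r^{2}\geq \pminH^{2}$. Likewise $A_r$ is the probability that $\region_r$ is still not certified valid under the relevant observations, and is bounded below by the mass of a single hypothesis, i.e.\ $A_r\geq \pminH$, while $C_r\leq 1$. Per-region this gives $A_r B_r^2 / C_r \ge \pminH^{3}$; a slightly tighter accounting (absorbing $A_r/C_r \geq 1$-type cancellations as in \citet{chen2015submodular}) yields $A_r B_r^2/C_r \ge \pminH^{2}$. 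Taking the product over the $\numRegion$ regions, we obtain $1 - \fdrd{\obsOutcome}\ge \pminH^{2\numRegion}$ whenever $\fdrd{\obsOutcome}<1$. Equivalently, the function is integer-valued (reaches $Q=1$) up to a minimum gap $\eta \geq \pminH^{2\numRegion}$ from $Q$.

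Finally, plugging $Q=1$ and $\eta=\pminH^{2\numRegion}$ into the Golovin--Krause bound gives
\begin{equation*}
\cost(\policy_{DRD}) \;\leq\; \cost(\policyOpt)\left(\ln \frac{1}{\pminH^{2\numRegion}} + 1\right) \;=\; \cost(\policyOpt)\left(2\numRegion\log\frac{1}{\pminH} + 1\right),
\end{equation*}
which is the claimed inequality. The principal obstacle is the per-region lower bound on $A_r B_r^2/C_r$: verifying that the rational expression in~(\ref{eq:fdrd_applied}) truly cannot take values in $(0,\pminH^{2\numRegion})$ requires a careful case analysis of which edges in $\region_r$ have been observed and ruling out pathological cancellations between $A_r$ and $C_r$. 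Everything else is a routine invocation of results from the adaptive submodular literature.
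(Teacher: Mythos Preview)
Your overall strategy is exactly right and is, in fact, the substance behind the paper's proof: the paper's own argument is a one-line citation of Theorem~2 in \citet{chen2015submodular}, which is proved precisely by combining adaptive submodularity and strong adaptive monotonicity (your Lemma~\ref{lem:drd}) with the Golovin--Krause gap bound. So you are not taking a different route; you are unpacking the black box the paper invokes.

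That said, your justification of the per-region gap contains an error. The claim that ``$A_r/C_r \geq 1$-type cancellations'' occur is false: in the case where every observed test in $\region_r$ came up $1$, one has $A_r = 1-\prod_{j\in\region_r\setminus\selTestSet}\biasTest{j}$ and $C_r = 1-\prod_{i\in\region_r}\biasTest{i}$, and since $\prod_{i\in\region_r}\biasTest{i}\le \prod_{j\in\region_r\setminus\selTestSet}\biasTest{j}$ you get $A_r/C_r \leq 1$, not $\geq 1$. The bound $A_r\ge \pminH$ is also not cleanly argued. The clean way to get $A_r B_r^2/C_r \geq (\pminH)^2$ is to recall that $A_r B_r^2 = \wec(\{\subregion_i\}\cap\hypSpaceR(\obsOutcome))$ is literally a sum of edge weights in the \ecsq graph, each edge weight being $P(\hyp)P(\hyp')\ge(\pminH)^2$; if the sum is positive there is at least one surviving edge, hence $A_r B_r^2\geq(\pminH)^2$, and $C_r\le 1$ finishes it. With this fix, your product over $\numRegion$ regions gives $\eta \ge (\pminH)^{2\numRegion}$ and the Golovin--Krause theorem yields the stated bound. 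You correctly flagged this step as the main obstacle; the resolution is the edge-weight interpretation rather than algebraic manipulation of $A_r$ and $C_r$ separately.
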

\begin{proof}
See Appendix \ref{proof:bisect_nearopt}
\end{proof}

\IncMargin{1em}
\begin{algorithm}[t]
  \caption{ Decision Region Determination with Independent Bernoulli Test$\left( \set{\region_i}_{i=1}^{\numRegion}, \biasVec, \groundtruth \right)$ }\label{alg:drd_skeleton}
  \algorithmStyle
  $\selTestSet \gets \emptyset$ \;
  \While{$(\nexists \region_i, P(\region_i | \obsOutcome) = 1) $ \textbf{\upshape and} $(\exists \region_i, P(\region_i | \obsOutcome) > 0) $}
  {
    $\candTestSet \gets \texttt{SelectCandTestSet}(\obsOutcome)$ \Comment*[r]{Using either (\ref{eq:cand_test_set:all}) or (\ref{eq:cand_test_set:maxp})}
    $\optTest \gets \texttt{SelectTest}(\candTestSet, \biasVec, \obsOutcome)$ \Comment*[r]{Using either (\ref{eq:greedy_fdrd}),(\ref{eq:policy_random}),(\ref{eq:policy_max_tally}),(\ref{eq:policy_set_cover}) or (\ref{eq:policy_mvoi})}  
    $\selTestSet \gets \selTestSet \cup \optTest$\; 
    $\outcomeTest{\optTest} \gets \groundtruth(\optTest)$ \Comment*[r]{Observe outcome for selected test}
  }
\end{algorithm}

We now describe the algorithm \algName. Algorithm \ref{alg:drd_skeleton} shows the framework for a general decision region determination algorithm. In order to specify \algName, we need to define two options - a candidate test set selection function $\texttt{SelectCandTestSet}(\obsOutcome)$ and a test selection function $\texttt{SelectTest}(\candTestSet, \biasVec, \obsOutcome)$.

The vanilla version of \algName implements $\texttt{SelectCandTestSet}(\obsOutcome)$ to return the set of all candidate tests $\candTestSet$ that contains only tests belonging to active regions that have not already been evaluated

\begin{equation}
  \label{eq:cand_test_set:all}
  \candTestSet = \set{\bigcup\limits_{i=1}^\numRegion \setst{\region_i}{P(\region_i | \obsOutcome) > 0}}
  \setminus \selTestSet
\end{equation}

We now examine the \algName test selection rule $\texttt{SelectTest}(\candTestSet, \biasVec, \obsOutcome)$ which can be simplified as  
\begin{equation}
\begin{aligned}
\label{eq:greedy_fdrd}
\test^*   &\in  \argmaxprob{\test \in \candTestSet}\;  \frac{ \gain{\drd}{\test}{\obsOutcome} }{\cost(\test)} \\
      &\in  \argmaxprob{\test \in \candTestSet}\; \frac{ \expect{\outcomeTest{\test}}{ \fdrd{ \obsOutcomeAdd{\test} } - \fdrd{\obsOutcome} \;|\; \obsOutcome } }{\cost(\test)}  \\
      &\in  \argmaxprob{\test \in \candTestSet}\; \frac{1}{\cost(\test)} \mathbb{E}_{\outcomeTest{\test}} \left[ 
       \prod\limits_{r=1}^\numRegion  
      \left(1 - \prod\limits_{i \in (\region_r \cap \selTestSet)} \Ind(\outcomeVarTest{i} = 1) \prod\limits_{j \in (\region_r \setminus \selTestSet)} \biasTest{j} \right) \right.\\
      & - \left. \left( \prod\limits_{r=1}^\numRegion 
      \left(1 - \prod\limits_{i \in (\region_r \cap \selTestSet \cup \test)} \Ind(\outcomeVarTest{i} = 1) \prod\limits_{j \in (\region_r \setminus \selTestSet \cup \test)} \biasTest{j} \right) \right)
      ( \biasTest{t}^{\outcomeTest{\test}} (1-\biasTest{t})^{1-\outcomeTest{\test}} )^{2\sum\limits_{k=1}^{m} \Ind(\test \in \region_k)} \right]
\end{aligned}
\end{equation}

Fig.~\ref{fig:canonical_drd_example} illustrates how \algName chooses different tests dependent on the bias vector $\biasVec$.

\begin{figure}[t]
    \centering
    \includegraphics[page=1,width=\textwidth]{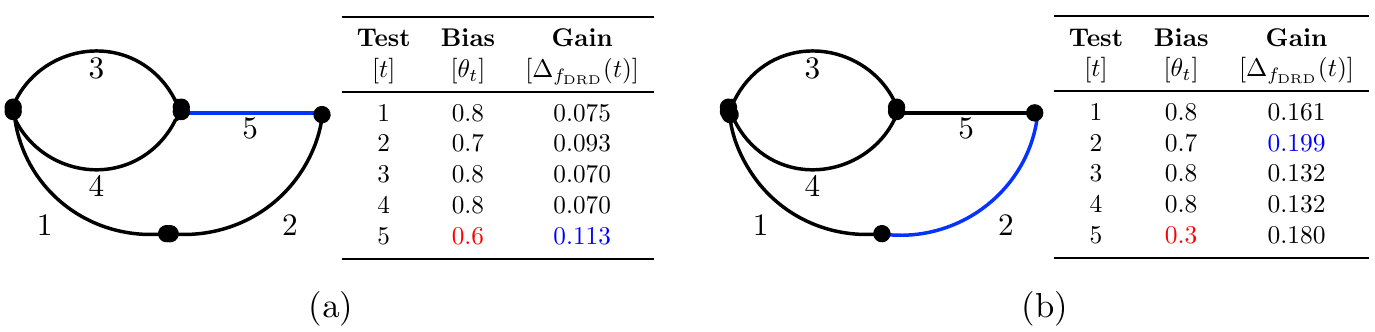}
    \caption{%
    \label{fig:canonical_drd_example}
    Canonical example illustrating \algName. In both scenarios, the paths remain the same but the bias vector $\biasVec$ varies. (a) Test $5$, which is common to $2$ paths. $\bias_5 = 0.6$ implies that $5$ is an informative test as its outcome not only affects the probability of a lot of paths, but it also has a slight likelihood of being collision free. Hence its gain is $0.113$. (b)  Setting $\bias_5 = 0.3$ reduces the likelihood of the test being true. Hence its no longer informative and instead test $2$ with gain $0.199$ is chosen.}
\end{figure}%

We now discuss the complexity of computing the marginal gain at each iteration. We have to cycle through $\numTest$ tests. For each tests, we only have to cycle through regions which it impacts. Let $\eta$ be the maximum number of regions that any test belongs to. For every region, we need to do an $O(1)$ operation of calculating the change in probability. Hence the complexity is $O(\numTest \eta)$. Note that this can be faster in practice by leveraging lazy methods in adaptive submodular problems (\citet{golovin2011adaptive}).

\subsection{Adaptively constraining test selection to most likely region}

We observe in our experiments that the surrogate (\ref{eq:fdrd_applied}) suffers from a slow convergence problem - $\fdrd{\obsOutcome}$ takes a long time to converge to $1$ when greedily optimized. This can be attributed to the curvature of the function.  To alleviate the convergence problem, we introduce an alternate candidate selection function $\texttt{SelectCandTestSet}(\obsOutcome)$ that assigns to $\candTestSet$ the set of all tests that belong to the most likely region $\maxProbTestSet$. We hence forth denote the constraint as \algMaxProbReg. It is evaluated as follows
\begin{equation}
  \label{eq:cand_test_set:maxp}
  \maxProbTestSet = \set{\argmaxprob{\region_i = \seq{\region}{\numRegion}}  \;P(\region_i | \obsOutcome) }
  \setminus \selTestSet
\end{equation}

Applying the constraint in (\ref{eq:cand_test_set:maxp}) leads to a dramatic improvement for any test selection policy as we will show in Sec.~\ref{sec:experiments:discussion}. The following theorem offers a partial explanation
\begin{theorem} \label{thm:max_prob}
A policy that greedily latches to a region according the the posterior conditioned on the region outcomes has a near-optimality guarantee of 4 w.r.t the optimal region evaluation sequence.
\end{theorem}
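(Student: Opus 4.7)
My plan is to recast the greedy region-latching policy as a classical stochastic sequencing problem and then bound its expected cost against the optimal ordering by a pairwise-exchange argument combined with a histogram-charging analysis.

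First, since the tests are independent Bernoullis and the policy is committed to one region at a time, I would model the execution as a sequence of region ``phases''. In each phase the chosen region is evaluated test-by-test (say by internal Smith's-rule ordering $(1-\theta_t)/c(t)$) until either all its tests succeed (the region is validated and the policy halts) or one test fails (the region is rejected and the policy moves on). The greedy-latching rule ``pick the region with largest posterior validity'' is then equivalent to ordering the regions in decreasing prior probability $p_i=\prod_{t\in R_i}\theta_t$. Writing $c_i$ for the expected cost of processing $R_i$, any region-order $\sigma$ incurs expected total cost
\[
C(\sigma)\;=\;\sum_{k}c_{\sigma_k}\prod_{j<k}(1-p_{\sigma_j}).
\]

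Second, I would use the standard adjacent-swap argument on $C(\sigma)$ to identify the optimum. Swapping adjacent indices changes $C$ by a quantity whose sign is that of $p_i c_j-p_j c_i$, so the minimizer $\sigma^{\star}$ of $C(\cdot)$ is Smith's rule: decreasing $p_i/c_i$. The greedy-latching policy orders only by $p_i$; this coincides with Smith's rule when costs are uniform and can disagree otherwise. To prove $C(\pi_{\text{greedy}})\le 4\,C(\sigma^{\star})$, I would run a histogram argument: represent both costs as stacked rectangles of height $c_{\sigma_k}$ and width $\prod_{j<k}(1-p_{\sigma_j})$, then morph the greedy histogram into the Smith histogram by a sequence of adjacent swaps, bounding each swap's loss and telescoping residual-probability losses. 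This yields the factor $4$ by the same template as the classical Feige--Lov\'asz--Tetali $4$-competitiveness proof for greedy min-sum set cover, where the two $2$'s come respectively from worst-case swap loss and from the mismatch between ordering by $p_i$ and by $p_i/c_i$.

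The main obstacle is twofold. First, pinning the constant at exactly $4$ requires a careful two-phase accounting rather than a crude inequality such as $c_{\sigma_k}\le C(\sigma^{\star})$, which would only give an unspecified constant; I would either adapt the histogram/area argument directly or exhibit a dual feasible solution of value $\tfrac{1}{4}C(\pi_{\text{greedy}})$. Second, when regions share tests, processing one region pre-pays for some tests of later regions, so the $c_i$ above is really an upper bound on the true incremental cost of phase $k$; I would handle this by noting that such savings only \emph{reduce} the expected cost of both greedy and OPT, so the bound $C(\pi)\le 4C^{\star}$ derived in the no-sharing case still dominates the adaptive instance. Independence of the Bernoulli tests is essential throughout: it is what makes the per-region posterior used by greedy factor cleanly into $\prod_{t\in R_i\setminus\mathcal{A}}\theta_t$ times an indicator on previously-evaluated tests, so the sequencing reduction is exact in expectation.
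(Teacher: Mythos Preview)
The paper's proof is a one-line reduction: treat each region as a single binary coordinate (``valid'' or ``invalid''), so the vector of region outcomes lives in $\{0,1\}^{m}$ with a \emph{general} joint distribution (regions share tests, hence are correlated). Greedy region-latching is then exactly the greedy coordinate search of Dor and Zwick (1998), who prove the factor-$4$ bound against the optimal adaptive search for arbitrary distributions. That citation is the whole argument.

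Your proposal takes a genuinely different route, but it has a structural gap. Your cost formula $C(\sigma)=\sum_{k}c_{\sigma_k}\prod_{j<k}(1-p_{\sigma_j})$ presupposes that the events ``$\mathcal{R}_j$ is invalid'' are mutually independent; otherwise the product $\prod_{j<k}(1-p_{\sigma_j})$ is not the probability that the first $k-1$ regions all failed. When regions are disjoint (so the $p_i$ really are independent and each region check has unit cost), your Smith-rule argument actually shows that greedy is \emph{optimal}, not merely $4$-competitive, so the bound is vacuous there. The entire content of the theorem lives in the correlated case, which your decomposition does not cover. The patch you sketch---``sharing tests only lowers both costs, so the no-sharing bound dominates''---does not go through: the optimal \emph{adaptive} policy can exploit the correlations revealed by a failed region check (e.g., a shared test coming up $0$ kills several other regions simultaneously) in ways a fixed order cannot, so OPT can drop far more than greedy does, and the ratio need not be preserved. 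Finally, the Feige--Lov\'asz--Tetali histogram argument is for min-sum set cover (minimize the sum of element cover-times), a different objective from ``minimize expected time to the first success''; the analogy would need to be made precise, and as stated it is not clear how to get exactly $4$ from it. If you want a self-contained proof rather than citing Dor--Zwick, you would need to reprove their result for general binary distributions, which is a considerably more delicate coupling/charging argument than the independent-coordinates case.
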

\begin{proof}
See Appendix \ref{sec:proof:max_prob}
\end{proof}

Applying the constraint in (\ref{eq:cand_test_set:maxp}) implies we are no longer greedily optimizing $\fdrd{\obsOutcome}$. However, the following theorem bounds the sub-optimality of this policy. 
\begin{theorem} \label{thm:sub_opt}
  Let $\pmin = \min_i P(\region_i)$, $\pminH = \min_{\hyp \in \hypSpace} P(\hyp)$ and $l = \max_i \abs{\region_i}$. The policy using (\ref{eq:cand_test_set:maxp}) has a suboptimality of $\alpha \left(2 \numRegion \log \left( \frac{1}{\pminH} \right) + 1 \right)$ where 
$\alpha \leq \left( 1 -   \max \left( (1 - \pmin)^2, \pmin^{\frac{2}{l}} \right) \right)^{-1}$.
\end{theorem}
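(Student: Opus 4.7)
The plan is to derive Theorem~\ref{thm:sub_opt} as a corollary of the standard near-optimality guarantee for $\alpha$-approximate greedy policies on adaptive submodular functions. By Lemma~\ref{lem:drd}, $\fdrd{\cdot}$ is strongly adaptive monotone and adaptive submodular, so the analysis behind Theorem~\ref{eq:drd_near_opt} (originally Theorem~11 of \citet{golovin2011adaptive}) yields the stronger statement that any policy which always selects a test whose marginal gain is at least a $1/\alpha$ fraction of the greedy maximum incurs expected cost at most $\alpha\bigl(2\numRegion \log(1/\pminH) + 1\bigr)$ times the cost of the optimum. It therefore suffices to show that restricting the candidate set to $\maxProbTestSet$ as in (\ref{eq:cand_test_set:maxp}) yields such an $\alpha$-approximate greedy policy with the claimed $\alpha$.

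The key lemma I would prove is that, for every reachable observation $\obsOutcome$,
\begin{equation*}
\max_{\test \in \maxProbTestSet}\; \gain{\drd}{\test}{\obsOutcome} \;\geq\; \tfrac{1}{\alpha}\; \max_{\test \in \testSet \setminus \selTestSet}\; \gain{\drd}{\test}{\obsOutcome}.
\end{equation*}
The structural fact I would exploit is the Noisy-OR form (\ref{eq:fdrd_applied}): writing $g_r := 1 - \feci{r}{\obsOutcome}$, we have $\fdrd{\obsOutcome} = 1 - \prod_r g_r$, and by (\ref{eq:fec_applied}) each $\feci{r}{\cdot}$ depends only on $\selTestSet \cap \region_r$. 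Consequently evaluating a test $\test$ perturbs $g_r$ only for regions $r$ containing $\test$, so $\gain{\drd}{\test}{\obsOutcome}$ factors as an untouched-region product $\prod_{r:\,\test \notin \region_r} g_r$ times the expected decrease in $\prod_{r:\,\test \in \region_r} g_r$. This lets me directly compare the gain of a test inside the most likely region $\region_{r^*}$ with that of a globally best test: a globally best test outside $\region_{r^*}$ carries the extra bounded factor $g_{r^*} \leq 1$, whereas a test inside $\region_{r^*}$ directly forces $g_{r^*}$ to shrink.

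The main obstacle will be converting this comparison into the explicit form $\alpha \leq \bigl(1 - \max((1-\pmin)^2,\pmin^{2/l})\bigr)^{-1}$. Using (\ref{eq:p_region_prune})--(\ref{eq:p_notregion_prune}), a single evaluation of $\test \in \region_{r^*}$ rescales the $B^2$ factor of $g_{r^*}$ by $\theta_\test^2$ on outcome $\outcomeTest{\test} = 1$ and by $(1-\theta_\test)^2$ on $\outcomeTest{\test} = 0$, so the expected multiplicative shrinkage of $g_{r^*}$ is controlled by $\max(\theta_\test^2,(1-\theta_\test)^2)$ for a suitably chosen $\test$. The constraints $P(\region_{r^*}) = \prod_{\test \in \region_{r^*}} \theta_\test \geq \pmin$ and $|\region_{r^*}| \leq l$ force a uniform bound: since every remaining $\theta$ factor is at most one, each $\theta_\test \geq \pmin$, giving $(1-\theta_\test)^2 \leq (1-\pmin)^2$; and by the AM--GM inequality some $\theta_\test$ satisfies $\theta_\test^2 \geq \pmin^{2/l}$. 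Taking whichever of the two regimes binds more tightly yields the stated $\alpha$, and substituting into the approximate-greedy cost bound closes the argument.
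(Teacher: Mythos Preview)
Your high-level scaffolding---reduce to the $\alpha$-approximate greedy guarantee of \citet{golovin2011adaptive} and then bound $\alpha$---matches the paper exactly. The divergence is in how $\alpha$ is bounded. The paper does not attempt a direct algebraic comparison of gains; instead it writes $\alpha$ as a ratio of residuals $\rho(t\mid\obsOutcome)$, constructs an explicit worst-case instance (the most-probable region $\region_1$ is isolated with probability exactly $\pmin$, while the remaining $\numRegion-1$ regions all share a single common test), and evaluates the residual in that instance. The two terms $(1-\pmin)^2$ and $\pmin^{2/l}$ then arise as the two extrema of the residual as a function of how the bias mass in $\region_1$ is distributed among its tests (all concentrated on one test versus spread uniformly over $l$ tests). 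The paper also bounds $\alpha$ against the \emph{minimum} gain over $\maxProbTestSet$, not the maximum.

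Your direct route has two concrete gaps. First, the claim that evaluating $\test\in\region_{r^*}$ only rescales the squared factor of $g_{r^*}$ is incomplete: in (\ref{eq:weight_sub_pruned}) the leading factor $\bigl(1-\prod_{j\in\region\setminus\selTestSet}\biasTest{j}\bigr)$ also changes when $\test$ moves from $\region\setminus\selTestSet$ to $\region\cap\selTestSet$, and this is exactly what produces the $(1-\beta)$ and $(1-\theta_\tau\beta)$ terms the paper must carry through in its residual calculation. Second, the AM--GM step points the wrong way for your purpose. To bound the residual you need $\max(\theta_\test^2,(1-\theta_\test)^2)$ to be \emph{small} for some $\test$, i.e.\ an \emph{upper} bound on $\theta_\test$; but AM--GM on $\prod_\test\theta_\test\geq\pmin$ only yields the \emph{lower} bound $\max_\test\theta_\test\geq\pmin^{1/|\region_{r^*}|}$, and even that is not $\geq\pmin^{1/l}$ in general since $|\region_{r^*}|\leq l$ makes the exponent go the wrong direction. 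The paper gets the $\pmin^{2/l}$ term not from a general inequality but by first pinning $P(\region_{r^*})=\pmin$ as the worst case and then setting all $l$ biases equal, which forces each to be exactly $\pmin^{1/l}$. Without that worst-case specialization your argument does not reach the stated form of $\alpha$.
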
 
\begin{proof}
See Appendix \ref{sec:proof:sub_opt}
\end{proof}

The complexity of \algName with \algMaxProbReg reduces since we only have to visit states belonging to the most probable path. Finding the most probable path is an $O(\numRegion)$ operation. Let $l$ be the maximum number of tests in a region. Hence the complexity of gain calculation is $O(l \eta)$. The total complexity is $O(l \eta + \numRegion)$.

\section{Heuristic approaches to solving Bernoulli DRD problem}
We propose a collection of competitive heuristics that can also be used to solve the \probBernDRD problem. These heuristics are various $\texttt{SelectTest}(\candTestSet, \biasVec, \obsOutcome)$ policies in the framework of Alg.~\ref{alg:drd_skeleton}. To simplify the setting, we assume unit cost $\cost(\test) = 1$ although it would be possible to extend these to nonuniform setting. We also state the complexity for each algorithm and summarize them in Table~\ref{tab:complexity}.

\subsection{\algRandom}
The first heuristic \algRandom selects a test by sampling uniform randomly
\begin{equation}
  \label{eq:policy_random}
  \optTest \in \candTestSet
\end{equation}
The complexity is $O(1)$.

\subsection{\algMaxTally}
We adopt our next heuristic \algMaxTally from \citet{dellin2016unifying} by where the test belonging to most regions is selected. This criteria exhibits a `fail-fast' characteristic where the algorithm is incentivized to eliminate options quickly. This policy is likely to do well where regions have large amounts of overlap on tests that are likely to be in collision. 

\begin{equation}
  \label{eq:policy_max_tally}
  \optTest   \in  \argmaxprob{\test \in \candTestSet}\; \sum\limits_{i=1}^{\numRegion} \Ind\left(\test \in \region_i, P(\region_i | \obsOutcome) > 0 \right)
\end{equation}

To evaluate the complexity, we first describe how to efficiently implement this algorithm. Note that we can pre-process regions and tests to create a tally count of tests belonging to regions and a reverse lookup from tests to regions. Hence selecting a tests is simply finding the test with the max tally which is $O(\numTest)$. If the test is in collision, the tally count is updated by looking at all regions the test affects, and visiting tests contained by those regions to reduce their tally count. Let $\eta$ be the maximum regions to which a test belongs, and $l$ be the maximum number of tests contained by a region. Hence the complexity is $O(\numTest + \eta l)$. In the \algMaxProbReg setting, the complexity reduces to $O(l + \eta l) = O((1 + \eta)l)$.

\subsection{\algSetCover}
The next policy \algSetCover selects tests that maximize the expected number of `covered' tests, i.e. if a test is in collision, how many more tests are eliminated.
\begin{equation}
  \label{eq:policy_set_cover}
  \optTest \in  \argmaxprob{\test \in \candTestSet}\; 
  (1 - \biasTest{\test})
  \abs{ 
  \set{\bigcup\limits_{i=1}^\numRegion \setst{\region_i}{P(\region_i | \obsOutcome) > 0} - 
  \bigcup\limits_{j=1}^\numRegion \setst{\region_j}{P(\region_j | , \substack{\obsOutcome, \\ \outcomeVarTest{\test} = 0} ) > 0}   }
  \setminus \set{\selTestSet \cup \set{\test}}
  } 
\end{equation}

The motivation for this policy has its roots in the question - what is the optimal policy for checking \emph{all} paths? While \probBernDRD requires identifying one feasible region, it might still benefit from such a policy in situations where only one region is feasible. The following theorem states that greedily selecting tests according to the criteria above has strong guarantees. 

\begin{theorem}\label{thm:set_cover}
\algSetCover is a near-optimal policy for the problem of optimally checking all regions. 
\end{theorem}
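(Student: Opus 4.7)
The plan is to cast ``optimally checking all regions'' as an adaptive stochastic set-cover problem and recognize \algSetCover as its greedy rule, from which near-optimality will follow from the log-approximation bound for greedy adaptive stochastic set cover.

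Precisely, I would introduce a progress function $f(\obsOutcome)$ equal to the number of tests $t' \in \testSet$ that no longer need to be evaluated under observations $\obsOutcome$---meaning either $t' \in \selTestSet$, or every region $\region_i \ni t'$ has already been eliminated by some failed test in $\obsOutcome$. Then $f(\obsOutcome) = \abs{\testSet}$ holds exactly when every region has been determined (either eliminated by a failed test or fully verified), so the original task is equivalent to a minimum-cost adaptive stochastic cover of $f$. The expected one-step gain of evaluating test $t$ decomposes as $\biasTest{t}\cdot 1 + (1-\biasTest{t})(1 + N_t)$, where $N_t$ counts the other still-unevaluated tests whose only surviving regions all contain $t$ and therefore all become eliminated when $t$ fails. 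Modulo a $t$-independent constant, this matches the quantity inside (\ref{eq:policy_set_cover}), so maximizing expected gain per unit cost recovers the \algSetCover rule.

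To conclude, I would verify adaptive monotonicity of $f$ (immediate, since a covered test remains covered under any extension) and the appropriate adaptive submodularity property, and then invoke Theorem~5.8 of \citet{golovin2011adaptive} to obtain $\cost(\policy_{SC}) \leq (\ln(\abs{\testSet}/\eta) + 1)\cdot \cost(\policyOpt)$, where $\eta$ is the smallest positive expected marginal gain, itself bounded below in terms of $\pminH$ and the region failure probabilities $1 - \prod_{t \in \region_i}\biasTest{t}$.

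The main obstacle is establishing adaptive submodularity of $f$. A direct pointwise argument actually fails: adding observations can eliminate \emph{alternative} containing regions of a test $t'$, which makes $t'$ more, not less, susceptible to being covered by a subsequent failed test. I would handle this either by (i) coarsening $f$ to count only eliminated regions rather than covered tests---this preserves adaptive submodularity cleanly and yields a greedy criterion that differs from (\ref{eq:policy_set_cover}) only by a constant factor bounded in terms of the maximum region size $l = \max_i \abs{\region_i}$, or by (ii) appealing to the dual-fitting / LP-rounding analysis of greedy adaptive set cover, which produces the same $O(\log)$ competitive ratio without requiring pointwise submodularity of the implicit objective. Route (ii) mirrors the standard near-optimality proof for adaptive greedy set cover and is the cleanest path; route (i) gives an explicit submodular surrogate at the price of a slightly looser constant.
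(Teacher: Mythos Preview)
Your approach is essentially the paper's: reduce ``check all regions'' to adaptive stochastic set cover, identify \algSetCover as the greedy rule, and invoke the logarithmic bound from \citet{golovin2011adaptive}. The paper's proof is exactly this, citing Theorem~15 of \citet{golovin2011adaptive} to get $\cost(\policy)\le\cost(\policyOpt)(\log\numTest+1)$.

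One point worth noting: the coverage function the paper actually uses is
\[
\hat f(\obsOutcome)=\Bigl|\selTestSet\cup\bigcup_{i:\,P(\region_i\mid\obsOutcome)=0}\region_i\Bigr|,
\]
i.e.\ a test is covered once it is evaluated or lies in \emph{at least one} invalidated region. This union-style coverage is adaptively submodular for the usual reason (marginal coverage can only shrink as more is covered), so the concern you raise about pointwise submodularity failing does not arise for the paper's $\hat f$. Your $f$ instead declares a test covered only when \emph{every} containing region is eliminated; that is the version for which your counterexample bites. The paper's choice sidesteps your routes (i) and (ii) entirely, at the cost of a slight mismatch between $\hat f=\numTest$ and the stated termination condition---a wrinkle the paper does not address but which you might want to comment on if you adopt its function.
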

\begin{proof}
See Appendix \ref{sec:proof:set_cover}
\end{proof}
We now analyze the complexity. We have to visit every test. Given a test is in collision, we have to compute the number of tests in the remaining regions which are not invalid. This would require visiting every test in every region. Hence the complexity is  $O(\numTest^2 \numRegion)$. In the \algMaxProbReg setting, the complexity reduces to $O(l \numRegion \numTest)$, where $l$ is the maximum number of tests contained by a region.

\subsection{\algMVOI}

The last baseline is a classic heuristic from decision theory: myopic value of information \citet{howard1966information}. 
We define a utility function $U(\hyp, \regionH) $ which is $1$ if $\hyp \in \regionH$ and $0$ otherwise. The utility of $\hypSpaceR(\obsOutcome)$ corresponds to the maximum expected utility of any decision region, i.e., the expected utility if we made a decision now. \algMVOI greedily chooses the test that maximizes (in expectation over observations) the utility as shown.

\begin{equation}
  \label{eq:policy_mvoi}
  \optTest   \in  \argmaxprob{\test \in \maxProbTestSet}\; (1 - \biasTest{\test}) \max\limits_{i = 1, \dots, \numRegion} P(\region_i \;|\; \obsOutcome, \outcomeVarTest{\test} = 0)
\end{equation}

Note that this test selection works only in the \algMaxProbReg setting. For every test in the most probable region, we eliminate regions that would invalid if the test is invalid. Let $l$ be the maximum number of tests contained by a region. Let $\eta$ be the maximum number of regions contained by a test. Then the complexity is $O(\eta l)$. 

\begin{table}[t]
\centering
\caption{Complexity of different algorithms (number of tests $\numTest$, number of regions $\numRegion$, maximum tests in a region $l$ and maximum regions belonging to a test $\eta$ )}
\begin{tabulary}{\textwidth}{LCCCCC}\toprule
                  & {\bf \algMVOI}       & {\bf \algRandom}       & {\bf \algMaxTally}       & {\bf \algSetCover}           & {\bf \algName} \\ \midrule
 Unconstrained    &                      & $O(1)$                 & $O(\numTest + \eta l)$   & $O(\numTest^2 \numRegion)$   & $O(\numTest \eta)$              \\
  MaxProbReg      &   $O(\eta l)$        & $O(1)$                 & $O((1 + \eta)l)$         & $O(l \numRegion \numTest)$   & $O(\eta l + \numRegion)$                 \\ \bottomrule
\end{tabulary}
\label{tab:complexity}
\end{table}

\section{Experiments}
\label{sec:experiments}
We evaluate all algorithms on a spectrum of synthetic problems, motion planning problems and experimental data from an autonomous helicopter. We present details on each dataset - motivation, construction of regions and tests and analysis of results. Table~\ref{tab:benchmark_results} presents the performance of all algorithms on all datasets. It shows the normalized cost with respect to algorithm $\algName$ $\policy_{\drd}$, i.e. $\frac{\cost(\policy) - \cost(\policy_{\drd}) }{\cost(\policy_{\drd})}$. The $95\%$ confidence interval value is shown (as a large number of samples are required to drive down the variance).
Finally, in Section \ref{sec:experiments:discussion}, we present a set of overall hypothesis and discuss their validity.

\subsection{Dataset 1: Synthetic Bernoulli Test}
\label{sec:dataset:bern}
\subsubsection{Motivation}
These datasets are designed to check the general applicability of our algorithms on problems which do not arise from graphs. Hence regions and tests are randomly created with minimal constraints that ensure the problems are non-trivial. 

\subsubsection{Construction}
First, a boolean region to test allocation matrix $\regTest \in \{0,1\}^{\numRegion \times \numTest}$ is created where $\regTest(i,j) = 1$ implies whether test $j$ belongs to region $\region_i$. $\regTest$ is randomly allocated by ensuring that each region $\region_i$ contains a random subset of tests. The number of such tests $l_i$ varies with region and is randomly sampled uniformly from $[0.05 \numTest, 0.10 \numTest]$. The bias vector $\biasVec \in \real^{1 \times \numTest}$ is sampled uniformly randomly from $[0.1, 0.9]$. A set of $\dataTest$ problems are created by sampling a ground truth $\groundtruth$ from $\biasVec$, and ensuring that at least one region is valid in each problem.

We set $\numTest = 100$ and $\dataTest = 100$. We create $3$ datasets by varying the number of regions $\numRegion = \{100, 500, 1000\}$. This is to investigate the performance of algorithms as the overlap among regions increase. 

\subsubsection{Analysis}
Table~\ref{tab:benchmark_results} shows the results as regions are varied. Among the unconstrained algorithms, \algName outperforms all other algorithms substantially with the gap narrowing on the Large dataset. For the \algMaxProbReg versions, \algName remains competitive across all datasets. \algMVOI matches its performance, doing better on dataset Large ($\numRegion = 1000$). 
From these results, we conclude that the datasets favour myopic behaviour. The performance of \algMVOI increases monotonically with $\numRegion$. This can be attributed to the fact that as the number of probable regions increase, myopic policies tend to perform better.

\begin{figure}[t]
    \centering
    \includegraphics[page=1,width=\textwidth]{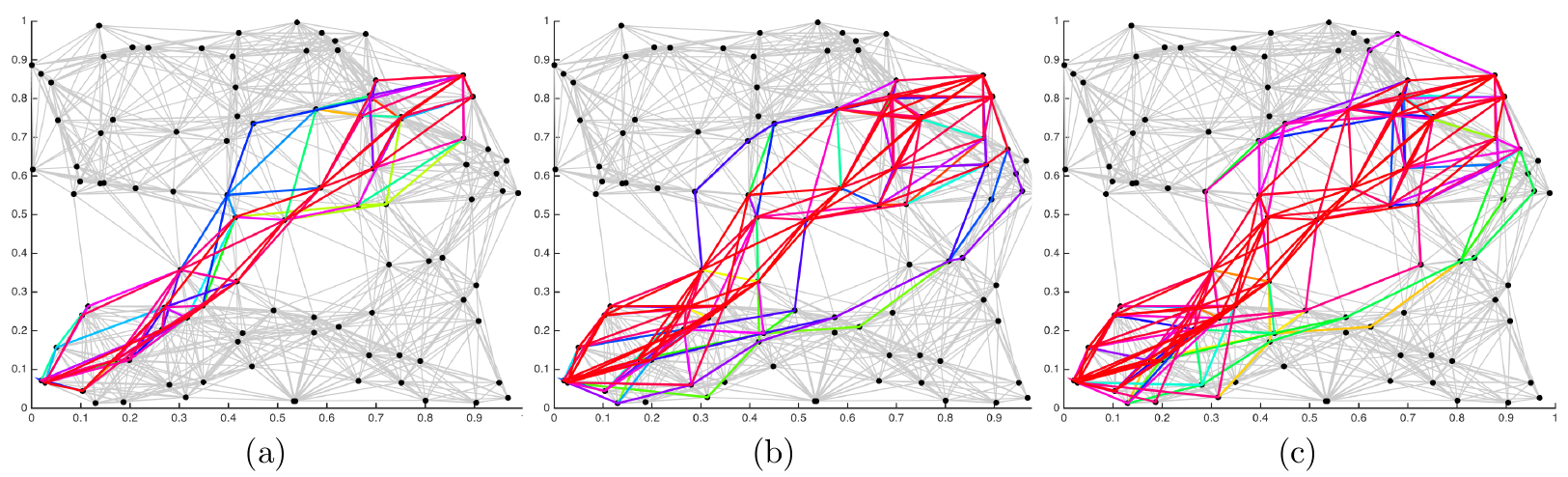}
    \caption{%
    \label{fig:synthetic_gbg}
    Construction of candidate path library $\PathSet$ for synthetic GBG experiments. The paths are embedded in an underlying RGG. (a) 100 paths (b) 500 paths (c) 1000 paths 
    }
\end{figure}%

\subsection{Dataset 2: Synthetic Generalized Binomial Graph}
\label{sec:dataset:gbg}
\subsubsection{Motivation}
These datasets are designed to test algorithms on GBG which do not necessarily arise out of motion planning problems.
For these datasets, edge independence is directly enforced. 
Difference between results on these datasets and those from motion planning can be attributed to spatial distribution of obstacles and overlap among regions.  

\subsubsection{Construction}
A randomg geometric graph (RGG) \cite{penrose2003random}  $\explicitGraph=(\vertexSet, \edgeSet)$ with $100$ vertices is sampled in a unit box $[0,1]\times[0,1]$. We create a set of paths $\PathSet$ from this graph by solving a set of shortest path problems (SPP). In each iteration of this algorithm, edges from $\explicitGraph$ are randomly removed with probability $0.5$ and the SPP is solved to produce $\Path$. This path is then appended to $\PathSet$ (if already not in the set) until $\abs{\PathSet} = \numRegion$. A bias vector $\biasVec \in \real^{1 \times \abs{\edgeSet}}$ is sampled uniformly randomly from $[0.1, 0.9]$. 

We create $3$ datasets by varying the number of paths $\numRegion = \{100, 500, 1000\}$. For each dataset, we create $\dataTest = 100$ problems. Fig.~\ref{fig:synthetic_gbg} shows the paths for these datasets. 

\subsubsection{Analysis}
Table~\ref{tab:benchmark_results} shows the results as the number of paths is increased. Among the unconstrained algorithms, \algMaxTally does better than \algName when $\numRegion$ is small. As $\numRegion$ increases, \algName outperforms all others and even matches up to its \algMaxProbReg version. This can be attributed to the fact that when $\numRegion$ is small, most of the paths pass through `bottleneck edges'. \algMaxTally inspects these edges first and if they are in collision, eliminates options quickly. As $\numRegion$ increases, the fraction of overlap decreases and problems become harder. For these problems, simply checking the most common edge does not suffice.

For the \algMaxProbReg version, we see that \algMaxTally has better overall performance. Thus we conclude that the combination of checking the most common edge and constraining to the most probable path works well. The difference between these datasets and Section~\ref{sec:dataset:bern} is that the region test allocation appears naturally from the graph structure. This leads to problems where `bottleneck edges' exist and \algMaxTally is able to identify them. Its interesting to note that \algMVOI performs worse as $\numRegion$ increases. This is because of the optimistic nature of \algMVOI - its less likely to select an edge that eliminates a lot of high probability regions (contrary to \algMaxTally). Hence the contrast between the two algorithms is displayed here.
Fig.~\ref{fig:illus_syn_gbg} shows an illustration of \algName selecting edges to solve a problem for $\numRegion = 50$.

\begin{figure}[t]
    \centering
    \includegraphics[page=1,width=\textwidth]{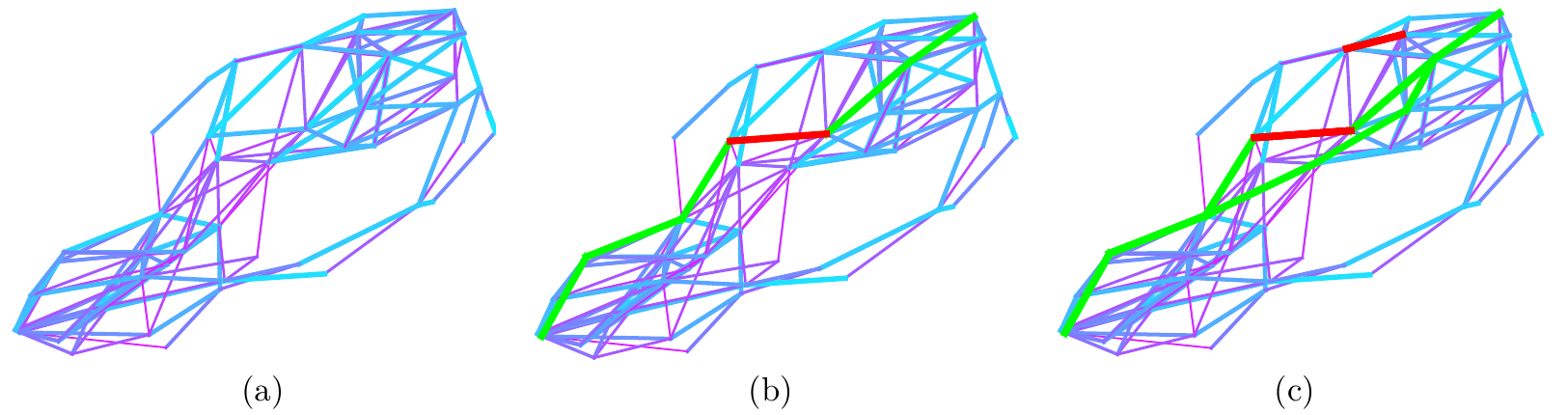}
    \caption{%
    \label{fig:illus_syn_gbg}
    Example of \algName as applied to a synthetic GBG problem. The GBG is shown with edges colored from magenta (thin) to cyan (thick) according to the prior likelihoods. (a) Initial state of the problem (b) \algName selects the most probable path and checks all its edges till it encounters and edge in collision in the middle of the path (c) It then looks at alternates till it discovers a valid short cut to connect the first and second half of the path}
\end{figure}%

\subsection{Dataset 3: 2D Geometric Planning} 
\label{sec:dataset:2dgeom}
\subsubsection{Motivation}
The main motivation for our work is robotic motion planning. The simplest instantiation is 2D geometric planning. The objective is to plan on a purely geometric graph where edges are invalidated by obstacles in the environment. Hence the probability of collision appears from the chosen distribution of obstacles. While the independent Bernoulli assumption is not valid, we will see that the algorithms still leverage such a prior to make effective decisions.

\subsubsection{Construction}
A random geometric graph (RGG) \cite{penrose2003random} $\explicitGraph = (\vertexSet, \edgeSet)$ with $\abs{\vertexSet} = 200$ is sampled in a unit box $[0,1]\times[0,1]$. We define a world map $\mathcal{M}$ as a binary map of occupied and unoccupied cells. Given $\explicitGraph$ and a $\mathcal{M}$, and edge $\edge \in \edgeSet$ is said to be in collision if it passes through an unoccupied cell. Fig.~\ref{fig:illus_explicit_graph}(a) shows an example of a collision checked RGG.
A parametric distribution can be used to create a distribution over world maps $P(\mathcal{M})$ which defines different environments. $P(\mathcal{M})$ can be used to measure the probability of individual edges being in collision. 

\begin{figure}[t]
    \centering
    \includegraphics[page=1,width=\textwidth]{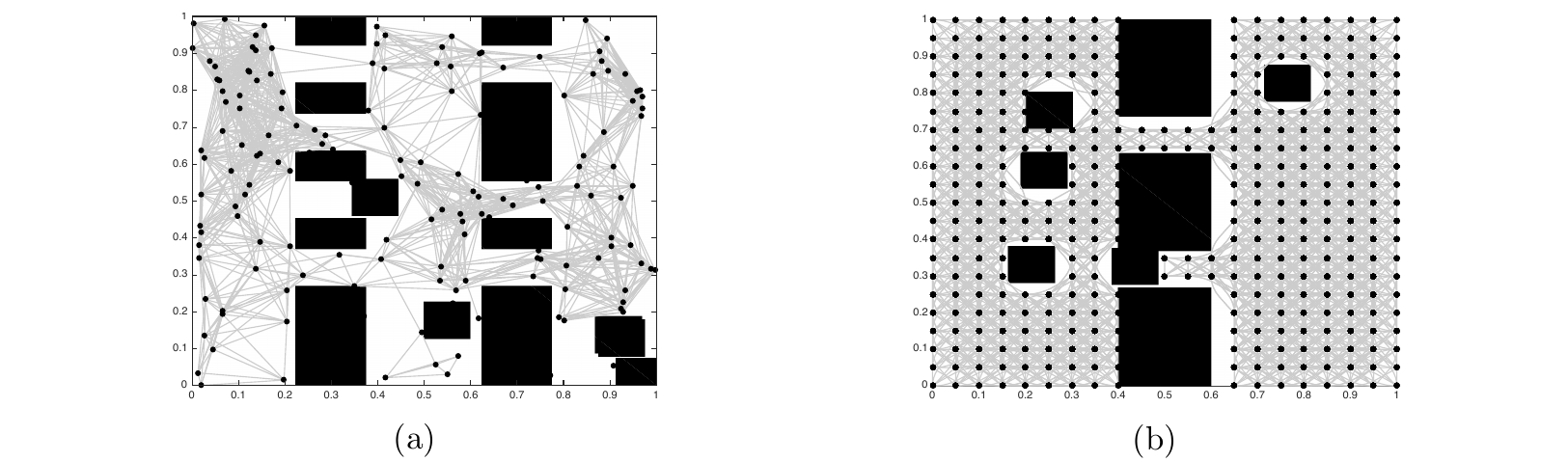}
    \caption{%
    \label{fig:illus_explicit_graph}
   Different explicit graphs for different problem settings (a) A RGG for 2D geometric planning (b) A state lattice for non-holonomic planning.
    }
\end{figure}%

\begin{figure}[t]
    \centering
    \includegraphics[page=1,width=\textwidth]{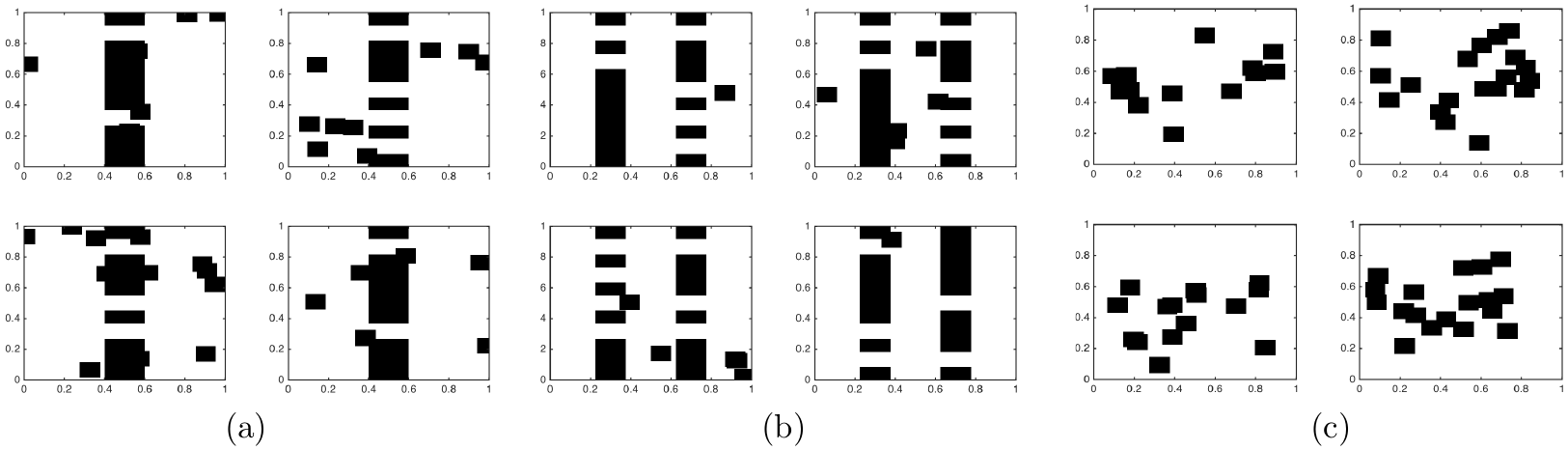}
    \caption{%
    \label{fig:2d_world_maps}
    Different datasets of environments (a) OneWall (b) TwoWall (c) Forest
    }
\end{figure}%

We create $3$ datasets corresponding to different environments as shown in Fig.~\ref{fig:2d_world_maps} - Forest, OneWall, TwoWall. These datasets are created by defining parametric distributions that distribute rectangular blocks. Forest corresponds to a non uniform stationary distribution of squares to mimick a forest like environment where trees are clustered together with spatial correlations. OneWall is created by constructing a wall with random gaps in conjunction with a uniform random distribution of squares. TwoWall contains two such walls. Hence these datasets create a spectrum of difficulty to test our algorithms. 

We now describe the method for constructing the set of paths $\PathSet$. We would like a set of good candidate paths on the distribution $P(\mathcal{M})$. We define a goodness function as the probability of atleast one path in the set to be valid on the dataset. Following the methodology in \citet{tallavajhula2016list}, we use a greedy method. We sample a training dataset consisting of $\dataTrain = 1000$ problems. On every problem in this dataset, we solve the shortest path problem to get a path $\Path$. We then greedily construct $\PathSet$ by selecting the path that is most valid till our budget $\numRegion$ is filled. We set $\numRegion = 500$ for all datasets.

\subsubsection{Analysis}
Table~\ref{tab:benchmark_results} shows the results on all 3 datasets. In the unconstrained case, \algName outperforms all other algorithms by a significant margin. For the \algMaxProbReg version, \algName remains competitive. The closest competitor to it is \algSetCover - matching performance in the TwoWall dataset. Further analysis of this dataset revealed that the dataset has problems that are difficult - where only one of the paths in the set are feasible. This often requires eliminating all other paths. \algSetCover performs well under such situations due to guarantees described in Theorem \ref{thm:set_cover}. 

These results vary from the patterns in Section \ref{sec:dataset:gbg}. This is to do with the relationship with overlap of regions and priors on tests. Since the regions are created in a way cognizant of the prior, regions often overlap on tests that are likely to be free with high probability. \algMaxTally ignores this bias term and hence prioritizes checking such edges first even if they offer no information. 

Table~\ref{tab:benchmark_results} also shows results on varying the number of regions. \algName is robust to this change. \algSetCover performs better with less number of paths. This can be attributed to the path that the number of feasible path decreases, thus becoming advantageous to check all paths. 

Fig.~\ref{fig:2d_env_comparison} shows a comparison of all algorithms on a problem from OneWall dataset. It illustrates the contrasting behaviours of all algorithms. \algMaxTally selects edges belonging to many paths which happens to be near the start / goal. These are less likely to be discriminatory. \algSetCover takes time to converge as it attempts to cover all edges. \algMVOI focuses on edges likely to invalidate the current most probable path which eliminates paths myopically but takes time to converge. \algName enjoys the best of all worlds. 

\begin{figure}[t]
    \centering
    \includegraphics[page=1,width=\textwidth]{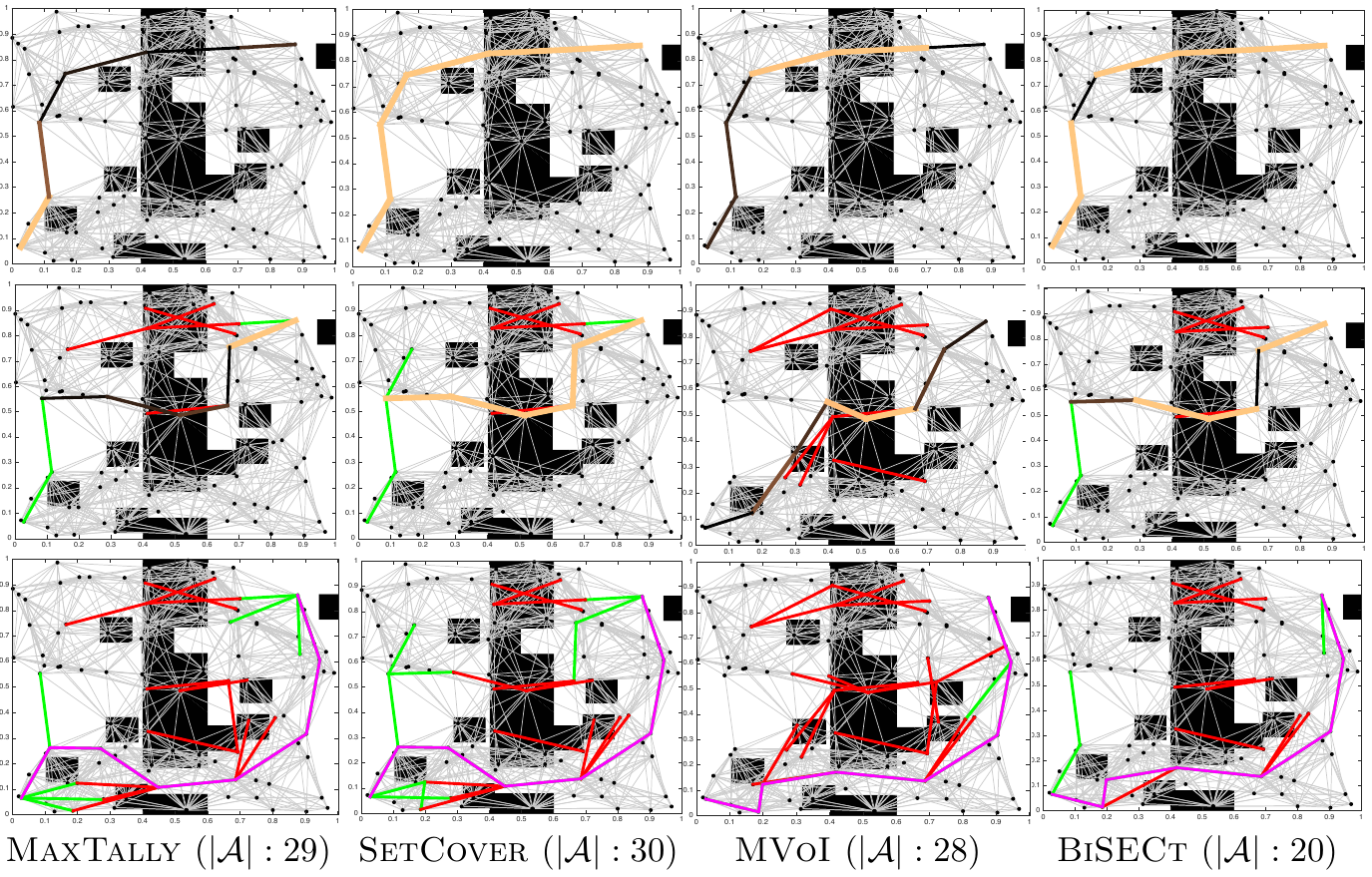}
    \caption{%
    \label{fig:2d_env_comparison}
    Performance (number of evaluated edges) of all algorithms on 2D geometric planning. Snapshots are at interim and final stages respectively show evaluated valid edges (green), invalid (red) edges and final path (magenta). The marginal gain of candidate edges goes from black (low) to cream (high).}
\end{figure}%

\subsection{Dataset 4: Non-holonomic Path Planning}

\subsubsection{Motivation}
While 2D geometric planning examined the influence of various spatial distribution of obstacles on random graphs, it does not impose a constraint on the class of graphs. Hence we look at the more practical case of mobile robots with constrained dynamics. This robots plan on a state-lattice (\citet{pivtoraiko2009differentially}) - a graph where edges are dynamically feasible maneuvers. As motivated in Section ~\ref{sec:intro}, these problems are of great importance as a robot has to react fast to safely avoid obstacles. The presence of differential constraint reduces the set of feasible paths, hence requiring checks at a greater resolution. 

\subsubsection{Construction} 
The vehicle being considered is a planar curvature constrained system. Hence the search space is 3D - x, y and yaw. A state lattice of dynamically feasible maneuvers is created as shown in Fig.~\ref{fig:illus_explicit_graph}(b). The environments are used from Section ~\ref{sec:dataset:2dgeom} - Forest and OneWall. The density of obstacle in these datasets are altered to allow constrained system to find solutions. The candidate set of paths are created in a similar fashion as in Section.~\ref{sec:dataset:2dgeom}. We set $\numRegion \approx 100$ for all datasets.

\subsubsection{Analysis}
Table~\ref{tab:benchmark_results} shows results across datasets. In the unconstrained setting, \algName significantly outperforms other algorithms. In the \algMaxProbReg setting, we see that \algSetCover is equally competitive. The analysis of the Forest dataset reveals that due to the difficulty of the dataset, problems are such that only one of the paths is free. As explained in Section \ref{sec:dataset:2dgeom}, \algSetCover does well in such settings. On the OneWall dataset, we see several algorithms performing comparatively. This might indicate the relative easiness of the dataset. 

Table~\ref{tab:benchmark_results} shows variation across degree of the lattice. We see that \algName remains competitive across this variation. 

\subsection{Dataset 5: 7D Arm Planning}
\label{sec:dataset}
\subsubsection{Motivation}
An important application for efficient edge evaluations is planning for a 7D arm. Edge evaluation is expensive geometric intersection operations are required to be performed to ascertain validity. A detailed motivation is provided in \citet{dellin2016unifying}. Efficient collision checking would allow such systems to plan quickly while performing tasks such as picking up and placing objects from one tray to another. One can additionally assume an unknown agent present in the workspace. Such problems would benefit from reasoning using priors on edge validity. 

\subsubsection{Construction}
A random geometric graph with $7052$ vertices and $16643$ edges is created (as described in \citet{dellin2016unifying}). Edges in self-collision are prune apriori. We create $2$ datasets to simulate pick and place tasks in a kitchen like environment. The start and goal from all problem is from one end-effector position to another. The first dataset - Table - comprises simply of a table at random offsets from the robot. The location of the table invalidates large number of edges. The second dataset - Clutter - comprises of an object and table at random offsets from the robot. In all datasets, a random subset corresponding to $0.3$ fraction of free edges are `flipped', i.e. made to be in collision. This creates the effect of random disturbances in the environment. Paths are created in a similar way as Section \ref{sec:dataset:2dgeom}. 
We set $\numRegion \approx 200$ for all datasets. Fig.~\ref{fig:herb} shows an illustration of the problems. 

\begin{figure}[t]
    \centering
    \includegraphics[page=1,width=\textwidth]{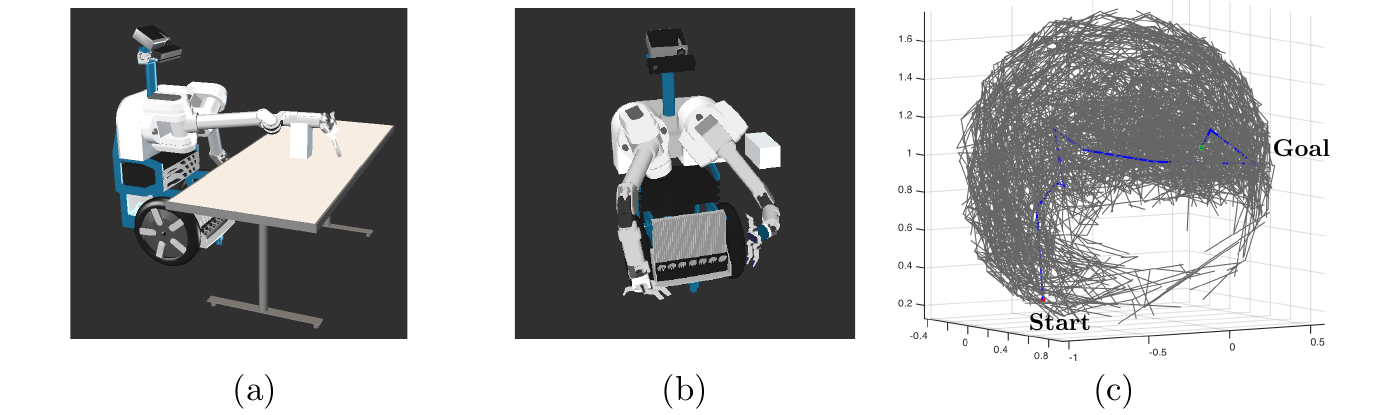}
    \caption{%
    \label{fig:herb}
    7D arm planning dataset (a) Snapshot of the manipulator for planning with a table (b) Snapshot of manipulator planning with an object (c) The explicit graph shown as straight line connections between end effector locations (also subsampled $50\%$). The start and goal end effector locations are also shown. Edges in collision are removed. 
    }
\end{figure}%

\subsubsection{Analysis}
Table~\ref{tab:benchmark_results} shows results across datasets. In both the unconstrained and \algMaxProbReg setting, \algName significantly outperforms other algorithms. \algMaxTally in the \algMaxProbReg is the next best performing policy. This suggests that the dataset might lead to bottleneck edges - edges through which many paths pass through that can be in collision. Further analysis reveals, this artifact occurs due to the random disturbance. \algMaxTally is able to verify quickly if such bottleneck edges are in collision, and if so remove a lot of candidate paths from consideration. 

\begin{figure}[t]
    \centering
    \includegraphics[page=1,width=\textwidth]{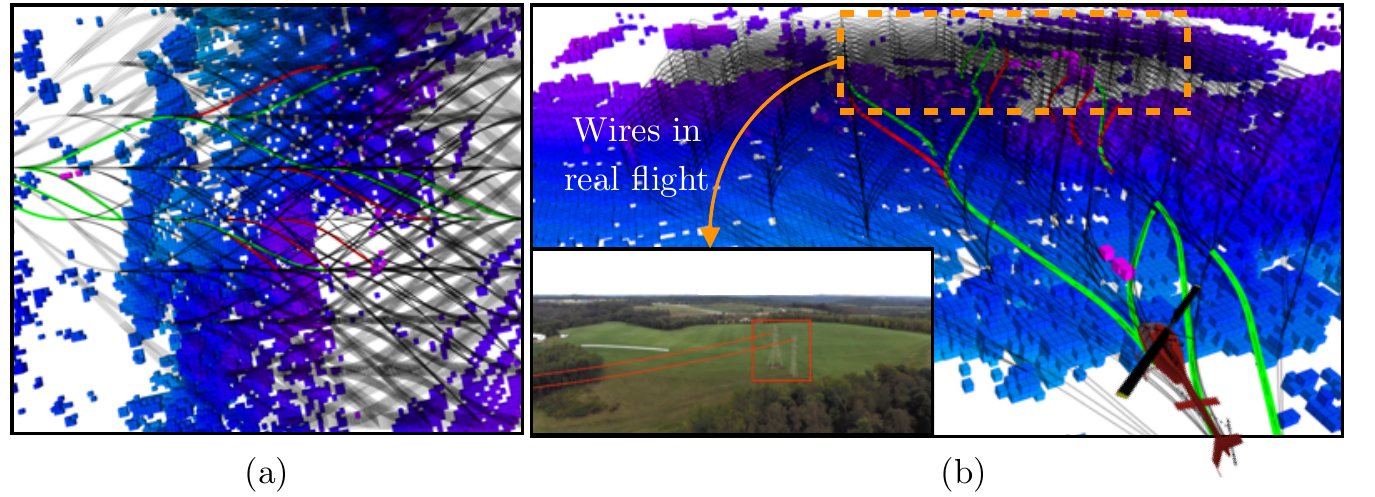}
    \caption{%
    \label{fig:heli}
    Experimental data from a full scale helicopter that has to avoid wires as it comes into land. The helicopter detects wires fairly late which requires an instant avoidance maneuver. The helicopter uses a state lattice and has to quickly identify a feasible path on the lattice. Evaluating edges are expensive since the system has to ensure it avoids wires by a sufficient clearance.
    (a) A top down view of the state lattice. Maneuvers are lateral as well as vertical.
    (b) Performance of \algName on the motion planning problem. The voxels in blue represent occupied locations in the world as detected by the helicopter. The wires (as seen in the camera) appear as a small set of voxels in the map. \algName selectively evaluates certain edges of the state lattice (green shows edges evaluated to be valid, red shows edges evaluated to be invalid). It is quickly able to identify a feasible path. 
    \fullFigGap}
\end{figure}%

\subsection{Autonomous Helicopter Wire Avoidance}
We now evaluate our algorithms on experimental data from a full scale helicopter. The helicopter is equipped with a laser scanner that scans the world to build a model of obstacles and free space. The system is required to plan around detected obstacles as it performs various missions.

A particularly difficult problem is dealing with wires as the system comes in to land. The system has limitations on how fast it can ascend / descend. Hence it has to not only react fast, but determine which direction to move so as to feasibly land. Fig.~\ref{fig:heli} shows the scenario. In this domain, edge evaluation is expensive because given an edge, it must be checked at a high resolution to ensure it is as sufficient distance from an obstacle. 

Fig.~\ref{fig:heli} (b) shos how \algName evaluates informative edges to identify a feasible path. This algorithm uses priors collected in simulation of wire like environments. 

\begin{table}[!htpb]
\small
\centering
\caption{Normalized cost of different algorithms on different datasets ($95\%$ C.I.) }
\begin{tabulary}{\textwidth}{LCCCCC}\toprule
       & {\bf \algMVOI}       & {\bf \algRandom}       & {\bf \algMaxTally}       & {\bf \algSetCover}    & {\bf \algName} \\ 
       &                      & {Unconstrained}    & {Unconstrained}   & {Unconstrained}   & {Unconstrained}               \\
       &                      & {MaxProbReg}        & {MaxProbReg}      & {MaxProbReg}      & {MaxProbReg}                 \\ \midrule
\multicolumn{6}{c}{ {\bf Synthetic Bernoulli Test: Variation across region overlap} }   \\
Small
            &                   & $(4.18, 6.67)$  & $(3.49, 5.23)$  & $(1.77, 3.01)$  & $(1.42, 2.36)$    \\
$\numRegion:100$
            & $(0.00, 0.08)$     & $(0.12, 0.29)$  & $(0.12, 0.25)$  & $(0.18, 0.40)$  & $\red(0.00, 0.00)$  \\ 
Medium
            &                   & $(3.27, 4.40)$  & $(3.04, 4.30)$  & $(3.55, 4.67)$  & $(1.77, 2.64)$   \\
$\numRegion:500$  
            & $\red(0.00, 0.00)$& $(0.05, 0.25)$  & $(0.14, 0.24)$  & $(0.14, 0.33)$  & $\red(0.00, 0.00)$  \\ 
Large
            &                   & $(2.86, 4.26)$  & $(2.62, 3.85)$  & $(2.94, 3.71)$  & $(1.33, 1.81)$   \\
$\numRegion:1000$
            & $\red(-0.11, 0.00)$& $(0.00, 0.28)$ & $(0.06, 0.26)$  & $(0.09, 0.22)$  & $(0.00, 0.00)$   \\
\multicolumn{6}{c}{ {\bf Synthetic Bernoulli Test: Variation across region overlap} }   \\
Small
            &                   & $(6.08, 7.25)$  & $(0.68, 1.50)$  & $(2.12, 2.50)$  & $(1.27, 1.50)$    \\
$\numRegion:100$
            & $(0.00, 0.00)$     & $(0.00, 0.00)$  & $\red(-0.13, -0.11)$  & $(0.13, 0.14)$  & $(0.00, 0.00)$  \\ 
Medium
            &                   & $(6.51, 8.53)$  & $(0.12, 0.51)$  & $(1.43, 1.75)$  & $(0.15, 0.46)$   \\
$\numRegion:500$  
            & $(0.00, 0.11)$    & $(0.00, 0.11)$  & $(0.00, 0.09)$  & $\red(-0.04, 0.07)$  & $\red(0.00, 0.00)$  \\ 
Large
            &                   & $(9.65, 11.67)$ & $(0.63, 1.18)$        & $(2.24, 2.89)$  & $(0.31, 0.63)$   \\
$\numRegion:1000$
            & $(0.13, 0.24)$    & $(0.00, 0.11)$  & $\red(-0.13, -0.07)$  & $(0.11, 0.13)$  & $(0.00, 0.00)$   \\
\multicolumn{6}{c}{ {\bf 2D Geometric Planning: Variation across environments} }   \\
Forest    &                   & $(19.45, 27.66)$  & $(4.68, 6.55)$  & $(3.53, 5.07)$   & $(1.90, 2.46)$    \\
          & $(0.03, 0.18)$    & $(0.13, 0.30)$    & $(0.09, 0.18)$  & $(0.00, 0.09)$   & $\red(0.00, 0.00)$   \\ 
OneWall   &                   & $(13.35, 17.79)$  & $(4.12, 4.89)$   & $(1.36, 2.11)$    & $(0.76, 1.20)$    \\ 
          & $(0.045, 0.21)$   & $(0.11, 0.42)$    & $(0.00, 0.12)$   & $(0.14, 0.29)$    & $\red(0.00, 0.00)$  \\ 
TwoWall   &                   & $(13.76, 16.61)$  & $(2.76, 3.93)$   & $(2.07, 2.94)$    & $(0.91, 1.44)$    \\
          & $(0.00, 0.09)$    & $(0.33, 0.51)$    & $(0.10, 0.20)$   & $\red(0.00, 0.00)$    & $\red(0.00, 0.00)$\\
\multicolumn{6}{c}{ {\bf 2D Geometric Planning: Variation across region size} }   \\
OneWall    &                  & $(12.06, 16.01)$  & $(4.47, 5.13)$  & $(2.00, 3.41)$   & $(0.94, 1.42)$    \\
$\numRegion:300$
          & $(0.00, 0.17)$    & $(0.12, 0.42)$    & $(0.06, 0.24)$  & $(0.00, 0.38)$   & $\red(0.00, 0.00)$  \\ 
OneWall   &                   & $(13.26, 16.79)$  & $(2.18, 3.77)$   & $(1.04, 1.62)$    & $(0.41, 0.91)$    \\ 
$\numRegion:858$ 
          & $(0.00, 0.14)$   & $(0.09, 0.27)$    & $\red(-0.04, 0.08)$   & $(0.00, 0.14)$    & $\red(0.00, 0.00)$  \\ 
\multicolumn{6}{c}{ {\bf Non-holonomic Path Planning: Variation across environments} }   \\
Forest    &                   & $(22.38, 29.67)$  & $(9.79, 11.14)$    & $(2.63, 5.28)$ & $(1.54, 2.46)$  \\
          & $(0.09, 0.18 )$   & $(0.46, 0.79)$    & $(0.25, 0.38)$     & $\red(0.00, 0.00)$ & $\red(0.00, 0.00)$\\ 
OneWall   &                   & $(13.02, 15.75)$  & $(8.40, 11.47)$    & $(3.72, 4.54)$   & $(3.28, 3.78)$    \\ 
          & $\red(-0.11, 0.11)$& $(0.00, 0.12)$    & $(0.21, 0.28)$    & $\red(-0.11, 0.11)$  & $\red(0.00, 0.00) $ \\ 
\multicolumn{6}{c}{ {\bf Non-holonomic Path Planning: Variation across lattice degree} }   \\
OneWall    &                    & $(10.46, 11.57)$  & $(3.95, 4.83)$  & $(0.83, 1.18)$   & $(0.24, 0.58)$    \\
$\degree:12$ & $(0.04, 0.11)$   & $(0.30, 0.56)$    & $(0.11, 0.18)$  & $(0.00, 0.06)$   & $\red(0.00, 0.00)$  \\ 
OneWall    &                    & $(14.97, 17.90)$  & $(9.19, 13.11)$ & $(3.22, 5.07)$   & $(2.16, 2.81)$    \\
$\degree:30$ & $(0.05, 0.10)$   & $(0.14, 0.40)$    & $(0.20, 0.52)$  & $(0.00, 0.03)$   & $\red(0.00, 0.00)$\\ 
\multicolumn{6}{c}{ {\bf 7D Arm Planning: Variation across environments} }   \\ 
Table     &                   & $(15.12, 19.41)$  & $(4.80, 6.98)$      & $(1.36, 2.17)$      & $(0.32, 0.67)$    \\ 
          & $(0.28, 0.54)$    & $(0.13, 0.31)$    & $(0.00, 0.04)$      & $(0.00, 0.11)$      & $\red(0.00, 0.00)$  \\ 
Clutter   &                   & $(7.92, 9.85)$    & $(3.96, 6.44)$      & $(1.42, 2.07)$      & $(1.23, 1.75)$    \\
          &$(0.02, 0.20)$     & $(0.14, 0.36)$    & $\red(0.00, 0.00)$  & $(0.00, 0.11)$      & $\red(0.00, 0.00)$ \\ \bottomrule
\end{tabulary}
\label{tab:benchmark_results}
\end{table}

\begin{figure}[t]
    \centering
    \includegraphics[page=1,width=\textwidth]{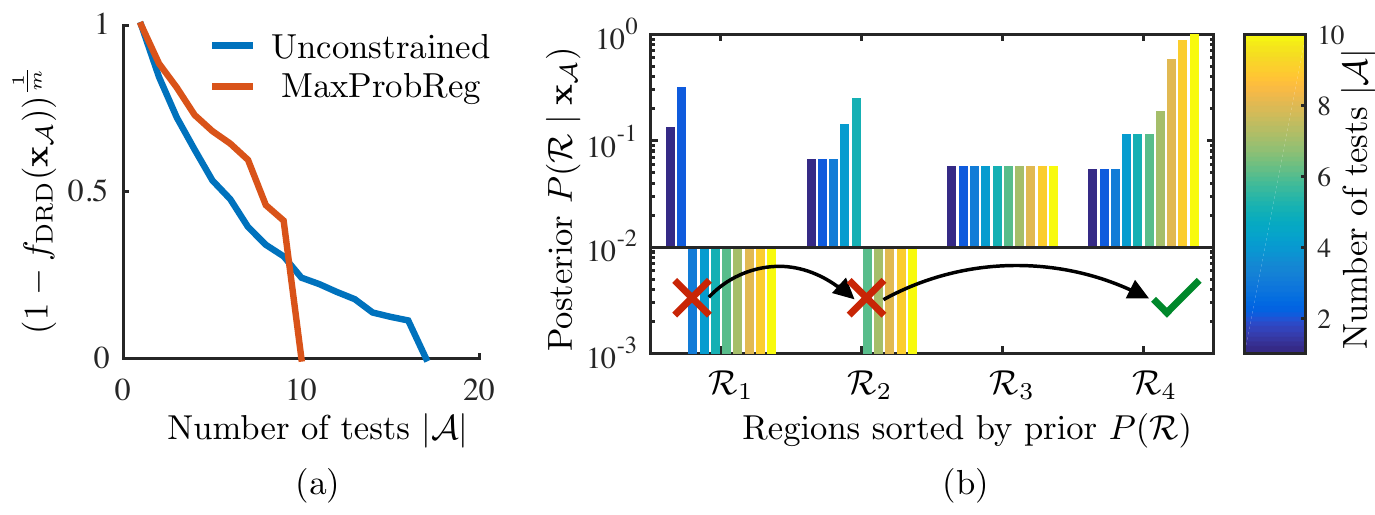}
    \caption{%
    \label{fig:histogram_max_prob}
    (a) Illustration of convergence issues for $\fdrd{\obsOutcome}$ - the transformation $(1-\fdrd{\obsOutcome})^{\frac{1}{\numRegion}}$ shows that it flattens out thus allowing even a non-greedy algorithm to converge faster. 
    (b) \algName with \algMaxProbReg shown in the space of posterior probabilities of region. First $\region_1$ is checked, then $\region_2$ and finally $\region_4$ is found to be valid. }
\end{figure}%

\subsection{Overall summary of results} 
\label{sec:experiments:discussion}
Table~\ref{tab:benchmark_results} shows the evaluation cost of all algorithms on various datasets normalized w.r.t \algName. The two numbers are lower and upper $95\%$ confidence intervals - hence it conveys how much fractionally poorer are algorithms w.r.t \algName. The best performance on each dataset is highlighted. We present a set of observations to interpret these results.
\begin{observation}
\algName has a consistently competitive performance across all datasets.
\end{observation}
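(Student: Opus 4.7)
The observation is an empirical claim, so the plan is to justify it by a systematic analysis of Table~\ref{tab:benchmark_results}. First I would formalize ``consistently competitive'' to mean: on every row of the benchmark table, the normalized-cost $95\%$ confidence interval of \algName either contains zero (so \algName is among the best performers, possibly tied) or the lower endpoint of every competitor's interval is non-negative up to statistical noise. Under this definition the claim reduces to a finite verification across the rows of the table.

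Next I would walk through each experimental block in turn---synthetic Bernoulli tests (three sizes), synthetic generalized binomial graphs (three sizes), 2D geometric planning (three environments and two region sizes), non-holonomic planning (two environments and two lattice degrees), and 7D arm planning (Table and Clutter). For each block I would identify the rows where \algName attains the highlighted best performance (or ties with a competitor) and then isolate the few rows where some competitor records a slightly negative normalized cost (for example, \algMVOI on Large Bernoulli, or \algMaxTally on certain synthetic GBG rows). For each such exception I would check that the competitor's advantage lies inside the reported $95\%$ interval of \algName, so \algName remains competitive in the statistical sense.

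To explain why the pattern is not coincidental, I would appeal to the design of \algName: it is the only policy that simultaneously (i) accounts for overlapping regions via the Noisy-OR construction~(\ref{eq:drd_fn}), (ii) exploits the independent-Bernoulli prior through the surrogate $\fdrd{\obsOutcome}$, and (iii) inherits the near-optimality guarantee of Theorem~\ref{eq:drd_near_opt} (with the \algMaxProbReg variant covered by Theorem~\ref{thm:sub_opt}). Each baseline omits at least one of these ingredients: \algRandom ignores priors, \algMaxTally ignores $\biasVec$, \algSetCover targets covering rather than region identification, and \algMVOI is purely myopic. Consequently, for each competitor there is at least one dataset family where it is penalized, but no single family is uniformly adversarial to \algName.

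The main obstacle is that ``consistently competitive'' is a qualitative descriptor, so the argument is a row-by-row inspection rather than a clean implication. The delicate cases will be borderline ties---for instance \algSetCover on TwoWall in the \algMaxProbReg setting, or \algMaxTally on the 7D arm Clutter dataset---where a competitor matches \algName. I would handle these by reporting overlap of the $95\%$ confidence intervals rather than asserting strict dominance, which is consistent with the weaker wording ``competitive'' (as opposed to ``optimal'') in the statement.
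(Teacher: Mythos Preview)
Your approach is correct and in spirit matches the paper's: both justify the observation by inspecting Table~\ref{tab:benchmark_results}. The paper, however, is far terser than your plan---its entire supporting argument is the single sentence ``Table~\ref{tab:benchmark_results} shows on $13$ datasets, \algName is at par with the best---on $8$ of those it is exclusively the best.'' There is no formal definition of ``competitive,'' no row-by-row walk-through, and no appeal to Theorems~\ref{eq:drd_near_opt} or~\ref{thm:sub_opt} or to the design ingredients of \algName; the theoretical guarantees are used elsewhere in the paper but are not invoked to support this empirical observation. Your formalization and the explanatory paragraph about why each baseline fails on some family are additions that go beyond what the paper offers; they are reasonable and arguably strengthen the claim, but they are your own contribution rather than a reconstruction of the paper's argument.
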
 

Table~\ref{tab:benchmark_results} shows on $13$ datasets, \algName is at par with the best - on $8$ of those it is exclusively the best. 

\begin{observation}
The \algMaxProbReg variant improves the performance of all algorithms on most datasets
\end{observation}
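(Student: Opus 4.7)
Since this is a summary observation about empirical performance rather than a purely mathematical claim, my plan is to establish it in two complementary ways: a systematic sweep through the experimental table to verify the prevalence of improvement, and a brief theoretical argument that explains why such improvement is expected.

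First, I would walk through Table~\ref{tab:benchmark_results} dataset by dataset, and for each algorithm $\policy \in \{\algRandom, \algMaxTally, \algSetCover, \algName\}$ compare the Unconstrained confidence interval against the \algMaxProbReg confidence interval for that algorithm. A comparison counts as an improvement when the upper bound of the \algMaxProbReg interval lies below the lower bound of the Unconstrained interval (or equivalently, when the normalized cost drops by a statistically significant margin). I would then tabulate, for each dataset, the fraction of algorithms that improve, and report the fraction of datasets on which \emph{every} algorithm improves, to confirm the universal-within-dataset phrasing.

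Second, to give theoretical support, I plan to invoke Theorem~\ref{thm:max_prob}, which guarantees that a policy latching to the currently most probable region is $4$-competitive with respect to the optimal region evaluation sequence. The intuition is that without the constraint, the greedy optimizer of $\fdrd{\obsOutcome}$ spreads effort across many regions, and the curvature of $\fdrd{\obsOutcome}$ (see Fig.~\ref{fig:histogram_max_prob}(a)) flattens near $1$, so progress toward termination stalls; the \algMaxProbReg restriction in (\ref{eq:cand_test_set:maxp}) removes this flatness by committing the policy to drive a single region's probability to one. Theorem~\ref{thm:sub_opt} then bounds the price paid by no longer being purely greedy on $\fdrd{\obsOutcome}$. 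For heuristics other than \algName, a parallel argument applies: they all benefit from a shrunken candidate set $\maxProbTestSet$ that filters out tests irrelevant to the most plausible hypothesis.

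The hard part will be interpreting the edge cases in Table~\ref{tab:benchmark_results}. A handful of cells show the \algMaxProbReg variant essentially tied with, or marginally worse than, its Unconstrained variant (for example when both intervals straddle zero relative to the \algName baseline, or when \algMaxTally shows a slight negative entry under \algMaxProbReg on a dataset where its Unconstrained version is also strong). These are not counterexamples as long as ``improves'' is read non-strictly, but defending the observation rigorously requires a precise rule for handling statistically indistinguishable pairs and reporting exactly how many (dataset, algorithm) cells fail the improvement test; that accounting will dictate how strongly the word ``most'' in the observation can be interpreted.
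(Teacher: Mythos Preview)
Your proposal is correct and follows essentially the same route as the paper: the paper supports this observation by pointing to Table~\ref{tab:benchmark_results} (noting that the claim holds on $12$ of the datasets, with \algRandom improving by up to a factor of $20$), giving the curvature/flattening intuition for $\fdrd{\obsOutcome}$ via Fig.~\ref{fig:histogram_max_prob}(a), and flagging the caveat that the claim fails on datasets with large disparity in region sizes (Appendix~\ref{app:max_prob_reg}). Your plan is a more careful version of the same empirical count plus the same theoretical rationalization; the only thing you might add is an explicit acknowledgement of the counterexample regime in Appendix~\ref{app:max_prob_reg}, which is what justifies the qualifier ``most'' rather than ``all.''
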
 

Table~\ref{tab:benchmark_results} shows that this is true on $12$ datasets. The impact is greatest on \algRandom where improvement is upto a factor of $20$. For the case of \algName, Fig.~\ref{fig:histogram_max_prob}(a) illustrates the problem by examining the shape of $(1-\fdrd{\obsOutcome})^{\frac{1}{\numRegion}}$. Even though $\fdrd{\obsOutcome}$ is submodular, it flattens drastically allowing a non-greedy policy to converge faster. Fig.~\ref{fig:histogram_max_prob}(b) shows how the probability of region evolves as tests are checked in the \algMaxProbReg setting. We see this `latching' characteristic - where test selection drives a region probability to 1 instead of exploring other tests. 

However, this is not true in general. See Appendix \ref{app:max_prob_reg} for results on datasets with large disparity in region sizes.
\begin{observation}
On planning problems, \algName strikes a trade-off between the complimentary natures of \algMaxTally and \algMVOI.
\end{observation}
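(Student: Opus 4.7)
The observation is an empirical claim relating \algName to \algMaxTally and \algMVOI, so the plan is to combine (i) an analytic reading of the greedy criterion (\ref{eq:greedy_fdrd}) that exposes both a tally-like and an MVoI-like component, (ii) a limiting-case analysis showing that the two baselines are extreme regimes of that criterion, and (iii) a targeted reading of the planning rows of Table~\ref{tab:benchmark_results} to verify that \algName is competitive with whichever baseline is stronger on each dataset.

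First I would rewrite the expression inside the expectation in (\ref{eq:greedy_fdrd}) as a difference of the form $R(\obsOutcome) - R(\obsOutcome \cup \{\test\})\cdot L(\test,\outcomeTest{\test})$, where $R(\obsOutcome) = \prod_{r}\bigl(1 - \prod_{i \in \region_r \cap \selTestSet} \Ind(\outcomeVarTest{i}=1) \prod_{j \in \region_r \setminus \selTestSet} \biasTest{j}\bigr)$ is a \emph{reach} factor and $L(\test,\outcomeTest{\test}) = \bigl(\biasTest{\test}^{\outcomeTest{\test}}(1-\biasTest{\test})^{1-\outcomeTest{\test}}\bigr)^{2\sum_k \Ind(\test \in \region_k)}$ is an \emph{observation-likelihood} factor. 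The gain is large either when $R(\obsOutcome \cup \{\test\})$ is much smaller than $R(\obsOutcome)$---which happens precisely when $\test$ participates in many still-live regions (the \algMaxTally intuition)---or when $L$ is small, which happens when $\biasTest{\test}$ is bounded away from $0$ and $1$ (the \algMVOI intuition about information-rich tests). To make the identification precise I would take limits: when biases are nearly equal and close to $1$, $L$ is essentially constant across candidates so the argmax is driven by the drop in $R$ and recovers \algMaxTally (\ref{eq:policy_max_tally}); once pruning concentrates mass on a single dominant region, $R$ degenerates to a single-region factor and the argmax collapses onto $(1-\biasTest{\test})\,P(\region_{\max}\mid\obsOutcome,\outcomeVarTest{\test}=0)$, which is exactly \algMVOI under \algMaxProbReg (\ref{eq:policy_mvoi}).

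Finally I would validate the observation empirically using the planning rows of Table~\ref{tab:benchmark_results}. \algName is within the reported confidence interval of the better of the two baselines on the 2D geometric, non-holonomic, and 7D arm rows: \algMaxTally is the closer competitor on the 7D arm and synthetic GBG rows (dominated by bottleneck edges), while \algMVOI-style myopic behavior is more important on the 2D Forest and TwoWall rows. The hardest part of the argument will be demonstrating genuine \emph{complementarity} rather than mere alternation of winners---one would need an instance-level breakdown on a single dataset showing that the sub-problems on which \algMaxTally fails and those on which \algMVOI fails are largely disjoint, yet \algName succeeds on both subsets. The canonical example of Fig.~\ref{fig:canonical_drd_example}, where varying $\bias_5$ flips the optimal choice between a shared-edge test and an information-rich test while \algName tracks the switch, would serve as the clean qualitative witness supporting this claim.
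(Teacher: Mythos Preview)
Your proposal is more elaborate than what the paper actually does. The paper supports this empirical Observation with three sentences and a pointer to Fig.~\ref{fig:2d_env_comparison}: it simply states that \algMaxTally selects edges shared by many paths (useful for elimination, but ignores the free-edge outcome), that \algMVOI selects edges to kill the most probable path (but ignores how many paths one edge can kill), and that \algName ``switches between these behaviors''. There is no analytic decomposition of (\ref{eq:greedy_fdrd}), no limiting-case argument, and no per-dataset accounting from Table~\ref{tab:benchmark_results}; the evidence is the qualitative side-by-side in Fig.~\ref{fig:2d_env_comparison}.

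Your route is genuinely different. The reach/likelihood factoring $R(\obsOutcome)-R(\obsOutcome\cup\{\test\})\,L(\test,\outcomeTest{\test})$ is a nice way to expose mechanistically \emph{why} the surrogate cares both about membership count and about outcome uncertainty, and it gives more insight than the paper's purely behavioral description. However, your two limiting-case reductions are looser than you suggest. With equal biases near $1$, the exponent $2\sum_k \Ind(\test\in\region_k)$ still makes $L$ depend on the tally, so $L$ is not ``essentially constant across candidates''; the argmax is driven jointly by $L$ and the drop in $R$, and you would need an additional argument to show these align to recover exactly (\ref{eq:policy_max_tally}). Likewise, when mass concentrates on one region the single-region residual does not literally collapse to $(1-\biasTest{\test})\max_i P(\region_i\mid\obsOutcome,\outcomeVarTest{\test}=0)$; the squared-likelihood factor and the $\outcomeTest{\test}=1$ branch both survive, so matching (\ref{eq:policy_mvoi}) exactly requires further approximation. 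These are plausibility arguments rather than reductions. For an Observation that is explicitly empirical, the paper's lighter treatment is sufficient; your approach would buy a mechanistic explanation, but to make it stand you would need either to tighten the limits or to present them honestly as heuristics and lean on Fig.~\ref{fig:canonical_drd_example} and Fig.~\ref{fig:2d_env_comparison} for the actual support, much as the paper does.
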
 

We examine this in the context of 2D planning as shown in Fig.~\ref{fig:2d_env_comparison}. 
\algMaxTally selects edges belonging to many paths which is useful for path elimination but does not reason about the event when the edge is not in collision.
\algMVOI selects edges to eliminate the most probable path but does not reason about how many paths a single edge can eliminate. 
\algName switches between these behaviors thus achieving greater efficiency than both heuristics.

\begin{observation}
\algName checks informative edges in collision avoidance problems encountered a helicopter
\end{observation}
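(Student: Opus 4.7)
The plan is to support this observation empirically rather than as a mathematical theorem, since the claim is a qualitative statement about algorithmic behaviour on a specific real-world scenario. First I would make the word ``informative'' operational: a selected edge $\test^*$ is informative if its expected marginal DRD gain $\expect{\outcomeTest{\test^*}}{\fdrd{\obsOutcomeAdd{\test^*}} - \fdrd{\obsOutcome} \;|\; \obsOutcome}$ is high relative to the candidate pool, equivalently if its outcome eliminates (or confirms) a large probability mass of candidate paths. By Lemma~\ref{lem:drd} and Eq.~(\ref{eq:greedy_fdrd}), \algName picks exactly the edge that maximizes this quantity normalized by cost, so informativeness in the DRD sense follows by construction; what remains to demonstrate is that this algorithmic notion coincides with the intuitive notion --- edges whose evaluation meaningfully narrows down the set of safe descent trajectories around the wires.

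Second, I would execute the following protocol. Take the onboard laser scan and state lattice depicted in Fig.~\ref{fig:heli}(a), build $\PathSet$ via the same greedy library-construction procedure used in Section~\ref{sec:dataset:2dgeom}, and estimate the bias vector $\biasVec$ from simulation-derived statistics of wire-like environments. Then run Algorithm~\ref{alg:drd_skeleton} with \algName and the \algMaxProbReg constraint on the recorded scan, logging the full sequence of evaluated edges together with their observed outcomes. Finally, overlay this trace on the voxel map as in Fig.~\ref{fig:heli}(b). The observation is supported if (i) the evaluated edges concentrate around the detected wire voxels and the boundary between feasible and infeasible lattice regions rather than spreading diffusely across the lattice, and (ii) the total number of evaluations required to certify a feasible landing path is small relative to the size of the lattice.

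The main obstacle is that ``informative'' is inherently a qualitative judgement on a single real-world trace, so the case must be built from visualization plus comparison to alternatives rather than a closed-form argument. I would strengthen the demonstration by quantitatively contrasting \algName's evaluation count against \algRandom, \algMaxTally, \algMVOI, and \algSetCover on the same scan, reusing the normalized-cost metric from Table~\ref{tab:benchmark_results}, and by spotlighting individual edges whose outcome collapsed several candidate paths at once (either by invalidating a wire-crossing bottleneck or by confirming a bypass that dominated the \algMaxProbReg posterior). The unavoidable limitation --- that one helicopter scenario is anecdotal --- cannot be removed, but appealing to the consistency of \algName's advantage across the synthetic, 2D-geometric, non-holonomic, and 7D-arm datasets already reported would place this single-trial observation within a broader body of evidence rather than leaving it standalone.
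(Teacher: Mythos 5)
Your proposal is correct and matches the paper's approach: the observation is supported empirically, not formally, and the paper's entire justification is the qualitative trace in Fig.~\ref{fig:heli}(b) showing \algName selectively evaluating lattice edges near the detected wires (using simulation-derived priors) until a feasible landing path is certified. Your additional steps --- operationalizing ``informative'' via the marginal gain in (\ref{eq:greedy_fdrd}) and quantitatively benchmarking against the heuristic baselines on the same scan --- go beyond what the paper actually reports for this dataset (the helicopter scenario does not appear in Table~\ref{tab:benchmark_results}), but they strengthen rather than diverge from the paper's argument.
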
 

Fig.~\ref{fig:heli}(b) shows the efficacy of \algName on experimental flight data from a helicopter avoiding wire.

\vspace{-0.7em}\section{Conclusion}\vspace{-0.7em}
\label{sec:conclusion}
In this paper, we addressed the problem of identification of a feasible path from a library while minimizing the expected cost of edge evaluation given priors on the likelihood of edge validity. We showed that this problem is equivalent to a decision region determination problem where the goal is to select tests (edges) that drive uncertainty into a single decision region (a valid path). 
We proposed \algName, and efficient and near-optimal algorithm that solves this problem by greedily optimizing a surrogate objective.
We validated \algName on a spectrum of problems against state of the art heuristics and showed that it has a consistent performance across datasets. 
This works serves as a first step towards importing Bayesian active learning approaches into the domain of motion planning. 

\bibliographystyle{plainnat}
\bibliography{reference}

\newpage
\appendix

\begin{appendices}

\section{Proof of Lemma \ref{lem:ec2}}
\label{sec:proof:lem_ec2}
\begin{lemma*} 
The expression $\fec{\obsOutcome}$ is strongly adaptive monotone and adaptive submodular.
\end{lemma*}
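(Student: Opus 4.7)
The plan is to recast $\fec{\obsOutcome}$ as the normalized weight of cut edges in an auxiliary weighted graph on the hypothesis space, and then mirror the standard edge-cutting argument of \citet{golovin2010near}, with small adjustments to accommodate the extra self-loops and the use of the relevant version space $\hypSpaceR$. Concretely, I would build a graph $\mathcal{G} = (\hypSpace, \edgeSetEC)$ whose edge set contains, for every ordered pair $(\hyp, \hyp')$ with $\hyp \in \regionH$ and $\hyp' \in \Not{\regionH}$, an edge of weight $P(\hyp)P(\hyp')$, and, for every ordered pair $(\hyp, \hyp')$ with $\hyp, \hyp' \in \Not{\regionH}$ (including the self-loops $\hyp = \hyp'$), an edge of the same weight. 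By (\ref{eq:weight_sub}) the total weight is $\wec(\{\subregion_i\})$, and declaring an edge cut by $\obsOutcome$ whenever at least one endpoint lies outside $\hypSpaceR(\obsOutcome)$ makes, by (\ref{eq:weight_sub_pruned}), the residual weight of uncut edges equal to $\wec(\{\subregion_i\} \cap \hypSpaceR(\obsOutcome))$. Hence $\fec{\obsOutcome}$ is exactly the fraction of the total edge weight cut, placing it into the edge-cutting framework.

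With this reformulation in hand, strong adaptive monotonicity follows immediately from the fact that $\hypSpaceR$ is pointwise monotone non-increasing in $\obsOutcome$: a hypothesis that disagrees with an earlier relevant observation continues to disagree afterwards. The cut-indicator of every edge is therefore pointwise non-decreasing along any realization, so $\fec{\obsOutcomeAdd{t}} \geq \fec{\obsOutcome}$ for every test $t$ and every outcome with positive posterior probability, a pointwise version of the required property.

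For adaptive submodularity, I would decompose $\fec = \sum_{e \in \edgeSetEC} f_e$ with $f_e(\obsOutcome) = (w(e)/\wec(\{\subregion_i\})) \, \Ind[e \text{ cut by } \obsOutcome]$ and verify the inequality edge by edge, then sum. For a test $t \notin \selTestSet$ the expected marginal gain from edge $e = (\hyp, \hyp')$ simplifies to $(w(e)/\wec(\{\subregion_i\})) \, \Ind[e \text{ alive under } \obsOutcome] \, q_e^{t}$, where $q_e^{t}$ is the prior probability that $\outcomeVarTest{t}$ would cut $e$ (determined by $\biasTest{t}$ and the values $\hyp(t), \hyp'(t)$). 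The crucial point is that under the independent Bernoulli assumption $P(\outcomeVarTest{t} \mid \obsOutcome) = P(\outcomeVarTest{t})$ for any unqueried $t$, so $q_e^{t}$ does not depend on the history $\obsOutcome$. Since alive-indicators can only switch off, never on, as $\obsOutcome$ grows to a superset $\obsOutcome'$, each $f_e$ is adaptive submodular, and a sum of adaptive submodular functions is adaptive submodular.

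The main obstacle I expect is the bookkeeping around the relevant version space and the self-loops. A test $t \notin \region$ contributes zero marginal gain, which is trivially consistent with submodularity but requires checking that the edge-wise decomposition still covers every edge correctly; similarly, the self-loop contributions in $P(\Not{\regionH})^2$ must be reconciled algebraically with (\ref{eq:weight_sub}) and (\ref{eq:weight_sub_pruned}) before the edge-wise sum literally matches $\fec$. Bernoulli independence is the conceptual linchpin of the whole argument: it is what makes $q_e^{t}$ history-independent, so that the monotonicity of the alive-indicator immediately delivers the monotonicity of the expected per-edge gain; without independence, $q_e^{t}$ would shift with $\obsOutcome$ and the per-edge argument would break.
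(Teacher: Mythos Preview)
Your proposal is correct and takes a genuinely different—and considerably shorter—route than the paper.

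The paper adapts the combinatorial counting argument of \citet{golovin2010near}: it expresses the expected marginal gain $\gain{\ectext}{\test}{\obsOutcome}$ as a function $\phi(\nvec)$ of version-space counts $n_{i,a}$ and then verifies $\partial\phi/\partial n_{k,c}\ge 0$ coordinate by coordinate, with an extra term $\gamma_a$ tracking the self-loops added on the non-region subregions. This machinery is inherited from the general \ecsq setting, where the hypothesis prior may be arbitrary and the posterior over an unqueried test outcome genuinely depends on past observations; it costs several pages of derivative-chasing but delivers adaptive submodularity for any prior, with the non-uniform case handled by carrying over the reduction in Lemma~6 of \citet{golovin2010near}.

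Your argument instead exploits the one structural fact the paper's setting provides but the general one does not: under independent Bernoulli outcomes, $P(\outcomeVarTest{\test}\mid\obsOutcome)=P(\outcomeVarTest{\test})$ for every unqueried $\test$. This is exactly what makes $q_e^{\test}$ history-independent and lets the per-edge expected gain factor as $\Ind[e\ \text{alive under}\ \obsOutcome]\cdot q_e^{\test}$; adaptive submodularity then follows from pointwise monotonicity of the alive-indicator and linearity of expectation, and strong adaptive monotonicity is likewise immediate. The bookkeeping you flag (ordered pairs within $\Not{\regionH}$ summing to $P(\Not{\regionH})^2$, zero gain for $\test\notin\region$) is routine and causes no trouble.

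What each approach buys: the paper's proof is portable—drop the Bernoulli assumption and it still stands—whereas yours collapses without independence, since $q_e^{\test}$ would then shift with $\obsOutcome$ exactly as you note. Conversely, your route is far more transparent, avoids the derivative calculus entirely, and makes explicit which feature of the model is doing the work. For the stated setting it would have sufficed.
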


\begin{proof}
The proof for $\fec{\obsOutcome}$ is a straight forward application of Lemma 5 from \citet{golovin2010near}. We now adapt the proof of adaptive submodularity from Lemma 6 in \citet{golovin2010near}

We first prove the result for uniform prior. To prove adaptive submodularity, we must show that for all $\outcomeTestSet{\mathcal{A}} < \outcomeTestSet{\mathcal{B}}$ and $\test \in \testSet$ , we have $\gain{\ectext}{\test}{\outcomeTestSet{\mathcal{A}}} \geq \gain{\ectext}{\test}{\outcomeTestSet{\mathcal{B}}}$. Fix $\test$ and $\obsOutcome$, and let $\mathcal{V}(\obsOutcome) = \setst{\hyp}{P(\hyp | \obsOutcome) > 0} $ denote the version space, if $\obsOutcome$ encodes the observed outcomes. Let $\nv = \abs{\mathcal{V}(\obsOutcome)}$ be the number of hypotheses in the version space. 
Likewise, let $n_{i,a}(\obsOutcome) = \abs{ \set{\hyp : \hyp \in \mathcal{V}(\obsOutcome, \outcomeVarTest{\test} = a) \cap \hypSpaceR_i} }$, 
and let $n_a(\obsOutcome) = \sum\limits_{i=1}^{l} n_{i,a}(\obsOutcome)$. 
We define a function $\phi$ of the quantities ${n_{i,a} : 1 \leq i \leq l,a \in \{0,1\}}$ such that $\gain{\ectext}{\test}{\obsOutcome} = \phi(n(\obsOutcome))$, where $n(\obsOutcome)$ is the vector consisting of $n_{i,a}(\obsOutcome)$ for all $i$ and $a$. For brevity, we suppress the dependence of $\obsOutcome$ where it is unambiguous.

It will be convenient to define $e_a$ to be the number of edges cut by $\test$ such that at $\test$ both hypotheses agree with each other but disagree with the realized hypothesis $\hyp*$, conditioning on $\outcomeVarTest{\test} = a$. Written as a function of $n$, we have  $e_a = \sum\limits_{i < j}\sum\limits_{b \neq a} n_{i,b} n_{j,b}$.

We also define $\gamma_a$ to be the number of edges cut by $\test$ corresponding to self-edges belonging to hypotheses that disagree with the realized hypothesis $\hyp^*$, conditioning on $\outcomeVarTest{\test} = a$. Written as a function of $n$, we have  $\gamma_a = \sum\limits_{i} \sum\limits_{b \neq a} n_{i,b}^2$. 

\begin{equation}
	\phi(\nvec) = \sum\limits_{i < j} \sum\limits_{a \neq b} n_{i,a} n_{j,b} + \sum\limits_a e_a \left( \frac{n_a}{\nv} \right) + \sum\limits_a \gamma_a \left( \frac{n_a}{\nv} \right)
\end{equation}

where $e_a = \sum\limits_{i < j}\sum\limits_{b \neq a} n_{i,b} n_{j,b}$ and $\gamma_a = \sum\limits_{i} \sum\limits_{b \neq a} n_{i,b}^2$.
Here, $i$ and $j$ range over all class indices, and $a$ and $b$ range over all possible outcomes of test $t$. The first term on the right-hand side counts the number of edges that will be cut by selecting test $t$ no matter what the outcome of $t$ is. Such edges consist of hypotheses that disagree with each other at $t$ and, as with all edges, lie in different classes. The second term counts the expected number of edges cut by $t$ consisting of hypotheses that agree with each other at $t$. Such edges will be cut by $t$ iff they disagree with $h^*$ at $t$. The third term counts the expected number of edges cut by $t$ consisting of hypothesis with self-edges that disagree with $h^*$ at $t$.

We need to show $\frac{\partial \phi}{\partial n_{k,c}} \geq 0$ according to proof of Lemma 6 in \citet{golovin2010near}.

\begin{equation}
	\label{eq:ec2:deriv}
	\frac{\partial \phi}{\partial n_{k,c}} = 
	\frac{\partial}{\partial n_{k,c}} \left( \sum\limits_{i < j} \sum\limits_{a \neq b} n_{i,a} n_{j,b}  \right) +
	\frac{\partial}{\partial n_{k,c}} \left( \sum\limits_a e_a \left( \frac{n_a}{\nv} \right) \right) + 
	\frac{\partial}{\partial n_{k,c}} \left( \sum\limits_a \gamma_a \left( \frac{n_a}{\nv} \right) \right)
\end{equation}

Expanding the first term in (\ref{eq:ec2:deriv})
\begin{equation}
	\frac{\partial}{\partial n_{k,c}} \left( \sum\limits_{i < j} \sum\limits_{a \neq b} n_{i,a} n_{j,b}  \right)
	= \sum\limits_{i \neq k, a \neq c} n_{i,a}
\end{equation}

Expanding the second term in (\ref{eq:ec2:deriv})
\begin{equation}
	\frac{\partial}{\partial n_{k,c}} \left( \sum\limits_a e_a \left( \frac{n_a}{\nv} \right) \right)
	= \sum\limits_{i \neq k, a \neq c} \frac{ n_a n_{i,c} }{\nv} - \sum\limits_{b} \frac{e_b n_b}{\nv^2} + \frac{e_c}{\nv}
\end{equation}

Expanding the third term in (\ref{eq:ec2:deriv})
\begin{equation}
\begin{aligned}
	\frac{\partial}{\partial n_{k,c}} \left( \sum\limits_a \gamma_a \left( \frac{n_a}{\nv} \right) \right)
	&= \frac{\partial}{\partial n_{k,c}} \left( \frac{n_c}{\nv} \gamma_c \right) + \sum\limits_{a \neq c} \frac{\partial}{\partial n_{k,c}} \left( \frac{n_a}{\nv} \gamma_a \right) \\
	&= \frac{n_c}{\nv} \underbrace{\frac{\partial}{\partial n_{k,c}} \gamma_c}_{= 0} + \frac{\gamma_c}{\nv}  \underbrace{\frac{\partial}{\partial n_{k,c}} n_c}_{= 1} - \gamma_c n_c \frac{\partial}{\partial n_{k,c}} \left( \frac{1}{\nv} \right) + \sum\limits_{a \neq c} \frac{\partial}{\partial n_{k,c}} \left( \frac{n_a}{\nv} \gamma_a \right)\\
	&=  \frac{\gamma_c}{\nv} - \frac{ \gamma_c n_c }{\nv^2} +  \sum\limits_{a \neq c} \frac{\partial}{\partial n_{k,c}} \left( \frac{n_a}{\nv} \gamma_a \right)\\
	&=  \frac{\gamma_c}{\nv} - \frac{ \gamma_c n_c }{\nv^2} +  
	\sum\limits_{a \neq c} \left(
	\frac{n_a}{\nv} \underbrace{\left( \frac{\partial}{\partial n_{k,c}} \gamma_a \right)}_{= 2n_{k,c}} + 
	\frac{\gamma_a}{\nv} \underbrace{ \frac{\partial}{\partial n_{k,c}} n_a }_{=0} + 
	\gamma_a n_a \frac{\partial}{\partial n_{k,c}} \frac{1}{\nv}
	\right) \\
	&=  \frac{\gamma_c}{\nv} - \frac{ \gamma_c n_c }{\nv^2} +  
	\sum\limits_{a \neq c} \left( 2\frac{n_a n_{k,c}}{\nv} - \frac{\gamma_a n_a}{\nv^2}  \right) \\
	&= \frac{\gamma_c}{\nv} + 2 n_{k,c} \sum\limits_{a \neq c} \frac{n_a}{\nv} - \sum\limits_b \frac{\gamma_b n_b}{\nv^2} \\
\end{aligned}
\end{equation}

Putting it all together

\begin{equation}
	\label{eq:ec2:deriv2}
	\frac{\partial \phi}{\partial n_{k,c}} = 
	\frac{(e_c + \gamma_c)}{\nv} + 
	\sum\limits_{i \neq k, a \neq c} \frac{ n_a n_{i,c} }{\nv} +
	2 n_{k,c} \sum\limits_{a \neq c} \frac{n_a}{\nv} +
	\sum\limits_{i \neq k, a \neq c} n_{i,a}
	- \sum\limits_{b} \frac{e_b n_b}{\nv^2} - \sum\limits_b \frac{\gamma_b n_b}{\nv^2}
\end{equation}

Multiplying (\ref{eq:ec2:deriv2}) by $\nv$ we see it is non negative iff

\begin{equation}
	\sum\limits_b \frac{(e_b + \gamma_b) n_b}{\nv} \leq e_c + \gamma_c + \sum\limits_{a \neq c, i \neq k} n_a n_{i,c} + 2 n_{k,c} \sum\limits_{a \neq c} n_a + \nv \sum\limits_{a \neq c, i \neq k} n_{i,a} 
\end{equation}

Expanding LHS we get

\begin{equation}
\begin{aligned}
\sum\limits_b \frac{(e_b + \gamma_b) n_b}{\nv} &=  \frac{(e_c + \gamma_c) n_c}{\nv} + \sum\limits_{b \neq c} \frac{(e_b + \gamma_b) n_b}{\nv} \\
&\leq e_c + \frac{\gamma_c n_c}{\nv} + \sum\limits_{b \neq c} \frac{(e_b + \gamma_b) n_b}{\nv} \\
&\leq e_c + \frac{\gamma_c n_c}{\nv} + \sum\limits_{b \neq c} \frac{n_b}{\nv} \left( \sum\limits_{i<j}\sum\limits_{a \neq b} n_{i,a} . n_{j,a} + \sum\limits_i \sum\limits_{a\neq b} n_{i,a}^2 \right) \\
&\leq e_c + \frac{\gamma_c n_c}{\nv} + \sum\limits_{b \neq c} \frac{n_b}{\nv} \left( \sum\limits_{i<j} n_{i,c} . n_{j,c} + \sum\limits_i n_{i,c}^2 \right) + \sum\limits_{b \neq c} \frac{n_b}{\nv} \left( \sum\limits_{i<j}\sum\limits_{a \neq b,c} n_{i,a} . n_{j,a} + \sum\limits_i \sum\limits_{a\neq b,c} n_{i,a}^2 \right)\\
&\leq e_c + 
\underbrace{\sum\limits_{b \neq c} \frac{n_b}{\nv} \left( \sum\limits_{i<j} n_{i,c} . n_{j,c} + \sum\limits_i n_{i,c}^2 \right)}_{\text{\textcircled{A}}} + \underbrace{\sum\limits_{b \neq c} \frac{n_b}{\nv} \left( \sum\limits_{i<j}\sum\limits_{a \neq b,c} n_{i,a} . n_{j,a} + \sum\limits_i \sum\limits_{a\neq b,c} n_{i,a}^2 \right) + \frac{\gamma_c n_c}{\nv}}_{\text{\textcircled{B}}} \\
\end{aligned}
\end{equation}

If $\{x_i\}_{i \geq 0}$ be a finite sequence of non-negative real numbers. Then for any $k$
\begin{equation} 
\label{eq:sumprod}
	\sum\limits_{i < j} x_i x_j + \sum\limits_i x_i^2 \leq \left( \sum_i x_i \right) \left( \sum_{i \neq k} x_i \right) + x_k^2
\end{equation}

Using (\ref{eq:sumprod}) and expanding \textcircled{A} we have
\begin{equation}
\label{eq:terma}
\begin{aligned}
&\sum\limits_{b \neq c} \frac{n_b}{\nv} \left( \sum\limits_{i<j} n_{i,c} . n_{j,c} + \sum\limits_i n_{i,c}^2 \right) \\
\leq&\sum\limits_{b \neq c} \frac{n_b}{\nv} \left( \left( \sum_i n_{i,c} \right) \left( \sum_{i \neq k} n_{i,c} \right) + n_{k,c}^2 \right) \\
\leq&\sum\limits_{b \neq c} \frac{n_b}{\nv} \left( n_c \left( \sum_{i \neq k} n_{i,c} \right) + n_{k,c}^2 \right) \\
\leq& \sum\limits_{b \neq c} \frac{n_b n_c}{\nv} \left( \sum_{i \neq k} n_{i,c} \right) + \sum\limits_{b \neq c} \frac{n_b}{\nv} n_{k,c}^2  \\
\leq& \sum\limits_{b \neq c} n_b \left( \sum_{i \neq k} n_{i,c} \right) + \sum\limits_{b \neq c} \frac{n_b n_c}{\nv} n_{k,c}  \\
\leq& \sum\limits_{a \neq c, i \neq k} n_a n_{i,c} + \sum\limits_{b \neq c} n_b n_{k,c}  \\
\leq& \sum\limits_{a \neq c, i \neq k} n_a n_{i,c} + 2 n_{k,c} \sum\limits_{a \neq c} n_a   \\
\end{aligned}
\end{equation}

Using (\ref{eq:sumprod}) and expanding \textcircled{B} we have
\begin{equation}
\label{eq:termb}
\begin{aligned}
& \sum\limits_{b \neq c} \frac{n_b}{\nv} \left( \sum\limits_{i<j}\sum\limits_{a \neq b,c} n_{i,a} . n_{j,a} + \sum\limits_i \sum\limits_{a\neq b,c} n_{i,a}^2 \right) + \frac{\gamma_c n_c}{\nv} \\
\leq& \sum\limits_{b \neq c} \frac{n_b}{\nv} \left( 
\sum\limits_{a \neq b,c} \left( \sum_{i \neq k} n_{i,a} \right) n_a
+ \sum\limits_{a\neq b,c} n_{k,a}^2 \right) + \frac{\gamma_c n_c}{\nv} \\
\leq&\sum\limits_{b \neq c} \frac{n_b}{\nv} \nv \sum\limits_{a \neq c} \sum_{i \neq k} n_{i,a} +
\sum\limits_{b \neq c} \frac{n_b}{\nv} \sum\limits_{a\neq b,c} n_{k,a}^2  + \frac{\gamma_c n_c}{\nv} \\
\leq&\sum\limits_{b \neq c} n_b \sum\limits_{a \neq c, i \neq k} n_{i,a} +
\sum\limits_{b \neq c} \frac{n_b}{\nv} \sum\limits_{a\neq c} n_{k,a}^2  + \frac{\gamma_c n_c}{\nv} \\
\leq&\nv \sum\limits_{a \neq c, i \neq k} n_{i,a} +
\sum\limits_{b \neq c} \frac{n_b}{\nv} \gamma_c  + \frac{\gamma_c n_c}{\nv} \\
\leq&\nv \sum\limits_{a \neq c, i \neq k} n_{i,a} +
\gamma_c \left( \sum\limits_{b \neq c} \frac{n_b}{\nv}   + \frac{n_c}{\nv} \right)\\
\leq&\nv \sum\limits_{a \neq c, i \neq k} n_{i,a} +
\gamma_c \\
\end{aligned}
\end{equation}

Combining (\ref{eq:terma}) and (\ref{eq:termb})

\begin{equation}
\begin{aligned}
\sum\limits_b \frac{(e_b + \gamma_b) n_b}{\nv} &\leq e_c + \sum\limits_{a \neq c, i \neq k} n_a n_{i,c} + 2 n_{k,c} \sum\limits_{a \neq c} n_a + 
\nv \sum\limits_{a \neq c, i \neq k} n_{i,a} +
\gamma_c
\end{aligned}
\end{equation}

Hence the inequality $\frac{\partial \phi}{\partial n_{k,c}} \geq 0$ holds. For non-uniform prior, the proofs from Lemma 6 in \citet{golovin2010near} carry over. 

\end{proof}

\section{Proof of Lemma \ref{lem:drd}}
\label{sec:proof:lem_drd}
\begin{lemma} 
The expression $\fdrd{\obsOutcome}$ is strongly adaptive monotone and adaptive submodular.
\end{lemma}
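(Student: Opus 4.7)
The plan is to reduce this lemma to the combination result for the Noisy-OR construction already used in the \direct framework of \citet{chen2015submodular}. Since our surrogate has exactly the structural form $\fdrd{\obsOutcome} = 1 - \prod_{r=1}^{\numRegion}(1-\feci{r}{\obsOutcome})$, with the role of the per-region \ecsq objective played by the modified $\feci{r}{\cdot}$ introduced in (\ref{eq:fec_applied}), and since Lemma \ref{lem:ec2} already furnishes the two ingredients their composition argument needs---strong adaptive monotonicity and adaptive submodularity of each $\feci{r}{\cdot}$, together with $\feci{r}{\obsOutcome} \in [0,1]$---the bulk of the work is to transcribe that composition argument faithfully.

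For strong adaptive monotonicity, the goal reduces to showing $\expect{\outcomeTest{\test}}{\prod_r (1-\feci{r}{\obsOutcomeAdd{\test}}) \mid \obsOutcome} \leq \prod_r (1-\feci{r}{\obsOutcome})$. I would establish this by induction on $\numRegion$: the base case is exactly the strong adaptive monotonicity of a single $\feci{r}$ from Lemma \ref{lem:ec2}, and the inductive step peels off one factor at a time, using that each $(1-\feci{r}{\cdot})$ is $[0,1]$-valued and that the expectation is over the single random variable $\outcomeVarTest{\test}$ shared by all factors, so a pointwise monotonicity (which also follows from Lemma \ref{lem:ec2}, not just the expectation version) propagates through the product.

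For adaptive submodularity, I would use the standard telescoping decomposition of a Noisy-OR marginal gain. Setting $p_r(\obsOutcome) = 1-\feci{r}{\obsOutcome}$, one can rewrite
\begin{equation*}
\gain{\drd}{\test}{\obsOutcome} = \sum_{r=1}^{\numRegion} \expect{\outcomeTest{\test}}{\; \gain{\ectext^{(r)}}{\test}{\obsOutcome}\, \prod_{r' < r} p_{r'}(\obsOutcomeAdd{\test})\, \prod_{r' > r} p_{r'}(\obsOutcome) \;\middle|\; \obsOutcome},
\end{equation*}
obtained by telescoping $\prod_r p_r(\obsOutcome) - \prod_r p_r(\obsOutcomeAdd{\test})$ in the standard way. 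Given an extension $\obsOutcome \preceq \obsOutcome'$, adaptive submodularity of each individual $\feci{r}$ (Lemma \ref{lem:ec2}) shrinks the $r$-th expected gain factor, while strong adaptive monotonicity of each $\feci{r'}$ shrinks each accompanying survival probability $p_{r'}$. Since all factors remain non-negative, a term-wise comparison yields $\gain{\drd}{\test}{\obsOutcome} \geq \gain{\drd}{\test}{\obsOutcome'}$, which is adaptive submodularity.

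The main obstacle will be verifying that the term-wise inequalities can be combined multiplicatively in the telescoped decomposition: the gain factor and the accompanying $p_{r'}$ factors are simultaneously non-increasing under extension of observations, and non-negativity of all factors (again from Lemma \ref{lem:ec2}) is precisely what lets the per-region inequalities compose into the desired single inequality. Once that compositional step is made explicit, the result is an essentially mechanical reduction of the $\numRegion$-region DRD problem to $\numRegion$ applications of Lemma \ref{lem:ec2}.
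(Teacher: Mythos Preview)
Your proposal has a genuine gap in the adaptive submodularity argument. After telescoping you correctly arrive at terms of the form
\[
\expect{\outcomeTest{\test}}{\;\delta_r(\outcomeTest{\test})\;\prod_{r'<r} p_{r'}(\obsOutcomeAdd{\test})\;\prod_{r'>r} p_{r'}(\obsOutcome)\;\middle|\;\obsOutcome},
\]
where $\delta_r(\outcomeTest{\test}) = \feci{r}{\obsOutcomeAdd{\test}} - \feci{r}{\obsOutcome}$ is the \emph{realized} (not expected) per-region gain. Your plan is to argue that both the gain factor and the survival factors are non-increasing under extension $\obsOutcome \preceq \obsOutcome'$ and then multiply the inequalities. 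But Lemma~\ref{lem:ec2} gives you only $\expect{\outcomeTest{\test}}{\delta_r \mid \obsOutcome} \geq \expect{\outcomeTest{\test}}{\delta_r' \mid \obsOutcome'}$, i.e.\ a comparison of \emph{expectations}, not a pointwise comparison $\delta_r(\outcomeTest{\test}) \geq \delta_r'(\outcomeTest{\test})$. Since the factor $\prod_{r'<r} p_{r'}(\obsOutcomeAdd{\test})$ that multiplies $\delta_r$ inside the expectation is itself a nonconstant function of $\outcomeTest{\test}$, you cannot pull it outside and use the expected-gain inequality. Concretely, in the decomposition
\[
\expect{}{\delta_r h_r} - \expect{}{\delta_r' h_r'} = \expect{}{\delta_r (h_r - h_r')} + \expect{}{(\delta_r - \delta_r') h_r'},
\]
the first summand is $\geq 0$ by pointwise monotonicity of $h_r$ and nonnegativity of $\delta_r$, but the second summand need not be: $\delta_r - \delta_r'$ can change sign across realizations of $\outcomeTest{\test}$ while $h_r'$ varies with $\outcomeTest{\test}$.

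This is exactly the obstruction that forces the paper (following \citet{chen2015submodular}) to abandon the purely compositional route and instead carry out a direct partial-derivative analysis. The paper first reduces to the two-region case, writes the cross term $\lambda(\nvec) = \expect{\outcomeTest{\test}}{(1-\feci{2}{\obsOutcomeAdd{\test}})\,\delta_1(\outcomeTest{\test})}$ as an explicit function of the version-space counts $n_{k,c}$, and shows $\partial\lambda/\partial n_{k,c} \geq 0$ by a lengthy case analysis that, crucially, also accounts for the extra self-edge terms $\gamma_a$ introduced in the modified weight function~(\ref{eq:weight_sub}). That computation is the substantive content of the lemma and cannot be bypassed by the factorization you propose; the general $\numRegion$-region case is then obtained recursively from the two-region result. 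Your strong adaptive monotonicity argument is fine.
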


\begin{proof}
We adapt the proof from Lemma 1 in \citet{chen2015submodular}. $\fdrd{\obsOutcome}$ can be shown to be strongly adaptive monotone from \citet{chen2015submodular} by showing each individual $\feci{i}{\obsOutcome}$ is strongly adaptive monotone. 

To proof adaptive submodularity, we must show that for all $\outcomeTestSet{\mathcal{A}} < \outcomeTestSet{\mathcal{B}}$ and $\test \in \testSet$ , we have $\gain{\drd}{\test}{\outcomeTestSet{\mathcal{A}}} \geq \gain{\drd}{\test}{\outcomeTestSet{\mathcal{B}}}$. We first show this for two problems in the noisy OR formulation. 

As shown in (7) in \citet{chen2015submodular}, we have
\begin{equation}
\begin{aligned}
\gain{\drd}{\test}{\outcomeTestSet{\mathcal{A}}} = (1 - \feci{1}{\obsOutcome})\expect{x_t}{\delta_2(x_t | \obsOutcome) | \obsOutcome} + \expect{x_t}{(1 - \feci{2}{\obsOutcomeAdd{\test}})\delta_1(x_t | \obsOutcome) | \obsOutcome}
\end{aligned}
\end{equation}

The first term satisfies
\begin{equation}
	(1 - \feci{1}{\obsOutcome})\expect{x_t}{\delta_2(x_t | \obsOutcome) | \obsOutcome} \geq (1 - \feci{1}{\outcomeTestSet{\mathcal{B}}})\expect{x_t}{\delta_2(x_t | \outcomeTestSet{\mathcal{B}}) | \outcomeTestSet{\mathcal{B}}}
\end{equation}

Let the second term be $\lambda(\nvec)$ and denote $h(\nvec) = (1 - \feci{2}{\obsOutcomeAdd{\test}})$. We will show $\frac{\partial \lambda(\nvec)}{\partial n_{k,c}} \geq 0$ for all $n_{k,c}$.

\begin{equation}
\begin{aligned}
\lambda(\nvec) &= \expect{x_t}{h(\nvec)\delta_1(x_t | \obsOutcome) | \obsOutcome} 
\end{aligned}
\end{equation}

The partial derivative $\frac{\partial \lambda(\nvec)}{\partial n_{k,c}}$ can be expressed as 

\begin{equation}
	\frac{\partial \lambda(\nvec)}{\partial n_{k,c}} = \sum\limits_a \frac{\partial h(\nvec)}{\partial n_{k,c}} \left( \frac{n_a}{\nv} p + \frac{n_a}{\nv} e_a + \frac{n_a}{\nv} \gamma_a \right) + \sum\limits_a h(\nvec) \frac{\partial}{\partial n_{k,c}} \left( \frac{n_a}{\nv} p + \frac{n_a}{\nv} e_a + \frac{n_a}{\nv} \gamma_a \right)
\end{equation}

Since $\frac{\partial h(\nvec)}{\partial n_{k,c}} \geq 0$, the first term is $\geq 0$. Expanding the second term, we have

\begin{equation}
\begin{aligned}
\label{eq:h_deriv}
&\sum\limits_a h(\nvec) \frac{\partial}{\partial n_{k,c}} \left( \frac{n_a}{\nv} p + \frac{n_a}{\nv} e_a + \frac{n_a}{\nv} \gamma_a \right) \\
&= h(\nvec) \underbrace{\frac{\partial}{\partial n_{k,c}} \left( \frac{n_c}{\nv} p + \frac{n_c}{\nv} e_c + \frac{n_c}{\nv} \gamma_c \right)}_{\encircle{A}} + 
\sum\limits_{a \neq c} h(\nvec) \underbrace{\frac{\partial}{\partial n_{k,c}} \left( \frac{n_a}{\nv} p + \frac{n_a}{\nv} e_a + \frac{n_a}{\nv} \gamma_a \right)}_{\encircle{B}}
\end{aligned}
\end{equation}

Expanding first term in $\encircle{A}$ 
\begin{equation}
\begin{aligned}
\frac{\partial}{\partial n_{k,c}} \left( \frac{n_a}{\nv} p \right)
&=\frac{n_c}{\nv} \underbrace{\frac{\partial}{\partial n_{k,c}} p}_{ = \sum\limits_{i \neq k, b \neq c} n_{i,b}} + 
\frac{p}{\nv} \underbrace{\frac{\partial}{\partial n_{k,c}} n_c}_{ = 1} +
p n_c  \frac{\partial}{\partial n_{k,c}} \left( \frac{1}{\nv} \right) \\
&= \frac{n_c}{\nv} \sum\limits_{i \neq k, b \neq c} n_{i,b} + \frac{p}{\nv} \left( 1 - \frac{n_c}{\nv}\right)
\end{aligned}
\end{equation}

Expanding second term in $\encircle{A}$ 
\begin{equation}
\begin{aligned}
\frac{\partial}{\partial n_{k,c}} \left( \frac{n_c}{\nv} e_c \right)
&=\frac{n_c}{\nv} \underbrace{\frac{\partial}{\partial n_{k,c}} e_c}_{ = 0} + 
\frac{e_c}{\nv} \underbrace{\frac{\partial}{\partial n_{k,c}} n_c}_{ = 1} +
e_c n_c  \frac{\partial}{\partial n_{k,c}} \left( \frac{1}{\nv} \right) \\
&= \frac{e_c}{\nv} \left( 1 - \frac{n_c}{\nv}\right)
\end{aligned}
\end{equation}

Expanding second term in $\encircle{A}$ 
\begin{equation}
\begin{aligned}
\frac{\partial}{\partial n_{k,c}} \left( \frac{n_c}{\nv} \gamma_c \right)
&=\frac{n_c}{\nv} \underbrace{\frac{\partial}{\partial n_{k,c}} \gamma_c}_{ = 0} + 
\frac{\gamma_c}{\nv} \underbrace{\frac{\partial}{\partial n_{k,c}} n_c}_{ = 1} +
\gamma_c n_c  \frac{\partial}{\partial n_{k,c}} \left( \frac{1}{\nv} \right) \\
&= \frac{\gamma_c}{\nv} \left( 1 - \frac{n_c}{\nv}\right)
\end{aligned}
\end{equation}

Putting things together $\encircle{A}$ evaluates to
\begin{equation}
\begin{aligned}
\frac{n_c}{\nv} \sum\limits_{i \neq k, b \neq c} n_{i,b} + \frac{p}{\nv} \left( 1 - \frac{n_c}{\nv}\right) + 
\frac{e_c}{\nv} \left( 1 - \frac{n_c}{\nv}\right) +
\frac{\gamma_c}{\nv} \left( 1 - \frac{n_c}{\nv}\right)
\end{aligned}
\end{equation}

Now the first term in $\encircle{B}$ evaluates to 
\begin{equation}
\begin{aligned}
\frac{\partial}{\partial n_{k,c}} \left( \frac{n_a}{\nv} p \right) 
&= \frac{n_a}{\nv} \underbrace{ \frac{\partial}{\partial n_{k,c}} p }_{= \sum\limits_{i \neq k, b \neq c} n_{i,b}} +
\frac{p}{\nv} \underbrace{ \frac{\partial}{\partial n_{k,c}} n_a }_{= 1} +
p n_a \frac{\partial}{\partial n_{k,c}} \left( \frac{1}{\nv} \right) \\
&= \frac{n_a}{\nv} \sum\limits_{i \neq k, b \neq c} n_{i,b} - \frac{p n_a}{\nv^2}
\end{aligned}
\end{equation}

The second term in $\encircle{B}$ evaluates to 
\begin{equation}
\begin{aligned}
\frac{\partial}{\partial n_{k,c}} \left( \frac{n_a}{\nv} e_a \right) 
&= \frac{n_a}{\nv} \underbrace{ \frac{\partial}{\partial n_{k,c}} e_a }_{= \sum\limits_{i \neq k} n_{i,c}} +
\frac{e_a}{\nv} \underbrace{ \frac{\partial}{\partial n_{k,c}} n_a }_{= 0} +
e_a n_a \frac{\partial}{\partial n_{k,c}} \left( \frac{1}{\nv} \right) \\
&=\frac{n_a}{\nv} \left( \sum\limits_{i \neq k} n_{i,c} - \frac{e_a}{\nv} \right)
\end{aligned}
\end{equation}

The third term in $\encircle{B}$ evaluates to 
\begin{equation}
\begin{aligned}
\frac{\partial}{\partial n_{k,c}} \left( \frac{n_a}{\nv} \gamma_a \right) 
&= \frac{n_a}{\nv} \underbrace{ \frac{\partial}{\partial n_{k,c}} \gamma_a }_{= 2n_{k,c}} +
\frac{\gamma_a}{\nv} \underbrace{ \frac{\partial}{\partial n_{k,c}} n_a }_{= 0} +
\gamma_a n_a \frac{\partial}{\partial n_{k,c}} \left( \frac{1}{\nv} \right) \\
&=\frac{n_a}{\nv} \left( 2n_{k,c} - \frac{\gamma_a}{\nv} \right)
\end{aligned}
\end{equation}

Combining $\encircle{B}$ 
\begin{equation}
\begin{aligned}
\frac{n_a}{\nv} \sum\limits_{i \neq k, b \neq c} n_{i,b} - \frac{p n_a}{\nv^2} +
\frac{n_a}{\nv} \left( \sum\limits_{i \neq k} n_{i,c} - \frac{e_a}{\nv} \right) +
\frac{n_a}{\nv} \left( \sum\limits_{i \neq k} 2n_{k,c} - \frac{\gamma_a}{\nv} \right) \\
\frac{n_a}{\nv} \left( \sum\limits_{i \neq k, b \neq c} n_{i,b} + \sum\limits_{i \neq k} n_{i,c} + 2n_{k,c} - 
\frac{1}{\nv} \left( p + e_a + \gamma_a\right) \right)
\end{aligned}
\end{equation}

Combining $\encircle{A}$ and $\encircle{B}$, (\ref{eq:h_deriv}) can be evaluated as
\begin{equation}
\begin{aligned}
\label{eq:h_deriv2}
& h(\nvec) \left( \underbrace{\frac{n_c}{\nv} \sum\limits_{i \neq k, b \neq c} n_{i,b}}_{ \geq 0} 
+ \underbrace{\frac{p}{\nv} \left( 1 - \frac{n_c}{\nv}\right)}_{\geq 0} + 
\underbrace{\frac{e_c}{\nv} \left( 1 - \frac{n_c}{\nv}\right)}_{\geq 0} +
\frac{\gamma_c}{\nv} \left( 1 - \frac{n_c}{\nv}\right)
\right) + \\
&\sum\limits_{a \neq c} h(\nvec) \frac{n_a}{\nv} \left( \sum\limits_{i \neq k, b \neq c} n_{i,b} + \sum\limits_{i \neq k} n_{i,c} +  2n_{k,c} - 
\frac{1}{\nv} \left( p + e_a + \gamma_a\right) \right) \\
& \geq h(\nvec) \frac{\gamma_c}{\nv} \left( 1 - \frac{n_c}{\nv}\right) +
\sum\limits_{a \neq c} h(\nvec) \frac{n_a}{\nv} \left( \sum\limits_{i \neq k, b } n_{i,b} + 2n_{k,c} - 
\frac{1}{\nv} \left( p + e_a + \gamma_a\right) \right) \\
& \geq h(\nvec) \gamma_c \left( 1 - \frac{n_c}{\nv}\right) +
\sum\limits_{a \neq c} h(\nvec) \frac{n_a}{\nv} \left( \nv \left( \sum\limits_{i \neq k, b } n_{i,b} + 2n_{k,c} \right) - 
\underbrace{\left( p + e_a + \gamma_a\right)}_{\encircle{C}} \right) \\
\end{aligned}
\end{equation}

Expanding $\encircle{C}$ we have
\begin{equation}
\begin{aligned}
( p + e_a + \gamma_a ) &\leq \sum\limits_{i<j} \sum\limits_{b\neq d} n_{i,b} n_{j,d} + \sum\limits_{i<j} \sum\limits_{b\neq a,c} n_{i,b} n_{j,b} + \sum\limits_{i} \sum\limits_{b\neq a,c} n_{i,b}^2 \\
&\leq \sum\limits_{i<j} \sum\limits_{b\neq d} n_{i,b} n_{j,d} + \sum\limits_{i<j} \sum\limits_{b\neq d} n_{i,b} n_{j,b} + \sum\limits_{i \neq k} \sum\limits_{b\neq a,c} n_{i,b}^2 + \sum\limits_{b\neq a,c} n_{k,b}^2 \\
&\leq \left( \sum\limits_{i,d} n_{i,d} \right) \left( \sum\limits_{j \neq k} \sum\limits_b n_{j,b} \right) +  \sum\limits_{b\neq a,c} n_{k,b}^2 \\
&\leq \nv \left( \sum\limits_{j \neq k} \sum\limits_b n_{j,b} \right) + \gamma_c 
\end{aligned}
\end{equation}

Substituting in (\ref{eq:h_deriv2}) we have
\begin{equation}
\begin{aligned}
& \geq h(\nvec) \gamma_c \left( 1 - \frac{n_c}{\nv}\right) +
\sum\limits_{a \neq c} h(\nvec) \frac{n_a}{\nv} \left( \nv \left( \sum\limits_{i \neq k, b } n_{i,b} + 2n_{k,c} \right) - 
\left( p + e_a + \gamma_a\right) \right) \\
& \geq h(\nvec) \gamma_c \left( 1 - \frac{n_c}{\nv}\right) +
\sum\limits_{a \neq c} h(\nvec) \frac{n_a}{\nv} \left( \nv \left( \sum\limits_{i \neq k, b } n_{i,b} + 2n_{k,c} \right) - 
\nv \left( \sum\limits_{j \neq k} \sum\limits_b n_{j,b} \right) + \gamma_c  \right) \\
& \geq h(\nvec) \gamma_c \left( 1 - \frac{n_c}{\nv}\right) +
\sum\limits_{a \neq c} h(\nvec) \frac{n_a}{\nv} \gamma_c \\
& \geq 0 \\
\end{aligned}
\end{equation}

Hence we have proved $\frac{\partial \lambda(\nvec)}{\partial n_{k,c}} \geq 0$ for all $n_{k,c}$. This implies adaptive submodularity is proved for $2$ regions. For more than $2$, we apply the recursive technique in Lemma 1 in  \citet{chen2015submodular}.

\end{proof}

\section{Proof of Theorem \ref{eq:drd_near_opt}}
\label{proof:bisect_nearopt}
\begin{theorem*}
Let $\numRegion$ be the number of regions, $\pminH$ the minimum prior probability of any hypothesis, $\policy_{DRD}$ be the greedy policy and $\policyOpt$ with the optimal policy. Then $\cost(\policy_{DRD}) \leq \cost(\policy^*)(2\numRegion \log \frac{1}{\pminH} + 1)$.
\end{theorem*}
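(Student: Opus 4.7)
The plan is to invoke the generic near-optimality guarantee for adaptive submodular, strongly adaptive monotone surrogates (Theorem 5.8 of Golovin and Krause), and then bound the only problem-specific quantity that enters that guarantee: the minimum positive gap of $\fdrd{\cdot}$ away from its target value $1$.

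First I would invoke Lemma \ref{lem:drd}, which certifies that $\fdrd{\obsOutcome}$ is strongly adaptive monotone and adaptive submodular with respect to the independent Bernoulli prior. Since the policy terminates exactly when $\fdrd{\obsOutcome} = 1$ (so the target value is $Q = 1$), the general result for self-certifying adaptive submodular functions gives
\[
\cost(\policy_{DRD}) \leq \cost(\policyOpt)\left(\ln\frac{1}{\eta} + 1\right),
\]
where $\eta > 0$ is any lower bound on the positive gap $1 - \fdrd{\obsOutcome}$ over non-terminal $\obsOutcome$.

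Next I would bound $\eta$ by exploiting the Noisy-OR form
\[
1 - \fdrd{\obsOutcome} = \prod_{r=1}^{\numRegion} \bigl(1 - \feci{r}{\obsOutcome}\bigr).
\]
For a non-terminal $\obsOutcome$, each factor is strictly positive, and using (\ref{eq:fec_applied})--(\ref{eq:weight_sub_pruned}) one can rewrite it as the ratio $\wec(\{\subregion_i^r\} \cap \hypSpaceR(\obsOutcome))/\wec(\{\subregion_i^r\})$. The denominator is at most $1$, and the numerator is a sum of edge weights in the modified $\ecsq$ graph for region $r$; since at least one edge (or self-edge) of weight $P(\hyp)P(\hyp') \ge (\pminH)^2$ remains uncut as long as the subproblem is unresolved, we obtain $1 - \feci{r}{\obsOutcome} \geq (\pminH)^2$. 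Taking the product over the $\numRegion$ regions yields
\[
1 - \fdrd{\obsOutcome} \;\geq\; (\pminH)^{2\numRegion} \;=:\; \eta.
\]

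Plugging this into the Golovin--Krause inequality gives
\[
\cost(\policy_{DRD}) \leq \cost(\policyOpt)\left(\ln\frac{1}{(\pminH)^{2\numRegion}} + 1\right) = \cost(\policyOpt)\left(2\numRegion \log\frac{1}{\pminH} + 1\right),
\]
which is the desired bound. The main obstacle is the combinatorial bookkeeping needed to show $1 - \feci{r}{\obsOutcome} \geq (\pminH)^2$: one must verify that the self-edge modification introduced in (\ref{eq:weight_sub}), which replaces the Golovin--Krause weight (\ref{eq:weight_golovin}) to gain the $\bigo{E}$ runtime, never shrinks the residual weight of uncut edges below a single $P(\hyp)P(\hyp')$ term, and that conditioning on partial observations cannot make the residual collapse faster than the total weight. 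All remaining steps are routine applications of the adaptive submodularity framework and the algebraic identities derived in Section~\ref{sec:algorithm}.
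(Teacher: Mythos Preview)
Your proposal is correct and follows the same route as the paper. The paper's own proof is a single sentence: it invokes Theorem~2 of \citet{chen2015submodular} directly. What you have done is unpack that citation---combining Lemma~\ref{lem:drd} with the Golovin--Krause self-certifying guarantee and bounding the minimum residual $1-\fdrd{\obsOutcome}\ge(\pminH)^{2\numRegion}$ via the Noisy-OR factorization---which is precisely the mechanism behind Chen et~al.'s Theorem~2. So the approaches coincide; you have simply made explicit the argument the paper delegates to a reference.
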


\begin{proof}
This is a straightforward application of Theorem 2 in \citet{chen2015submodular}.
\end{proof}

\section{Proof of Theorem \ref{thm:max_prob}}
\label{sec:proof:max_prob}
\begin{theorem*}
A policy that greedily latches to a region according the the posterior conditioned on the region outcomes has a near-optimality guarantee of 4 w.r.t the optimal region evaluation sequence.
\end{theorem*}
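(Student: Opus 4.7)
The plan is to recast the region-checking subproblem as an instance of weighted \emph{Min-Sum Set Cover} (MSSC) and then invoke the classical greedy 4-approximation of Feige, Lov\'asz, and Tetali. In the region evaluation model, a policy commits to a region at each step, pays unit cost to verify whether every test in that region evaluates to true, and terminates as soon as some checked region is fully valid. By the problem assumption (at least one region is valid for every ground truth) we never fail to terminate, so the stochastic cost under policy $\sigma$ is precisely the index of the first region in $\sigma$'s evaluation order that is valid for the realized ground truth.

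First I would spell out the reduction. Treat each hypothesis $\hyp \in \hypSpace$ as an element with weight $P(\hyp)$ and each region $\region_i$ as the set $S_i := \regionH_i$ of hypotheses under which $\region_i$ is valid. Universe-coverage, $\bigcup_i S_i = \hypSpace$, follows from the guarantee that some region is always valid. For any ordering $\sigma$ of the regions, the expected cost is
\[
\sum_{\hyp \in \hypSpace} P(\hyp)\,\min\{k : \hyp \in S_{\sigma(k)}\},
\]
which is exactly the weighted MSSC objective. Thus the optimal region evaluation sequence and the MSSC optimum coincide.

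Second I would verify that the posterior-latching policy implements the MSSC greedy rule. After regions $\region_{i_1},\ldots,\region_{i_k}$ have been checked and all found invalid, the set of \emph{uncovered} elements is $U_k = \bigcap_{j=1}^{k} \Not{\regionH_{i_j}}$. Greedy MSSC selects the region maximizing the mass of newly-covered uncovered elements, namely $P(\regionH_i \cap U_k)$. The posterior-latching rule picks the region maximizing
\[
P(\region_i \text{ valid} \mid \text{all previous regions invalid}) \;=\; \frac{P(\regionH_i \cap U_k)}{P(U_k)}.
\]
Since $P(U_k)$ is independent of $i$, the two rules are identical.

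Finally, I would invoke the Feige--Lov\'asz--Tetali theorem, which states that the greedy algorithm is a 4-approximation for weighted MSSC. Applied through the reduction, this yields $\cost(\policy_{\text{latch}}) \leq 4\,\cost(\policy_{\text{opt-region}})$. The main obstacle is making the modeling assumption explicit: the reduction requires that a region check returns a single bit (``valid'' or ``not valid''), so that the policy's state is exactly the set of regions ruled out. If one instead allowed per-edge observations during a region check, a sharper guarantee could in principle be proved, but the factor-4 bound already follows from the coarser observation model and the FLT result, matching the statement of the theorem.
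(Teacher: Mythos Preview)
Your argument is correct and follows the same strategy as the paper: reduce the coarse region-evaluation problem to a known combinatorial problem admitting a greedy 4-approximation, check that posterior latching coincides with the standard greedy rule, and invoke the bound. The only substantive difference is the reduction target. The paper casts region checking as the binary-vector search problem of \citet{dor1998greedy} (each coordinate is a region, coordinate $i$ equals $1$ iff $\region_i$ is valid under the sampled hypothesis) and quotes their factor-4 result directly. You instead cast it as weighted Min-Sum Set Cover with elements $\hyp\in\hypSpace$ and sets $S_i=\regionH_i$, then invoke Feige--Lov\'asz--Tetali. The two formulations are equivalent: identifying each binary vector with the hypothesis that produced it, the Dor--Zwick cost $\mathbb{E}[\text{first }1\text{-coordinate}]$ is exactly the MSSC objective $\sum_{\hyp}P(\hyp)\min\{k:\hyp\in S_{\sigma(k)}\}$, and in both settings the greedy rule is ``pick the region with largest posterior mass among uncovered hypotheses''. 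Your coverage assumption $\bigcup_i S_i=\hypSpace$ is precisely the feasibility constraint in the paper's problem~(\ref{eq:drd}); the Dor--Zwick formulation additionally handles the all-zeros case natively, but under the stated assumption this is moot. Your final caveat about the coarse observation model (region outcome only, not per-edge outcomes) also matches the paper's closing remark that \algMaxProbReg, which conditions on individual edge evaluations, should only do better.
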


\begin{proof}
We establish an equivalence to the problem of greedy search on a binary vector as described in \citet{dor1998greedy}.

\begin{problem}
Consider the $n$-dimensional binary space with some (general) probability distribution. Suppose that a random vector is sampled from this space and it is initially unseen. A search algorithm on such a vector is a procedure inspecting one coordinate at a time in a pre-determined order. It terminates when a 1-coordinate is found or when all coordinates were tested and found to be 0. A greedy search is one that goes at each stage to the next coordinate most likely to be 1, taking into account the findings of the previous examinations and the distribution. Can we bound the performance of the greedy search with a search optimal in expectation? 
\end{problem}

\citet{dor1998greedy} proves the following
\begin{theorem}
The expectation of the greedy algorithm (denoted $E_1$) is always less than $4$ times the expectation of the optimal algorithm (denoted $E_2$)
\end{theorem}

If we imagine each region $\region_i$ to be a coordinate, then an algorithm that greedily selects regions to evaluate based on the outcomes of previous region check has a bounded sub-optimality. We note that \algMaxProbReg uses a more accurate posterior as it has access to the individual results of edge evaluation. Hence it is expected to do better than the greedy algorithm that only conditions on the outcome of the region evaluation.

\end{proof}

\section{Proof of Theorem \ref{thm:sub_opt}}
\label{sec:proof:sub_opt}

\begin{theorem*}
  Let $\pmin = \min_i P(\region_i)$, $\pminH = \min_{\hyp \in \hypSpace} P(\hyp)$ and $l = \max_i \abs{\region_i}$. The policy using (\ref{eq:cand_test_set:maxp}) has a suboptimality of $\alpha \left(2 \numRegion \log \left( \frac{1}{\pminH} \right) + 1 \right)$ where 
$\alpha \leq \left( 1 -   \max \left( (1 - \pmin)^2, \pmin^{\frac{2}{l}} \right) \right)^{-1}$.
\end{theorem*}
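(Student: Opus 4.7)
The plan is to reduce to Theorem~\ref{eq:drd_near_opt} via the $\alpha$-approximate greedy extension of the Golovin--Krause guarantee for strongly adaptive monotone, adaptive submodular functions. Since Lemma~\ref{lem:drd} already establishes the required structure on $\fdrd$, it suffices to show that the restricted rule (\ref{eq:cand_test_set:maxp}) is $\alpha$-approximate greedy at every reachable state $\obsOutcome$, i.e.
\[
\max_{t \in \maxProbTestSet} \gain{\drd}{t}{\obsOutcome} \;\geq\; \frac{1}{\alpha}\, \max_{t \in \testSet} \gain{\drd}{t}{\obsOutcome}.
\]
Once such a factor $\alpha$ is in hand, the approximate-greedy version of Theorem~\ref{eq:drd_near_opt} immediately yields the claimed suboptimality bound $\alpha\bigl(2\numRegion \log(1/\pminH) + 1\bigr)$.

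To bound $\alpha$, I would unfold $\gain{\drd}{t}{\obsOutcome}$ via the Noisy-OR form~(\ref{eq:fdrd_applied}): the gain is an expected drop in the product $\prod_r \bigl(1 - \feci{r}{\obsOutcome}\bigr)$, and so decomposes telescopically across regions. Let $\region^\star$ be the region attaining the maximum posterior probability at $\obsOutcome$, so $\maxProbTestSet = \region^\star \setminus \selTestSet$. Because $\bigl(1 - \feci{\star}{\obsOutcome}\bigr)$ is the dominant factor in the Noisy-OR product when $\region^\star$ is most probable, the ratio of the best gain attainable in $\maxProbTestSet$ to the best unrestricted gain reduces, up to the telescoping constants from the Noisy-OR structure, to the same ratio computed in the one-region-versus-all subproblem of $\region^\star$.

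The main technical step is therefore a lower bound, inside this single subproblem, on the best marginal gain achievable by a test in $\region^\star$ relative to the best gain achievable by any test. I would proceed by case analysis on the biases of tests inside $\region^\star$. Case (i): some $t \in \region^\star$ has bias bounded away from $1$; then the negative outcome of $t$ cuts the subproblem factor by a fraction controlled by $\pmin$, producing a lower bound of the form $1 - (1-\pmin)^2$. Case (ii): every test in $\region^\star$ has bias close to $1$; since $\prod_{t \in \region^\star}\biasTest{t} = P(\region^\star) \geq \pmin$ with $\abs{\region^\star} \leq l$, the geometric mean forces some $t \in \region^\star$ with $\biasTest{t} \geq \pmin^{1/l}$, and for this $t$ the positive-outcome contribution (appearing squared in the $\ec$ expression) dominates, yielding a bound of the form $1 - \pmin^{2/l}$. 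Taking the less favorable of the two regimes delivers $\alpha \leq \bigl(1 - \max\bigl((1-\pmin)^2,\, \pmin^{2/l}\bigr)\bigr)^{-1}$.

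The main obstacle is the bookkeeping that converts this per-factor ratio into a global ratio on $\gain{\drd}{\cdot}{\cdot}$: an unrestricted greedy test may simultaneously pull down many factors $\bigl(1 - \feci{r}{\obsOutcome}\bigr)$ in the Noisy-OR product, while a test in $\region^\star$ acts only on factors of regions that actually contain it. Here I would rely on the adaptive submodularity of $\fdrd$ established in Lemma~\ref{lem:drd}, combined with the fact that the factor $\bigl(1 - \feci{\star}{\obsOutcome}\bigr)$ is the largest in the product when $\region^\star$ is most probable, to argue that the $\region^\star$-factor accounts for at least a constant share of the aggregate Noisy-OR drop. Competing only on this single factor is then enough to preserve the $1/\alpha$ approximation ratio globally, and plugging the resulting $\alpha$ into the approximate-greedy bound completes the proof.
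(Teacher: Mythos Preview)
Your high-level reduction is the same as the paper's: show that the \algMaxProbReg-constrained rule is an $\alpha$-approximate greedy policy for $\fdrd$, then invoke the $\alpha$-approximate version of the Golovin--Krause guarantee (their Theorem~11) to upgrade Theorem~\ref{eq:drd_near_opt} to the stated bound. That part is fine.

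The gap is in how you bound $\alpha$. The paper does not decompose the Noisy-OR gain telescopically or argue about ``dominant factors''; instead it upper-bounds $\alpha$ by the cruder ratio $\bigl(\max_{t\in\candTestSet}\gain{\drd}{t}{\obsOutcome}\bigr)/\bigl(\min_{t\in\maxProbTestSet}\gain{\drd}{t}{\obsOutcome}\bigr)$, rewrites both numerator and denominator in terms of a residual $\rho(t\mid\obsOutcome)$ after dividing through by $\prod_r(1-P(\region_r\mid\obsOutcome))$, and then exhibits an explicit worst-case instance: the most probable region $\region_1$ is isolated with $P(\region_1)=\pmin$, while the remaining $\numRegion-1$ regions all share a single test $a$. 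In that instance the unrestricted greedy picks $a$ and drives the numerator residual to $0$ (so the numerator of the $\alpha$ bound tends to $1$), while the denominator is governed by $\max_{\tau\in\region_1}\rho(\tau\mid\obsOutcome)$. That last quantity is computed as a concave function of the bias $\bias_\tau$ of the single most-probable test in $\region_1$, and the two extremal configurations $\bias_\tau=\pmin$ (one test carries all the mass) and $\bias_\tau=\pmin^{1/l}$ (mass spread evenly over $l$ tests) give exactly $(1-\pmin)^2$ and $\pmin^{2/l}$.

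Your route tries to avoid the worst-case construction by a structural argument on the Noisy-OR product, but two steps do not go through as written. First, the claim that $(1-\feci{\star}{\obsOutcome})$ is the ``dominant factor'' when $\region^\star$ is most probable is at best imprecise: higher posterior $P(\region^\star\mid\obsOutcome)$ makes $1-\feci{\star}{\obsOutcome}$ \emph{smaller}, not larger, so the sense in which it ``dominates'' the drop needs a real argument. Second, and more seriously, your own ``main obstacle'' paragraph is where the proof actually lives, and the proposed resolution---adaptive submodularity of $\fdrd$ plus the size of the $\region^\star$-factor---does not by itself control the ratio when the unconstrained greedy test hits many factors simultaneously (exactly the shared-test configuration the paper exploits). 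Adaptive submodularity gives you diminishing returns across observation states, not a per-factor share inequality at a fixed state. Without something equivalent to the paper's explicit residual calculation in the worst case, you do not obtain the specific constants $(1-\pmin)^2$ and $\pmin^{2/l}$; your case split into ``some bias bounded away from $1$'' versus ``all biases close to $1$'' is heuristic and does not cleanly produce those two expressions as endpoints of a single bound.
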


\begin{proof}
We start of by defining policies that do not greedily maximize $\fdrd{\obsOutcome}$

\begin{definition}
	Let an $\alpha$-approximate greedy policy be one that selects a test $\test'$ that satisfies the following criteria
	\begin{equation*}
	\gain{\drd}{\test'}{\obsOutcome} \geq \frac{1}{\alpha} \max\limits_{\test} \gain{\drd}{\test}{\obsOutcome}
	\end{equation*}
\end{definition}

We examine the scenarios where cost is uniform $\cost(\test) = 1$ for the sake of simplicity - the proof can be easily extended to non-uniform setting. We refer to the policy using (\ref{eq:cand_test_set:maxp}) as a test constraint as a \algMaxProbReg policy.

The marginal gain is evaluated as follows
\begin{equation}
\begin{aligned}
	\gain{\drd}{\test}{\obsOutcome} &= \mathbb{E}_{\outcomeTest{\test}} \left[ 
       \prod\limits_{r=1}^\numRegion  
      \left(1 - \prod\limits_{i \in (\region_r \cap \selTestSet)} \Ind(\outcomeVarTest{i} = 1) \prod\limits_{j \in (\region_r \setminus \selTestSet)} \biasTest{j} \right) \right.\\
      & - \left. \left( \prod\limits_{r=1}^\numRegion 
      \left(1 - \prod\limits_{i \in (\region_r \cap \selTestSet \cup \test)} \Ind(\outcomeVarTest{i} = 1) \prod\limits_{j \in (\region_r \setminus \selTestSet \cup \test)} \biasTest{j} \right) \right)
      ( \biasTest{t}^{\outcomeTest{\test}} (1-\biasTest{t})^{1-\outcomeTest{\test}} )^{2\sum\limits_{k=1}^{m} \Ind(\test \in \region_k)} \right]
\end{aligned}
\end{equation}

We will now bound $\alpha$, the ratio of marginal gain of the unconstrained greedy policy and \algMaxProbReg. 

\begin{equation}
\begin{aligned}
\alpha &\leq \frac{ \max\limits_{\test \in \candTestSet} \gain{\drd}{\test}{\obsOutcome} } { \min\limits_{\test \in \maxProbTestSet} \gain{\drd}{\test}{\obsOutcome} }
\end{aligned}
\end{equation}

The numerator and denominator contain $\prod\limits_{r=1}^\numRegion \left(1 - \prod\limits_{i \in (\region_r \cap \selTestSet)} \Ind(\outcomeVarTest{i} = 1) \prod\limits_{j \in (\region_r \setminus \selTestSet)} \biasTest{j} \right)$, the posterior probabilities of regions not being valid. Hence we normalize by dividing this term and expressing $\alpha$ in terms of a residual function $\rho (t | \obsOutcome)$

\begin{equation}
\begin{aligned}
\label{eq:subopt_bound}
\alpha &\leq \frac{ 1 - \min\limits_{\test \in \candTestSet} \rho(t | \obsOutcome)  } {1 - \max\limits_{\test \in \maxProbTestSet} \rho(t | \obsOutcome) }
\end{aligned}
\end{equation}

where the residual function $\rho (t | \obsOutcome)$ is
\begin{equation}
	\rho (t | \obsOutcome) = \frac{ \expect{\outcomeTest{\test}}{\left( \prod\limits_{r=1}^\numRegion 
      \left(1 - \prod\limits_{i \in (\region_r \cap \selTestSet \cup \test)} \Ind(\outcomeVarTest{i} = 1) \prod\limits_{j \in (\region_r \setminus \selTestSet \cup \test)} \biasTest{j} \right) \right)
      ( \biasTest{t}^{\outcomeTest{\test}} (1-\biasTest{t})^{1-\outcomeTest{\test}} )^{2\sum\limits_{k=1}^{m} \Ind(\test \in \region_k)}} }
      { \prod\limits_{r=1}^\numRegion \left(1 - \prod\limits_{i \in (\region_r \cap \selTestSet)} \Ind(\outcomeVarTest{i} = 1) \prod\limits_{j \in (\region_r \setminus \selTestSet)} \biasTest{j} \right) }
\end{equation}

\begin{figure}[t]
    \centering
    \includegraphics[page=1,width=\textwidth]{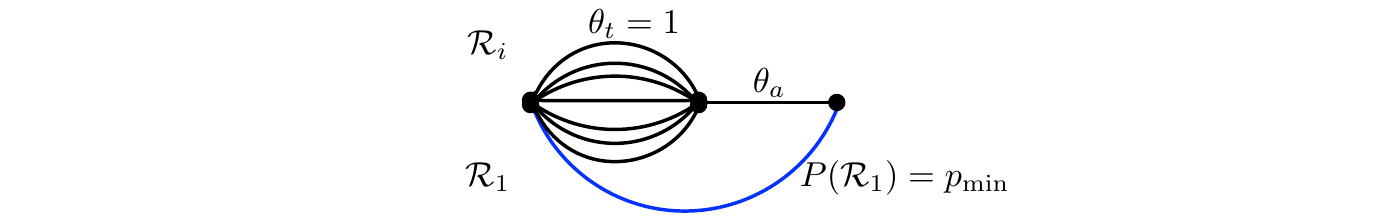}
    \caption{%
    \label{fig:subopt_bound_scenario}
    The scenario where sub-optimality bound is maximized}
\end{figure}%

We will now claim that the bound is maximized in the scenario shown in Fig.~\ref{fig:subopt_bound_scenario}. The most likely region, $R_1$, is an isolated path which contains no tests in common with other regions. Let the probability of this region be $\pmin$ - this is the smallest probability that can be assigned to it. All other $\numRegion-1$ regions (of lower probabilty) share a common test $a$ of probability $\bias_a$. The remaining tests in these regions have probability $1$. Note that $\bias_a \leq \pmin$. In this scenario, the greedy policy will select the common test while the \algMaxProbReg will select a test from the most probably region. 

We will now show that this scenario allows us to realize the upper bound $1$ for the numerator in (\ref{eq:subopt_bound}). In other words, we will show that the residual function $\rho (a | \obsOutcome) \ll 1$ in our scenario.
\begin{equation}
\begin{aligned}
\rho (a | \obsOutcome) & \leq \frac{ \bias_a \pmin \prod\limits_{r=1}^{\numRegion-1} (1 - 1) \bias_a^{2(\numRegion-1)} \;+\; (1 - \bias_a) \pmin \prod\limits_{r=1}^{\numRegion-1} (1 - 0) (1 - \bias_a)^{2(\numRegion-1)} }
{ \pmin \prod\limits_{r=1}^{\numRegion-1} (1 - \bias_a) } \\
& \leq \frac{ (1 - \bias_a)^{2\numRegion - 1} } {  \prod\limits_{r=1}^{\numRegion-1} (1 - \bias_a) } \\
& \leq (1 - \bias_a)^\numRegion 
\end{aligned}
\end{equation}
We can drive $\rho (a | \obsOutcome) \ll 1$ by setting $\numRegion$ arbitrarily high.

We now show that scenario also allows us to bound the denominator in (\ref{eq:subopt_bound}) by maximizing  $\max\limits_{\test \in \region_1} \rho(t | \obsOutcome) $. We first note that by selecting a test that belongs only to one region, the residual is maximized. We have to figure out how large the residual can be. 
Let $\tau = \argmax\limits_{\test \in \region_1} \rho(t | \obsOutcome) $  be the most probable test with probability $\bias_\tau$. Let $\beta = \prod\limits_{\test \in \region_1, \test \neq \tau} \bias_t$ be the lumped probability of all other tests. Note that $\theta_\tau \beta = \pmin$. The residual can be expressed as
\begin{equation}
\begin{aligned}
\label{eq:demon_bound}
\rho (\tau | \obsOutcome) & \leq \frac{ \bias_\tau \prod\limits_{r=1}^{\numRegion-1} (1 - \bias_a) (1 - \beta) \bias_\tau^2 \;+\; (1 - \bias_\tau) \prod\limits_{r=1}^{\numRegion-1} (1 - \bias_a) (1 - \bias_\tau)^2 }
{ \prod\limits_{r=1}^{\numRegion-1} (1 - \bias_a) (1 - \bias_\tau \beta)} \\
& \leq \frac{ \bias_\tau^3 (1-\beta) + (1 - \bias_\tau)^3 } { (1 - \bias_\tau\beta) } \\
& \leq \frac{ \bias_\tau^3 (1-\beta) + \bias_\tau^2 - \bias_\tau^2 + (1 - \bias_\tau)^3 } { (1 - \bias_\tau\beta) } \\
& \leq \frac{ \bias_\tau^2 (1-\bias_\tau \beta) - \bias_\tau^2 (1 - \bias_\tau) + (1 - \bias_\tau)^3 } { (1 - \bias_\tau\beta) } \\
& \leq \bias_\tau^2 - \frac{ (1 - \bias_\tau)(2\bias_\tau - 1) } { (1 - \pmin) } \\
\end{aligned}
\end{equation}

This bound is concave and achieves maxima on the two extrema. In the first case, we assume $\beta = 0$, $\theta_\tau = \pmin$. This leads to 
\begin{equation}
\begin{aligned}
\rho (\tau | \obsOutcome) & \leq \pmin^2 - \frac{ (1 - \pmin)(2\pmin - 1) } { (1 - \pmin) } \\
& \leq \pmin^2 - (2\pmin - 1) \\
& (1 - \pmin)^2
\end{aligned}
\end{equation}

In the second case, we assume $\beta = \theta_\tau$. Lete $l$ be the maximum test in any region. Then $\theta_\tau = \pmin^{\frac{1}{l}}$. This leads to 
\begin{equation}
\begin{aligned}
\rho (\tau | \obsOutcome) & \leq \bias_\tau^2 - \frac{ (1 - \bias_\tau)(2\bias_\tau - 1) } { (1 - \pmin) } \\
& \leq \bias_\tau^2 \\
& \leq  \pmin^{\frac{2}{l}} \\
\end{aligned}
\end{equation}

Combining these we have
\begin{equation}
	\label{eq:rho_bound}
	\rho (\tau | \obsOutcome) \leq \max \left( (1 - \pmin)^2, \pmin^{\frac{2}{l}} \right)
\end{equation}

Substituting (\ref{eq:rho_bound}) in (\ref{eq:subopt_bound}) we have
\begin{equation}
	\label{eq:final_alpha}
	\alpha \leq \frac{1}{ 1 -   \max \left( (1 - \pmin)^2, \pmin^{\frac{2}{l}} \right)} 
\end{equation}

We now use Theorem 11 in \citet{golovin2011adaptive} to state that an $\alpha$-approximate greedy policy $\policy$ optimizing $\fdrd{\obsOutcome}$ enjoys the following guarantee
\begin{equation}
	\cost(\policy) \leq \alpha \cost(\policyOpt) \left(2 \numRegion \log\left(\frac{1}{\pminH}\right) + 1\right) 
\end{equation}

\end{proof}

\section{Proof of Theorem \ref{thm:set_cover}}
\label{sec:proof:set_cover}

\begin{theorem*}
\algSetCover is a near-optimal policy for checking all regions. 
\end{theorem*}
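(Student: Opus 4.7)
The plan is to cast ``optimally checking all regions'' as an adaptive stochastic coverage problem and then invoke the adaptive submodularity framework of \citet{golovin2011adaptive}. The objective of checking all regions is to determine, for every region $\region_i$, whether it is valid (all tests in it evaluate to $1$) or invalid (some test in it evaluates to $0$). Equivalently, a region is \emph{resolved} once either all of its tests have been evaluated to $1$ or one of its tests has been evaluated to $0$; the problem is to find a policy that resolves all regions with minimum expected cost.

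First, I would define a coverage objective $g(\obsOutcome)$ that counts, summed over regions, the number of tests in that region that are either (i) already evaluated or (ii) lie in a region that has already been shown to be invalid by some observed $0$. Formally, $g(\obsOutcome) = \sum_{i=1}^{\numRegion} g_i(\obsOutcome)$ where $g_i(\obsOutcome) = |\region_i|$ if $P(\region_i \mid \obsOutcome) = 0$ and $g_i(\obsOutcome) = |\region_i \cap \selTestSet|$ otherwise. By construction $g$ is upper-bounded by $Q = \sum_i |\region_i|$, and the policy terminates exactly when $g(\obsOutcome) = Q$, i.e., when every region is resolved. The key step is to verify that $g$ is strongly adaptive monotone (observing more tests can never decrease $g_i$, since either the evaluated-test count grows or the region transitions from alive to dead) and adaptive submodular (the marginal gain from a test diminishes as more tests are observed, because once a region is killed its contribution saturates at $|\region_i|$ and never grows again).

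Second, I would verify that the \algSetCover rule in equation (\ref{eq:policy_set_cover}) is precisely the greedy maximizer (up to a constant) of the expected marginal gain of $g$ per unit cost. The expected gain from selecting $\test$ decomposes by region: with probability $\biasTest{\test}$ the test is valid and contributes $1$ to $g_i$ for every region $\region_i$ containing $\test$; with probability $1 - \biasTest{\test}$ it kills every alive region containing $\test$ and so contributes the number of previously-uncovered tests in those regions. The dominant term governing discrimination among candidate tests is exactly the $(1 - \biasTest{\test}) \cdot |{\cdot}|$ expression in (\ref{eq:policy_set_cover}), so maximizing this is equivalent to greedily maximizing $\Delta_g(\test \mid \obsOutcome)$ up to additive terms that affect all tests comparably.

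Third, with strong adaptive monotonicity and adaptive submodularity established, I would apply Theorem 40 of \citet{golovin2011adaptive} to the coverage problem $g$ with threshold $Q$: the greedy policy $\policy_{\algSetCover}$ satisfies $\cost(\policy_{\algSetCover}) \leq \cost(\policyOpt)\left(\ln(Q/\eta) + 1\right)$, where $\eta$ is the minimum positive marginal gain of $g$, yielding the claimed near-optimality. The main obstacle will be the second step: showing that the \algSetCover rule, which only scores the ``test $= 0$'' branch, is exactly the greedy rule for $g$ rather than a heuristic approximation. I expect this to require a careful bookkeeping argument distinguishing the ``all observed to $1$'' contribution (which is identical for every candidate in an alive region) from the ``kill'' contribution (which is the discriminating term), so that argmax over $\Delta_g$ coincides with argmax over (\ref{eq:policy_set_cover}).
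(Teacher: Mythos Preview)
Your reduction to adaptive stochastic coverage and appeal to \citet{golovin2011adaptive} is the right strategy and matches the paper's proof in spirit. However, your choice of coverage function creates a real gap in the second step.

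With your $g(\obsOutcome)=\sum_i g_i(\obsOutcome)$, a test $\test$ that lies in $k$ alive regions contributes $k$ to the ``$\test=1$'' branch (one unit per region), not a constant; and on the ``$\test=0$'' branch the gain is $\sum_{i:\test\in\region_i,\text{alive}}(|\region_i|-|\region_i\cap\selTestSet|)$, which counts newly covered tests with multiplicity across regions. Neither matches the \algSetCover score in (\ref{eq:policy_set_cover}), which is the cardinality of a \emph{set union} (no multiplicity). So your claim that the $\test=1$ contribution ``is identical for every candidate'' is false for this $g$, and the argmax of $\Delta_g$ is in general \emph{not} the \algSetCover rule. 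The bookkeeping argument you anticipate cannot close this gap, because it is a structural mismatch, not an additive constant.

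The paper sidesteps this by taking the ground set to be $\testSet$ itself and defining the coverage function
\[
\hat f(\obsOutcome)\;=\;\Bigl|\,\selTestSet\;\cup\;\textstyle\bigcup_{i:\,P(\region_i\mid\obsOutcome)=0}\region_i\,\Bigr|,
\]
i.e., the number of tests either already evaluated or lying in some region already killed. With this choice, the $\test=1$ branch always contributes exactly $1$ (only $\test$ itself is newly covered), so the expected gain is $1+(1-\biasTest{\test})\cdot|\cdot|$ with the set exactly as in (\ref{eq:policy_set_cover}); the constant $1$ drops from the argmax and \algSetCover is the \emph{exact} greedy policy. This also gives the tighter bound $\cost(\policy)\le\cost(\policyOpt)(\log n+1)$ with $n=|\testSet|$, whereas your $Q=\sum_i|\region_i|$ would be larger. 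If you simply replace your per-region sum $g$ by this union-based $\hat f$, the rest of your outline goes through.
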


We present a refined version of the theorem that we will prove.

\begin{theorem*}
Let $\policy$ be the \algSetCover policy, which is a partial mapping from observation vector $\obsOutcome$ to tests, such that it terminates only when all regions $\region_i$ are either completely evaluated or invalidated. Let the expected cost of such a policy be $\cost(\policy)$.
Let $\policyOpt$ be the optimal policy for checking all regions. Let $\numTest = \abs{\testSet}$ be the number of tests.
\algSetCover enjoys the following guarantee
\begin{equation*}
	\cost(\policy) \leq \cost(\policyOpt) ( \log (\numTest) + 1) 
\end{equation*}
\end{theorem*}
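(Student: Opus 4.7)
\emph{Proof proposal.} The plan is to reduce the problem of checking all regions to an instance of adaptive stochastic submodular set cover and then invoke the greedy approximation guarantee of \citet{golovin2011adaptive}. First, I would introduce the coverage objective
\[
f(\obsOutcome) \;=\; \bigl|\{t \in \testSet \,:\, t \in \selTestSet \text{ or every } \region_i \ni t \text{ satisfies } P(\region_i \mid \obsOutcome) = 0\}\bigr|,
\]
i.e., the number of tests that are either already evaluated or contained only in regions that have been invalidated. Note that $f$ is integer-valued, $f(\emptyset) \geq 0$, and $f(\obsOutcome) = \numTest$ exactly when every region is resolved (fully evaluated or invalidated), which is precisely the termination condition of the policy. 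Thus the cost of any policy that checks all regions equals the number of tests it performs until $f$ is saturated, and we have the trivial upper bound $Q := \max f = \numTest$.

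Second, I would rewrite the \algSetCover selection rule as greedy maximization of the expected marginal gain of $f$. The marginal gain of a test $t \notin \selTestSet$ decomposes as $\mathbb{E}[f(\obsOutcomeAdd{t}) - f(\obsOutcome)\mid\obsOutcome] = 1 + (1-\biasTest{t}) \cdot X_t(\obsOutcome)$, where the constant $+1$ comes from $t$ itself being evaluated and $X_t(\obsOutcome)$ is the number of additional tests eliminated when $t$ evaluates to $0$, i.e.\ tests $u$ whose every still-viable region contains $t$. Since the $+1$ term is constant across tests, $\argmax_t \mathbb{E}[\Delta_f(t\mid\obsOutcome)] = \argmax_t (1-\biasTest{t})\,X_t(\obsOutcome)$, which is exactly rule~(\ref{eq:policy_set_cover}). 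Unit costs make the reduction to adaptive stochastic cover immediate.

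Third, I would verify the two structural properties needed to invoke the cover theorem: strong adaptive monotonicity of $f$ (trivial: any observation can only add tests to $\selTestSet$ or invalidate additional regions, never un-cover a test) and adaptive submodularity of $f$. For the latter, I would introduce the refined pair-level potential
\[
h(\obsOutcome) \;=\; \bigl|\{(\region_i, s) \,:\, s \in \region_i,\; s \in \selTestSet \text{ or } P(\region_i\mid\obsOutcome) = 0\}\bigr|,
\]
and show, by the same kind of element-wise derivative calculation used in Appendix~\ref{sec:proof:lem_ec2}, that $h$ is adaptively submodular: given $\obsOutcome \subseteq \obsOutcome'$, the shrinking set of viable regions and the growing $\selTestSet$ only weaken each term in the marginal gain of $t$. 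Using the sandwich $f(\obsOutcome) \leq h(\obsOutcome)/1 \leq f(\obsOutcome)\cdot \max_i |\region_i|$ together with the telescoping structure of the pair counts, $f$ inherits adaptive submodularity (and where it is needed only up to the rounding induced by $\eta=1$ on integer increments).

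Finally, I would apply Theorem~5.8 of \citet{golovin2011adaptive} (adaptive stochastic submodular cover, unit costs, integer-valued objective with minimal positive increment $\eta=1$ and maximum $Q=\numTest$) to conclude $\cost(\policy) \leq \cost(\policyOpt)\bigl(\ln(Q/\eta) + 1\bigr) = \cost(\policyOpt)\bigl(\log \numTest + 1\bigr)$. The main obstacle is step three: the tests-level coverage $f$ by itself is not adaptively submodular (a small two-region example with a shared test shows that invalidating one region can strictly \emph{increase} the marginal gain of a test in the other), so the proof requires carefully introducing the pair-level surrogate $h$ and showing both that the greedy rule on $f$ is still faithfully captured (via the $+1$-plus-expected-extra decomposition above) and that the logarithmic factor contracts from $\log(\sum_i |\region_i|)$ down to $\log \numTest$ by exploiting that every increment of $f$ is at least one unit, so the potential doubling argument bottoms out at $\numTest$ rather than at the total pair count.
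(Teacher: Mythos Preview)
Your first two steps---defining the coverage objective $f$ and showing that \algSetCover is exactly the greedy rule for $f$ via the $1+(1-\biasTest{t})X_t(\obsOutcome)$ decomposition---are the same as the paper's. The paper then simply \emph{asserts} that the coverage function is strongly adaptive monotone and adaptive submodular (``Since $\hat f(\obsOutcome)$ is a strong adaptive monotone submodular function, we use Theorem~15 in \citet{golovin2011adaptive}\ldots'') and applies the cover bound directly; it neither proves adaptive submodularity nor introduces any surrogate. So your step three is work the paper does not attempt.

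Your instinct that this step is the crux is correct, and your two-region counterexample is valid: with $\region_1=\{a,b\}$ and $\region_2=\{b,c\}$, the expected marginal gain of $a$ under $f$ strictly increases after observing $c=0$, so $f$ is \emph{not} adaptive submodular and the paper's reduction to stochastic set coverage is incomplete at exactly the point you flag. However, your proposed repair does not close the gap either. The pair-level $h$ is an honest stochastic-coverage objective and hence adaptive submodular, but (i)~the greedy rule for $h$ weights each newly covered test by the number of regions containing it, so its argmax is not rule~(\ref{eq:policy_set_cover}); (ii)~a sandwich $f \le h \le (\max_i|\region_i|)\,f$ on \emph{values} does not transfer adaptive submodularity, which is a condition on conditional \emph{marginals}; and (iii)~the claimed contraction from $\log\sum_i|\region_i|$ to $\log\numTest$ via ``integer increments bottoming out at $\numTest$'' is not a step the Golovin--Krause analysis supports, since its $\log(Q/\eta)$ factor is tied to the function actually being covered rather than to $f$. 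In short, you have correctly located a gap that the paper's own proof glosses over, but the surrogate-and-sandwich route, as written, does not repair it.
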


\begin{proof}
We will prove this by drawing an equivalence of the problem to a special case of \emph{stochastic set coverage} with non-uniform costs, showing \algSetCover greedily solves this problem, and using a guarantee for a greedy policy as presented in \citet{golovin2011adaptive}. 

The stochastic set coverage problem is as follows - there is a ground set of elements $U$, and items $E$ such that item $e$ is associated with a distribution over subsets of $U$. When an item is selected, a set is sampled from its distribution. The problem is to adaptively select items until all elements of $U$ are covered by sampled sets, while minimizing the expected number of items selected. Here we consider the case where a cost is associated with each item.

We now show that the problem of selecting tests to invalidate other tests is equivalent to stochastic set coverage. The ground set is the set of all tests $\testSet$. The item set has a one to one correspondence with the test set $\testSet$. 
Let $\hat{f}(\obsOutcome)$ be the utility function measuring coverage of tests given selected tests and outcomes $\obsOutcome$. This is defined as 
\begin{equation}
	\hat{f}(\obsOutcome) = \abs{ \selTestSet \cup \set{\bigcup\limits_{i=1}^\numRegion \setst{\region_i}{P(\region_i | \obsOutcome) = 0}}} 
\end{equation}
The expected gain in utility when selecting a test $\test$ is as follows - with probability $\bias_\test$ if a $\test$ outcome is true, only $\test$ is covered. With probability $1 - \bias_\test$, if a test outcome is false, tests that belong to regions being invalidated are covered. This can be expressed formally as follows. Given $\test \notin \set{\bigcup\limits_{i=1}^\numRegion \setst{\region_i}{P(\region_i | \obsOutcome) = 0}}$, the expected gain $\gain{\hat{f}}{\test}{\obsOutcome}$ is
\begin{equation}
\begin{aligned}
&\gain{\hat{f}}{\test}{\obsOutcome} = \expect{\outcome_\test}{ \hat{f}(\obsOutcomeAdd{\test}) - \hat{f}(\obsOutcome) } \\
&= \expect{\outcome_\test}{ \abs{ \selTestSet \cup \set{\test} \cup \set{\bigcup\limits_{i=1}^\numRegion \setst{\region_i}{P(\region_i | \obsOutcomeAdd{\test}) = 0}}} -
\abs{ \selTestSet \cup \set{\bigcup\limits_{i=1}^\numRegion \setst{\region_i}{P(\region_i | \obsOutcome) = 0}}} } \\
&= P(\outcomeVarTest{\test} = 1) \times 1 + \\
& P(\outcomeVarTest{\test} = 0) \times \left(1 + 
\abs{  \set{\bigcup\limits_{i=1}^\numRegion \setst{\region_i}{P(\region_i | \obsOutcome) > 0} - 
  \bigcup\limits_{j=1}^\numRegion \setst{\region_j}{P(\region_j | \obsOutcome, \outcomeVarTest{\test} = 0) > 0}   }
  \setminus \set{\selTestSet \cup \set{\test}} } \right)\\
&= 1 + (1 - \bias_{\test})  
\abs{  \set{\bigcup\limits_{i=1}^\numRegion \setst{\region_i}{P(\region_i | \obsOutcome) > 0} - 
  \bigcup\limits_{j=1}^\numRegion \setst{\region_j}{P(\region_j | \obsOutcome, \outcomeVarTest{\test} = 0) > 0}   }
  \setminus \set{\selTestSet \cup \set{\test}} } \\
\end{aligned}
\end{equation}
\algSetCover is an adaptive greedy policy with respect to $\gain{\hat{f}}{\test}{\obsOutcome}$ as shown
\begin{equation}
\begin{aligned}
\optTest &\in \argmaxprob{\test \in \testSet}\; \gain{\hat{f}}{\test}{\obsOutcome} \\
&\in \argmaxprob{\test \in \testSet}\; (1 - \bias_{\test})  
\abs{  \set{\bigcup\limits_{i=1}^\numRegion \setst{\region_i}{P(\region_i | \obsOutcome) > 0} - 
  \bigcup\limits_{j=1}^\numRegion \setst{\region_j}{P(\region_j | \obsOutcome, \outcomeVarTest{\test}=0) > 0}   }
  \setminus \set{\selTestSet \cup \set{\test}} } \\
\end{aligned}
\end{equation}
Note that $\hat{f}(\groundtruth) = \numTest$ is the maximum value the utility can attain. Let $\policyOpt$ be the optimal policy.
Since $\hat{f}(\obsOutcome)$ is a strong adaptive monotone submodular function, we use Theorem 15 in \citet{golovin2011adaptive} to state the following guarantee
\begin{equation*}
	\cost(\policy) \leq \cost(\policyOpt) ( \log (\numTest) + 1) 
\end{equation*}
\end{proof}

\section{Datasets with large disparity in region sizes}
\label{app:max_prob_reg}

In this section, we investigate scenarios where there is a large disparity in region sizes. We will show that in such scenarios, \algMaxProbReg has an arbitrarily poor performance. We will also show that unconstrained \algName vastly outperforms all other algorithms on such problems. 

We first examine the scenario as shown in Fig.~\ref{fig:max_prob_reg_counter_example}(a). There are two regions $\region_1$ and $\region_2$. $\region_1$ has only $1$ test $a$ with bias $\theta_a$. $\region_2$ has $T$ tests $\{ b_1, \dots, b_T \}$, each with bias $\theta_b$. The evaluation cost of each test is $1$. The following condition is enforced
\begin{equation}
	\theta_b^T = \theta_a + \varepsilon	
\end{equation} 

Under such conditions, the \algMaxProbReg algorithm would check tests in $\region_2$ before proceeding to $\region_1$. We compare the performance of this policy to the converse - one that evaluates $\region_1$ and then proceeds to $\region_2$.

Lets analyze the expected cost of \algMaxProbReg. If $\region_2$ is valid, it incurs a cost of $T$, else it incurs a cost of $T+1$. This equates to
\begin{equation}
\begin{aligned}
& \theta_b^T(T) + (1 - \theta_b^T)(T+1) \\
&= (T + 1) - \theta_b^T
\end{aligned}
\end{equation}

We now analyze the converse which selects test $a$. If $\region_1$ is valid, it incurs a cost of $1$, else it incurs $T+1$. This equates to 
\begin{equation}
\begin{aligned}
& \theta_a(1) + (1 - \theta_a)(T+1) \\
&= (T + 1) - \theta_aT
\end{aligned}
\end{equation}

\algMaxProbReg incurs a larger expected cost that equates to
\begin{equation}
\begin{aligned}
& (T + 1) - \theta_b^T - ( (T + 1) - \theta_aT ) \\
&= \theta_aT - \theta_b^T \\
&= \theta_aT - \theta_a - \varepsilon \\
& = \theta_a(T-1) - \varepsilon \\
\end{aligned}
\end{equation}
$T$ can be made arbitrarily large to push this quantity higher. 

We will now show that unconstrained \algName will evaluate $\region_1$ in this case. We apply the \algName selection rule in (\ref{eq:greedy_fdrd}) to this problem. The utility of selecting test $a$ is
\begin{equation}
\begin{aligned}
\gain{\drd}{a}{\obsOutcome} &=
(1 - \theta_a)(1 - \theta_b^T)  - \left[ \theta_a (1 - 1)(1 - \theta_b^T) \theta_a^2 + (1-\theta_a) (1 - 0) (1 - \theta_b^T) (1-\theta_a)^2  \right]
\\
&= (1 - \theta_a)(1 - \theta_b^T) - (1-\theta_a)^3(1 - \theta_b^T) \\
\end{aligned}
\end{equation}

The utility of selecting test $b_i$ is
\begin{equation}
\begin{aligned}
\gain{\drd}{b_i}{\obsOutcome} &=
(1 - \theta_a)(1 - \theta_b^T)  - \left[ \theta_b (1 - \theta_a)(1 - \theta_b^{T-1}) \theta_b^2 + (1-\theta_b) (1 - \theta_a) (1 - 0) (1-\theta_b)^2  \right]
\\
&= (1 - \theta_a)(1 - \theta_b^T) - \theta_b^3 (1 - \theta_a) (1 - \theta_b^{T-1}) -  (1-\theta_b)^3 (1 - \theta_a) \\
&= (1 - \theta_a)(1 - \theta_b^T) - (1 - \theta_a) [\theta_b^3  (1 - \theta_b^{T-1}) -  (1-\theta_b)^3 ] \\
\end{aligned}
\end{equation}

We assume that $T$ is sufficiently large such that $\theta_b \approx 1$. Then the difference is
\begin{equation}
\begin{aligned}
\gain{\drd}{a}{\obsOutcome} -  \gain{\drd}{b_i}{\obsOutcome}&=
(1 - \theta_a) [\theta_b^3  (1 - \theta_b^{T-1}) -  (1-\theta_b)^3 ] - (1-\theta_a)^3(1 - \theta_b^T) \\
&\approx (1 - \theta_a) [(1 - \theta_b^{T-1}) ] - (1-\theta_a)^3(1 - \theta_b^T) \\
&\approx (1 - \theta_a) (1 - \theta_b^T) [1 - (1-\theta_a)^2] \\
&\geq 0\\
\end{aligned}
\end{equation}

\begin{figure}[t]
    \centering
    \includegraphics[page=1,width=\textwidth]{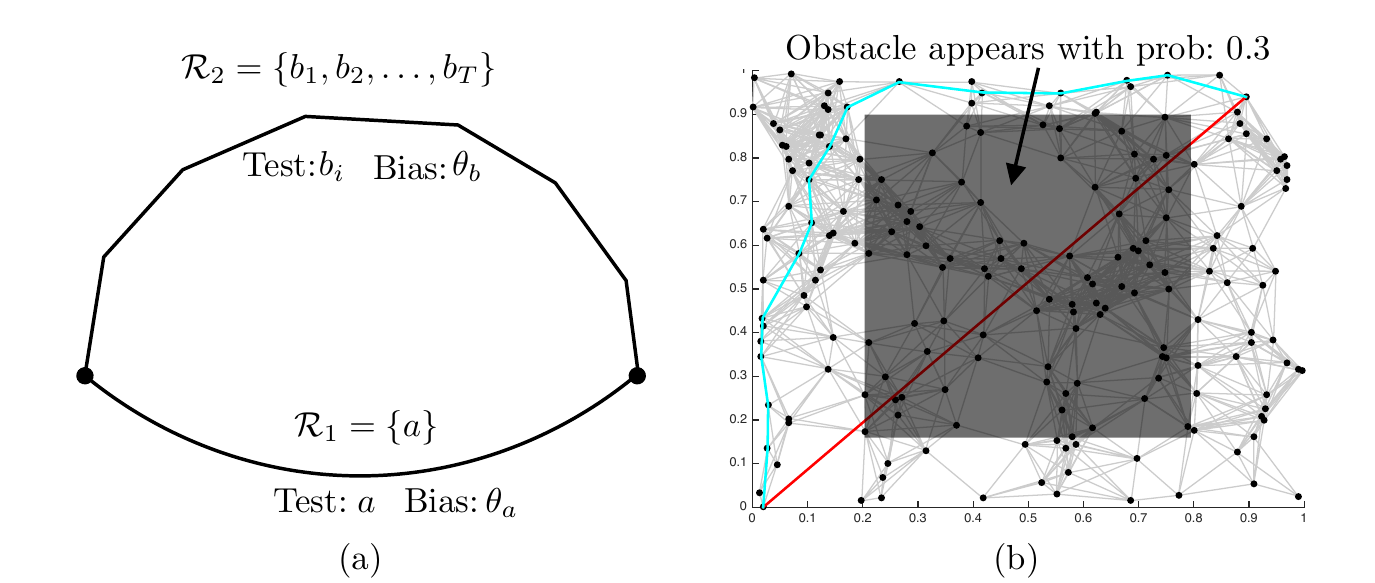}
    \caption{%
    \label{fig:max_prob_reg_counter_example}
    (a) The scenario where unconstrained \algName outperforms \algMaxProbReg significantly. (b) The 2D motion planning scenario where unconstrained \algName outperforms others. Here the graph contains a straight line joining start and goal. With a low-probability, a block is placed between start and goal. This forces the path with many edges circumnavigating the block to have maximum probability. The straight line path joining start and goal has lower probability. Hence there is a similarity to the synthetic example in (a)} 
\end{figure}%

Hence unconstrained \algName would significantly outperform \algMaxProbReg in these problems. We now empirically show this result on a synthetic dataset as well as a carefully constructed 2D motion planning dataset. Table~\ref{tab:max_prob_res} shows a summary of these results. 

\begin{table}[!htpb]
\small
\centering
\caption{Normalized cost ($95\%$ C.I. lower / upper bound) with respect to unconstrained \algName}
\begin{tabulary}{\textwidth}{LCCCCC}\toprule
       & {\bf \algMVOI}       & {\bf \algRandom}       & {\bf \algMaxTally}       & {\bf \algSetCover}    & {\bf \algName} \\ 
       &                      & {Unconstrained}    & {Unconstrained}   & {Unconstrained}   & {Unconstrained}               \\
       &                      & {MaxProbReg}        & {MaxProbReg}      & {MaxProbReg}      & {MaxProbReg}                 \\ \midrule
Synthetic   &                    & $(6.50, 8.00)$  & $(5.50, 6.50)$  & $(3.00, 3.50)$  & $\red(0.00, 0.00)$    \\
($T:10$)    & $(3.00, 3.50)$     & $(3.00, 4.50)$  & $(5.00, 7.50)$  & $(3.00, 3.50)$  & $(3.00, 3.50)$  \\ 
2D Plan     &                    & $(9.50, 11.30)$ & $(2.80, 6.10)$  & $(6.60, 10.50)$  & $\red(0.00, 0.00)$   \\
($\numRegion:2$)  
            & $(6.60, 10.50)$    & $(6.90, 10.80)$ & $(6.80, 8.30)$  & $(6.60, 10.50)$  & $(7.30, 11.20)$  \\ 
2D Plan
            &                    & $(2.44, 3.17)$  & $(2.83, 3.28)$  & $(2.50, 2.56)$  & $\red(0.00, 0.00)$   \\
($\numRegion:19$)
            & $(0.89, 1.17)$	 & $(1.06, 1.28)$  & $(0.89, 0.94)$  & $(0.78, 0.94)$  & $(0.78, 0.89)$   \\ \bottomrule
\end{tabulary}
\label{tab:max_prob_res}
\end{table}

The first dataset is Synthetic which is a instantiation of the scenario shown in Fig.~\ref{fig:max_prob_reg_counter_example}(a). We set $\theta_a = 0.9$, $\theta_b = 0.9906$, $\varepsilon = 0.01$, $T = 10$. We see \algMaxProbReg does incurs $3$ times more cost than the unconstrained variant.

The second dataset is a motion planning dataset as shown in Fig.~\ref{fig:max_prob_reg_counter_example}(b). The dataset is created to closely resemble the synthetic dataset attributes. A RGG graph is created, and the straight line joining start and goal is added to the set of edges. A distribution of obstacles is created by placing a block with probability $0.3$. The first dataset has $2$ regions - the straight line containing one edge, and a path that goes around the block containing many more edges. Unconstrained \algName evaluates the straight line first. \algMaxProbReg evaluates the longer path first. We see \algMaxProbReg does incurs $7$ times more cost than the unconstrained variant.

The third dataset is same as the second, except the number of regions is increased to $19$. Now we see that the contrast reduces. \algMaxProbReg incurs $0.78$ fraction more cost than unconstrained version.

\end{appendices}

\end{document}